\newtheorem*{theorem*}{Theorem}
\theoremstyle{definition}
\newtheorem*{runex*}{Running Example}
\newtheorem*{ex*}{Example}
\newtheorem*{remark*}{Remark}
\newcommand{\dkl}{D_{\mathrm{KL}}}
\icmltitlerunning{Provably Efficient Model-based Policy Adaptation}
\begin{document}

\twocolumn[
\icmltitle{Provably Efficient Model-based Policy Adaptation}

\begin{icmlauthorlist}
\icmlauthor{Yuda Song}{ucsd}
\icmlauthor{Aditi Mavalankar}{ucsd}
\icmlauthor{Wen Sun}{microsoft}
\icmlauthor{Sicun Gao}{ucsd}
\end{icmlauthorlist}

\icmlaffiliation{ucsd}{Department of Computer Science and Engineering, University of California, San Diego, La Jolla, USA}
\icmlaffiliation{microsoft}{Microsoft Research NYC, New York , USA}

\icmlcorrespondingauthor{Yuda Song}{yus167@ucsd.edu}

\icmlkeywords{Reinforcement Learning, Machine Learning, ICML}

\vskip 0.3in
]
\printAffiliationsAndNotice{}

\begin{abstract}
The high sample complexity of reinforcement learning challenges its use in practice. A promising approach is to quickly adapt pre-trained policies to new environments. Existing methods for this policy adaptation problem typically rely on domain randomization and meta-learning, by sampling from some distribution of target environments during pre-training, and thus face difficulty on out-of-distribution target environments. We propose new model-based mechanisms that are able to make online adaptation in unseen target environments, by combining ideas from no-regret online learning and adaptive control. We prove that the approach learns policies in the target environment that can recover trajectories from the source environment, and establish the rate of convergence in general settings. We demonstrate the benefits of our approach for policy adaptation in a diverse set of continuous control tasks, achieving the performance of state-of-the-art methods with much lower sample complexity. Our project website, including code, can be found at \url{https://yudasong.github.io/PADA}.
\end{abstract}

\section{Introduction}
\label{sec:intro}
Deep Reinforcement Learning (RL) methods typically require a very large number of interactions with environments, making them difficult to be used on practical systems~\cite{tan2018sim}. A promising direction is to adapt policies trained in one environment to similar but unseen environments, such as from simulation to real robots. Existing approaches for policy adaptation mostly focus on pre-training the policies to be robust to predefined distributions of disturbances in the environment, by increasing the sample diversity during training~\cite{peng2018sim,tobin2017domain,mordatch2015ensemble}, or meta-learn policies or models that can be quickly adapted to in-distribution environments~\cite{finn2017model,nagabandi2019learning,nagabandi2018deep,yu2018one}. 
A key assumption for these approaches is that the distribution of the target environments is known, and that it can be efficiently sampled during training. 
On out-of-distribution target environments, these methods typically do not deliver good performance, reflecting common challenges in generalization~\cite{Na2020Learning}. 
If we observe how humans and animals adapt to environment changes, clearly there is an online adaptation process in addition to memorization~\cite{staddon2016adaptive}. 
We can quickly learn to walk with a slightly injured leg even if we have not experienced the situation. We draw experiences from normal walking and adapt our actions online, based on how their effects differ from what we are familiar with in normal settings. 
Indeed, this intuition has recently led to practical approaches for policy adaptation. The work in \cite{christiano2016transfer} uses a pre-trained policy and model of the training environment, and learns an inverse dynamics model from scratch in the new environment by imitating the behaviors of the pre-trained policy. However, it does not involve mechanisms for actively reducing the divergence between the state trajectories of the two environments, which leads to inefficiency and distribution drifting, and does not fully capture the intuition above. The work in \cite{zhu2018reinforcement} uses Generative Adversarial Imitation Learning (GAIL)~\cite{ho2016generative} to imitate the source trajectories in the new environment, by adding the GAIL discriminator to the reward to reduce divergence, but relies on generic policy optimization methods with high sample complexity. In general, these recent approaches show the feasibility of policy adaptation, but are not designed for optimizing sample efficiency. There has been no theoretical analysis of whether policy adaptation methods can converge in general, or their benefits in terms of sample complexity. 

In this paper, we propose a new model-based approach for the policy adaptation problem that focuses on efficiency with theoretical justification. In our approach, the agent attempts to predict the effects of its actions based on a model of the training environment, and then adapts the actions to minimize the divergence between the state trajectories in the new (target) environment and in the training (source) environment. This is achieved by iterating between two steps: a modeling step learns the divergence between the source environment and the target environment, and a planning step that uses the divergence model to plan actions to reduce the divergence over time. Under the assumption that the target environment is close to the source environment, the divergence modeling and policy adaption can both be done locally and efficiently. We give the first theoretical analysis of policy adaptation by establishing the rate of convergence of our approaches under general settings. Our methods combine techniques from model-based RL~\cite{DBLP:journals/corr/abs-1907-02057} and no-regret online learning~\cite{ross2011reduction}. We demonstrate that the approach is empirically efficient in comparison to the state-of-the-art approaches~\cite{christiano2016transfer,zhu2018reinforcement}. 
The idea of recovering state trajectories from the source environment in the target environment suggests a strong connection between policy adaptation and imitation learning, such as Learning from Observation (LfO)~\cite{torabi2018behavioral, torabi2019imitation,sun2019provably, yang2019imitation}. A key difference is that in policy adaptation, the connection between the source and target environments and their difference provide both new challenges and opportunities for more efficient learning. By actively modeling the divergence between the source and target environments, the agent can achieve good performance in new environments by only making local changes to the source policies and models. On the other hand, because of the difference in the dynamics and the action spaces, it is not enough to merely imitate the experts~\cite{bain1999framework, ross2011reduction,sun2017deeply}. 
Traditionally, adaptive control theory~\cite{aastrom1983theory} studies how to adapt to disturbances by stabilizing the error dynamics. Existing work in adaptive control assumes closed-form dynamics and does not apply to the deep RL setting~\cite{nagabandi2019learning}. In comparison to domain randomization and meta-learning approaches, our proposed approach does not require sampling of source environments during pre-training, and makes it possible to adapt in out-of-distribution environments. Note that the two approaches are complementary, and we demonstrate in experiments that our methods can be used in conjunction with domain randomization and meta-learning to achieve the best results.

\begin{figure*}[t!]

    \begin{tabular}{p{0.02\linewidth}p{0.17\linewidth}p{0.17\linewidth}p{0.17\linewidth}p{0.17\linewidth}p{0.17\linewidth}}
    (a)&
    \includegraphics[width=0.8\linewidth]{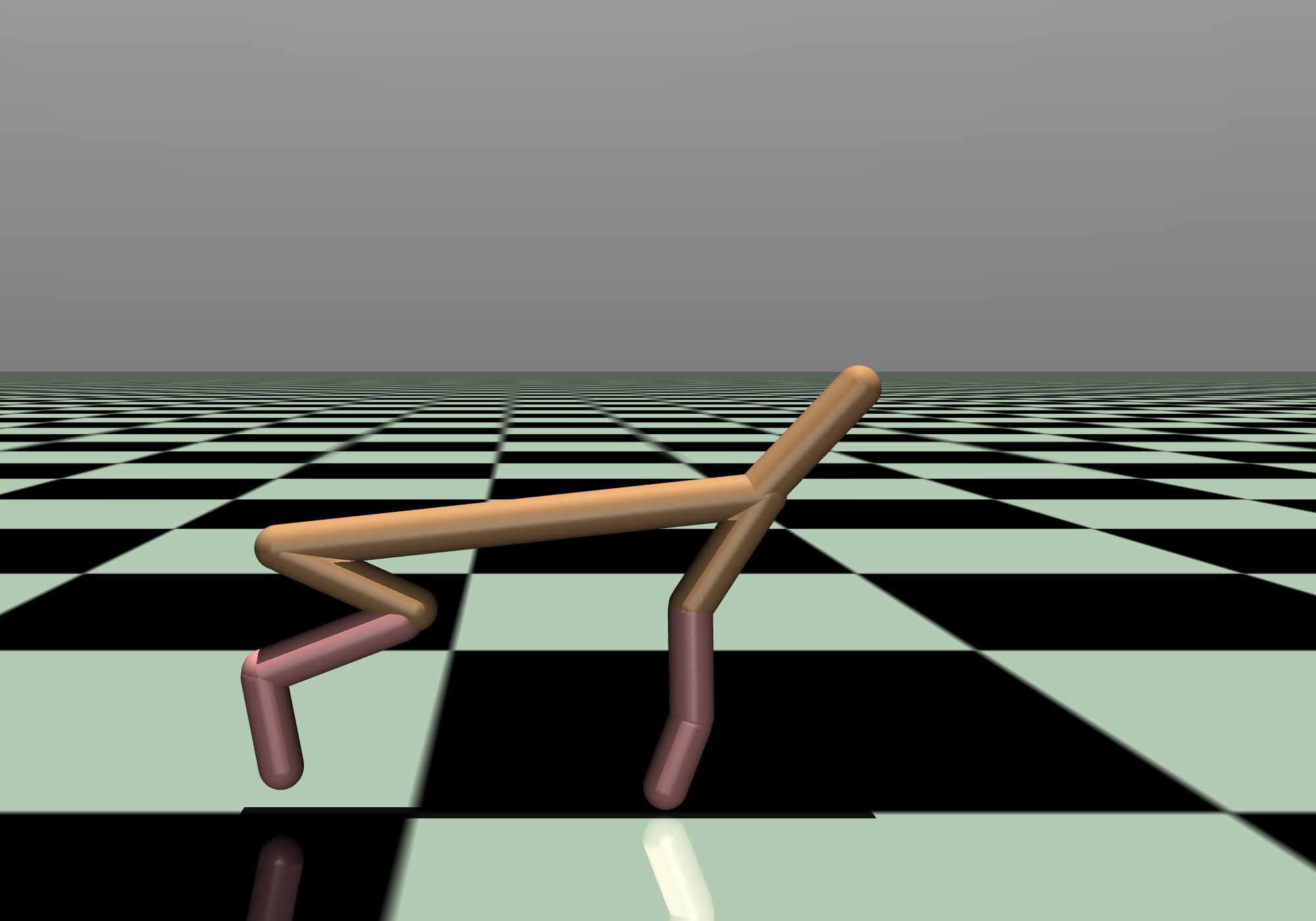}&
    \includegraphics[width=0.8\linewidth]{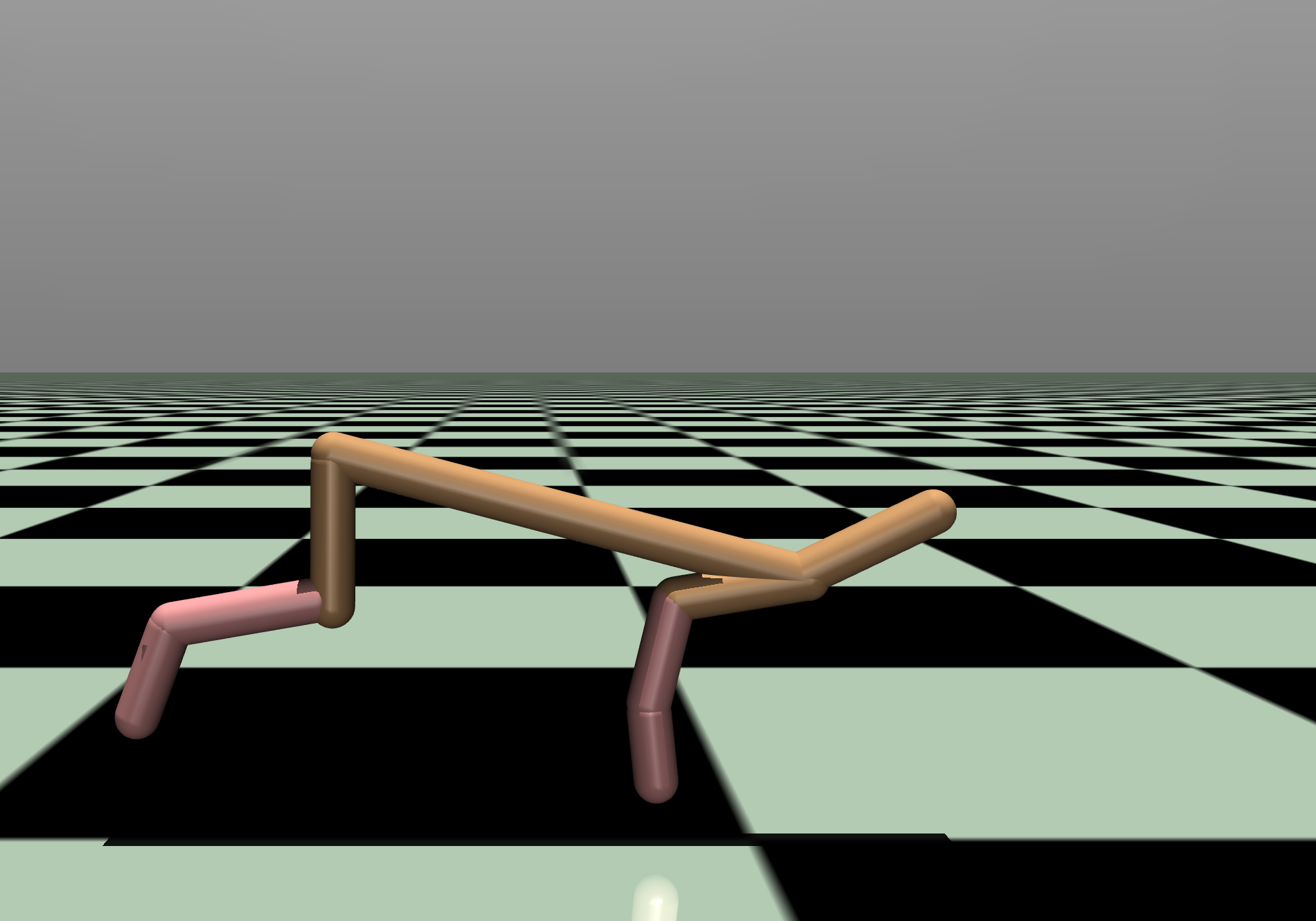}&
    \includegraphics[width=0.8\linewidth]{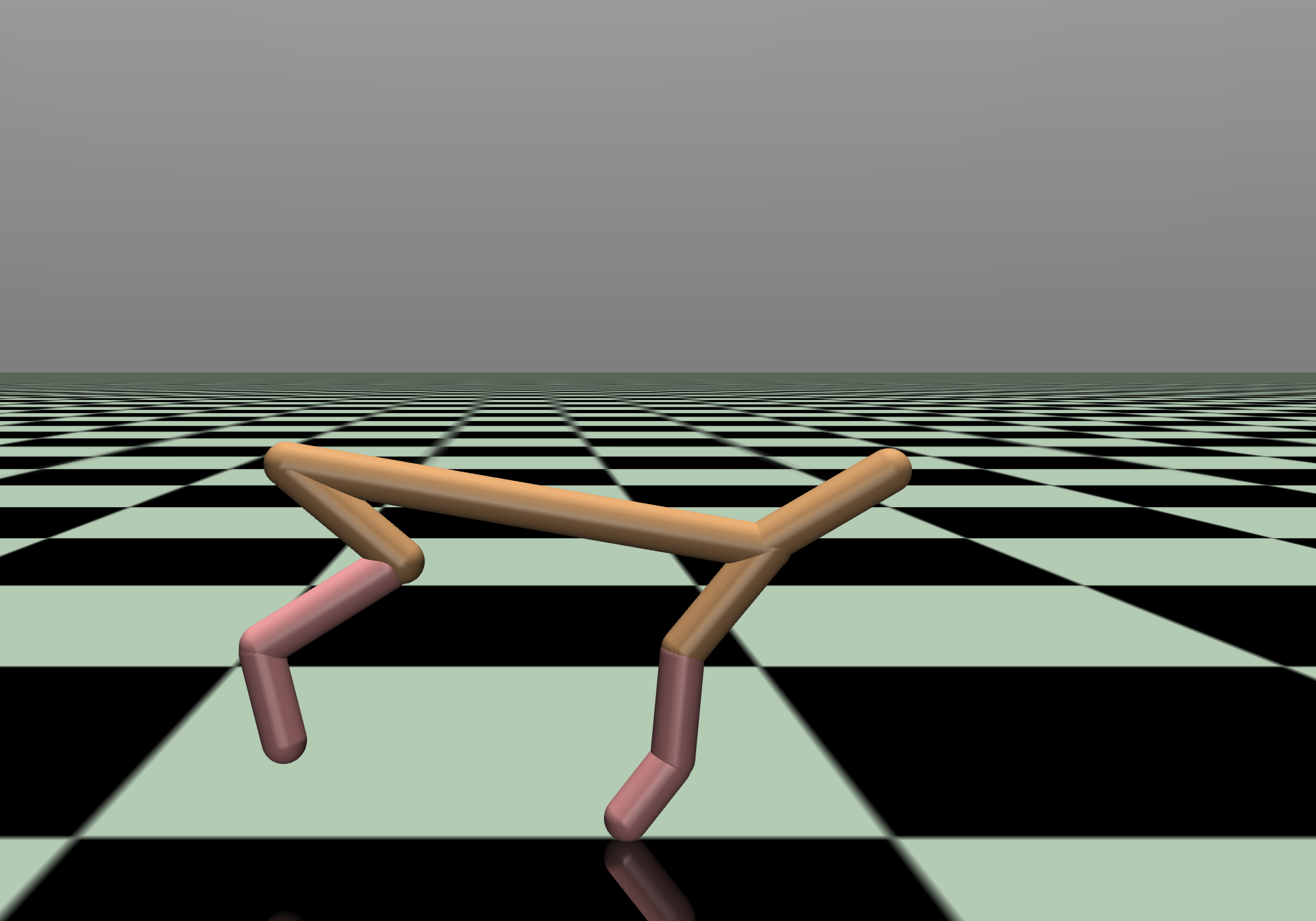}&
    \includegraphics[width=0.8\linewidth]{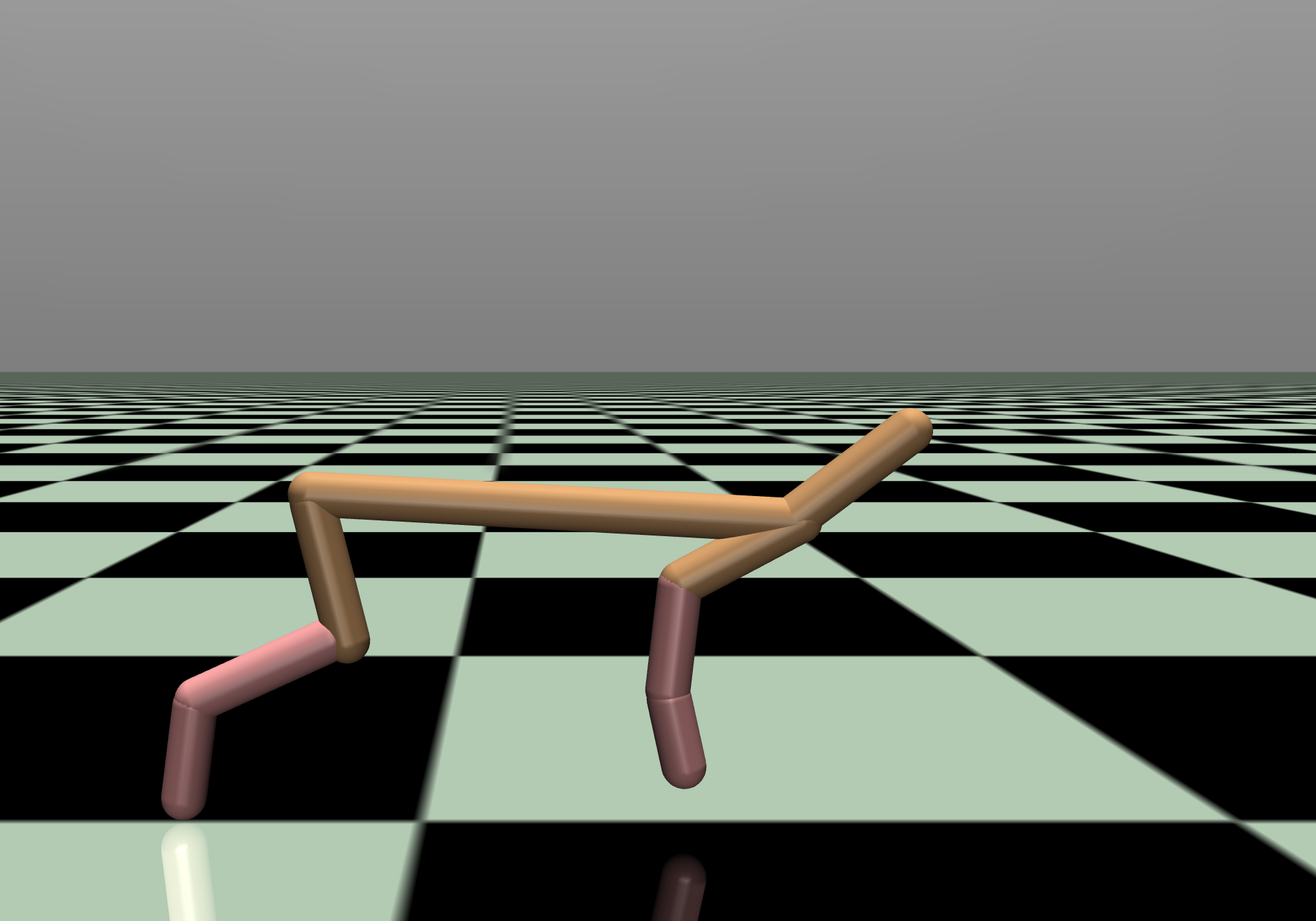}&
    \includegraphics[width=0.8\linewidth]{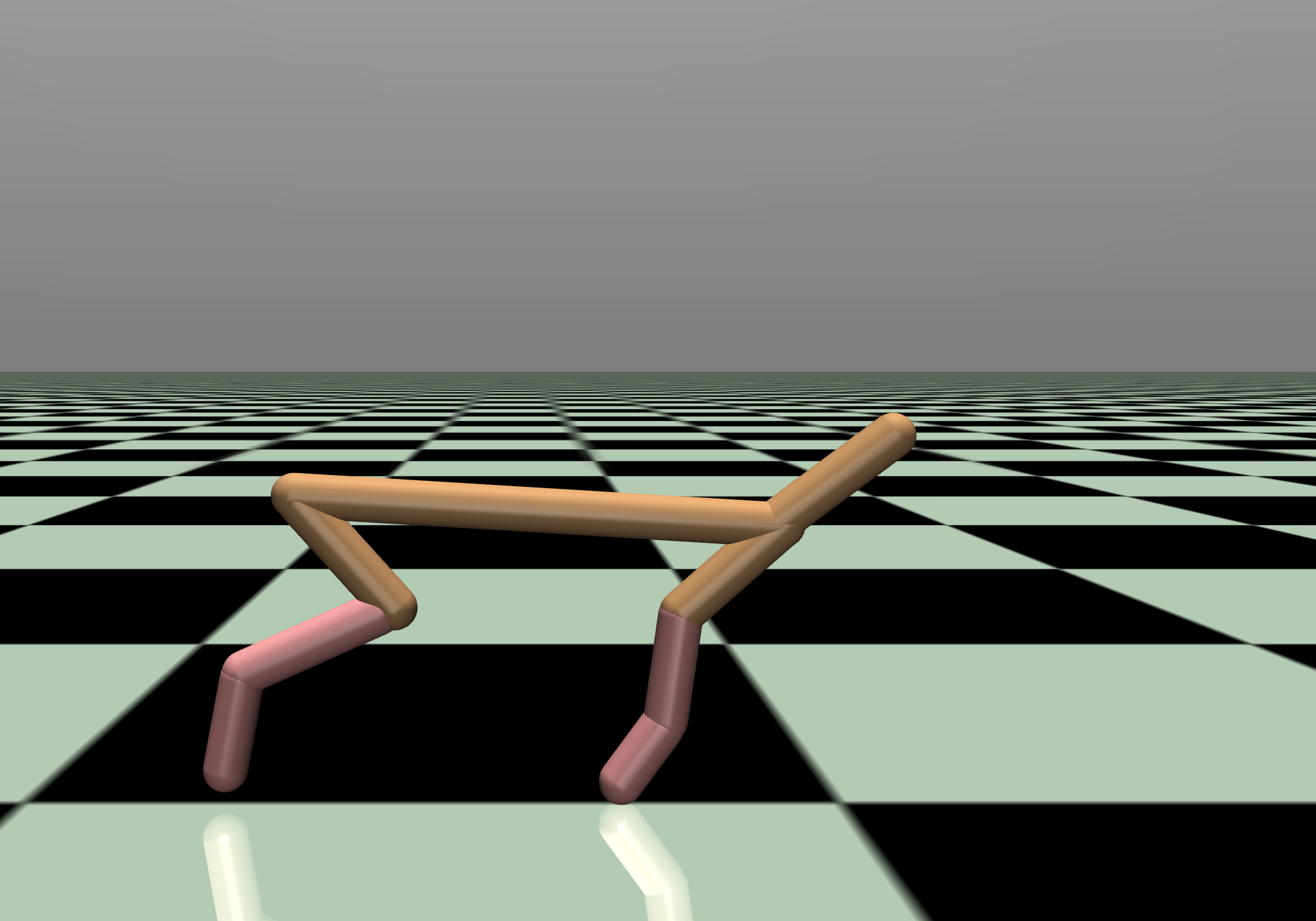}\\
    & & & & \\
    
    (b)&
    \includegraphics[width=0.8\linewidth]{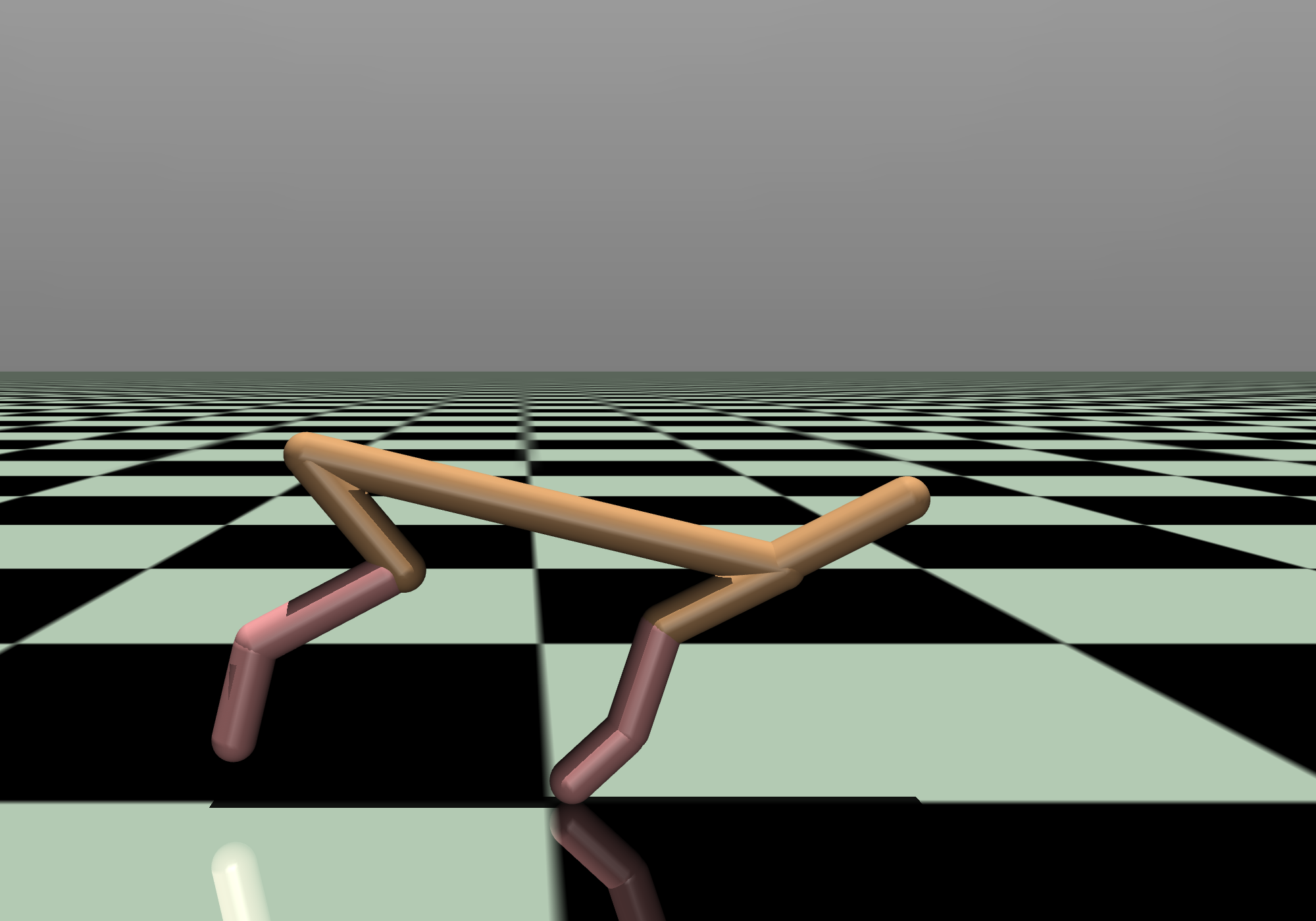}&
    \includegraphics[width=0.8\linewidth]{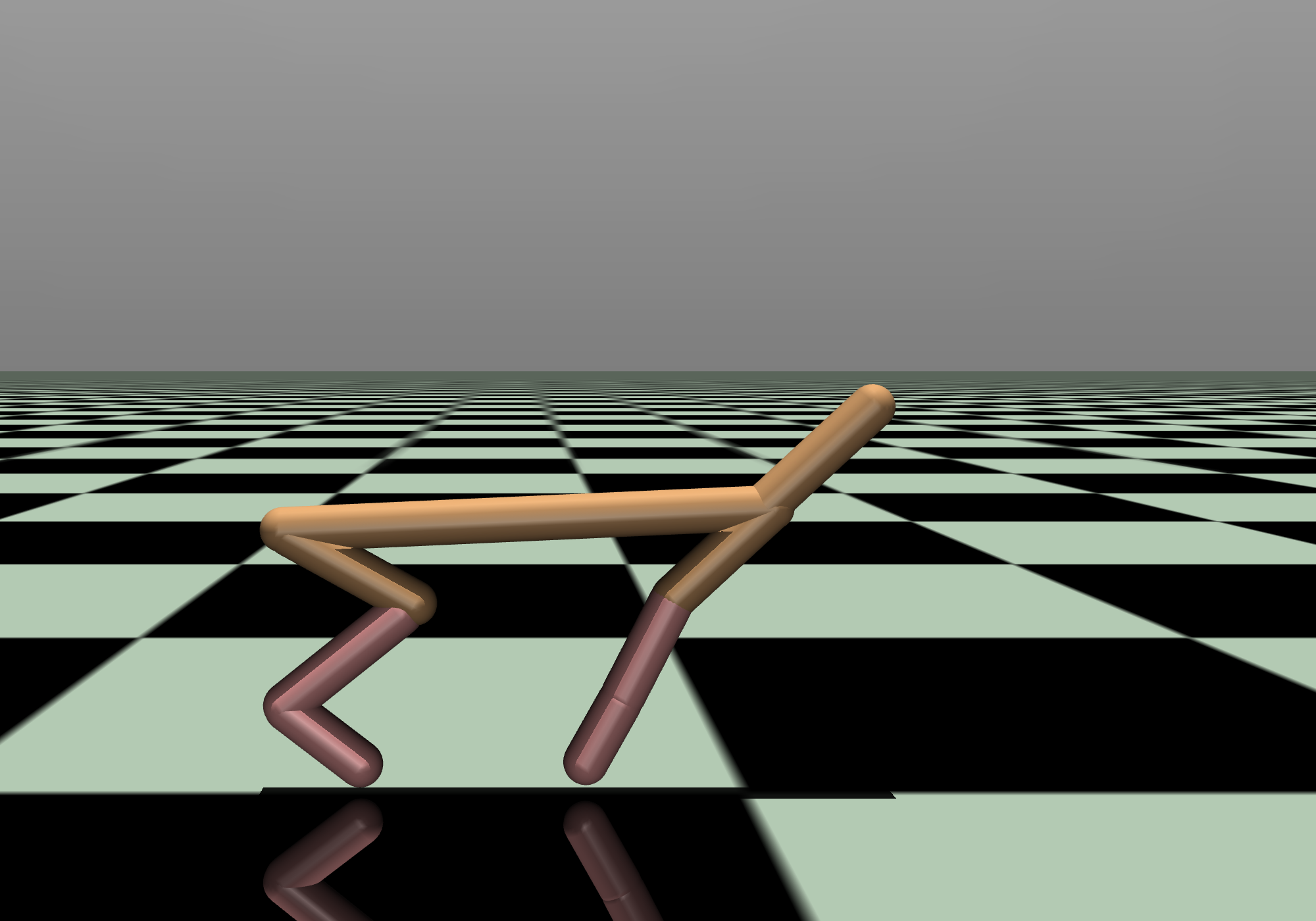}&
    \includegraphics[width=0.8\linewidth]{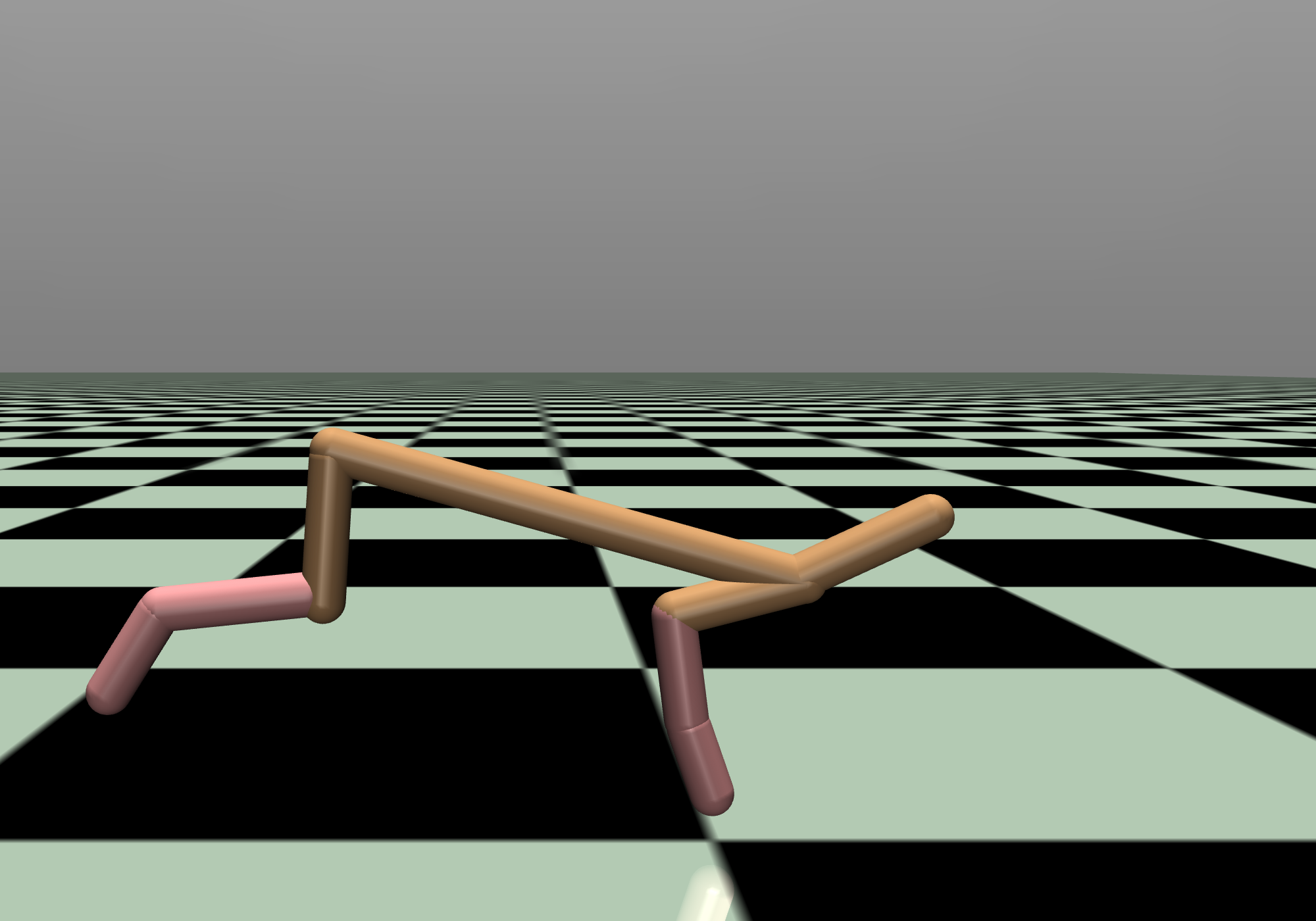}&
    \includegraphics[width=0.8\linewidth]{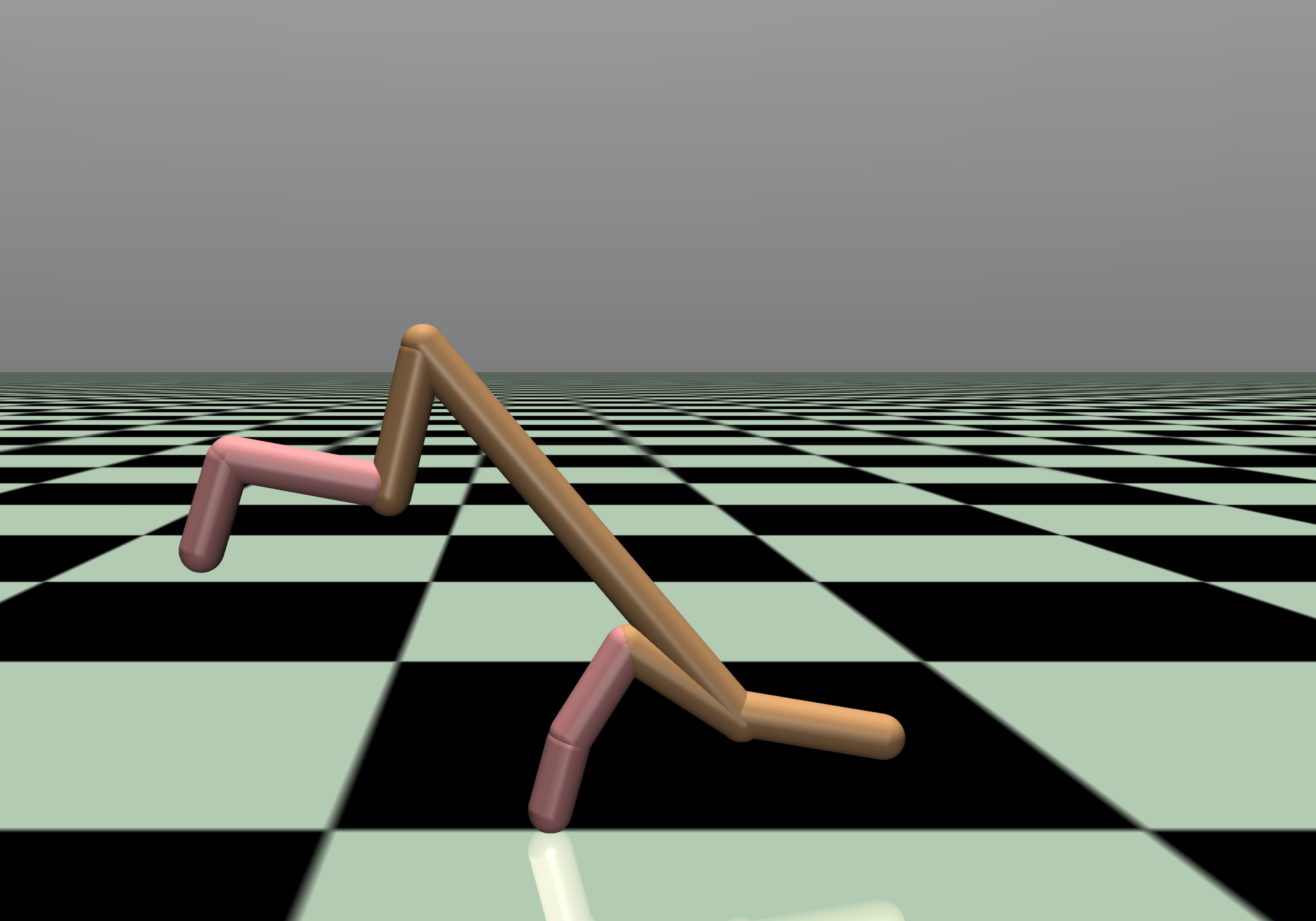}&
    \includegraphics[width=0.8\linewidth]{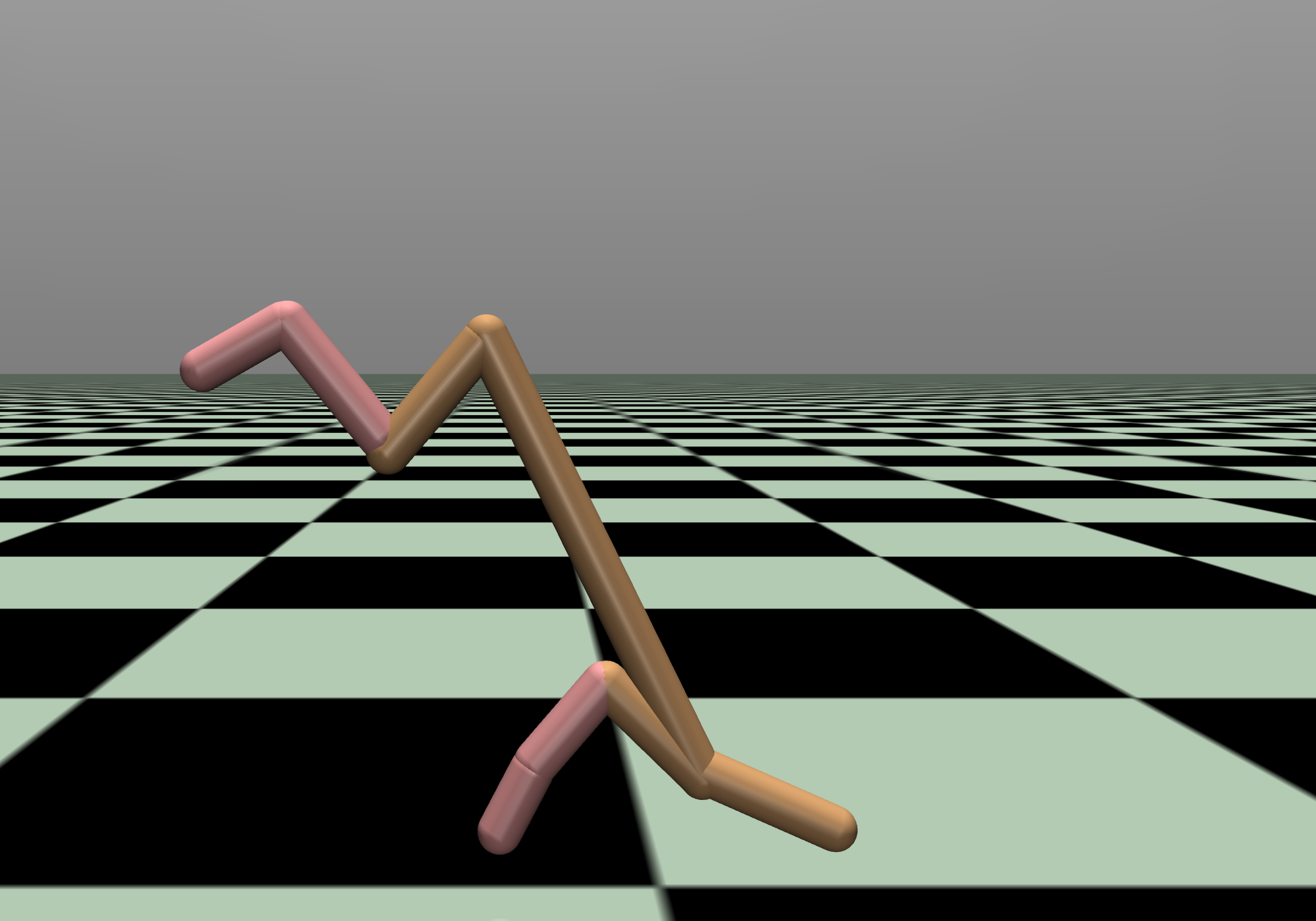}\\
    & & & & \\
     
    (c)&
    \includegraphics[width=0.8\linewidth]{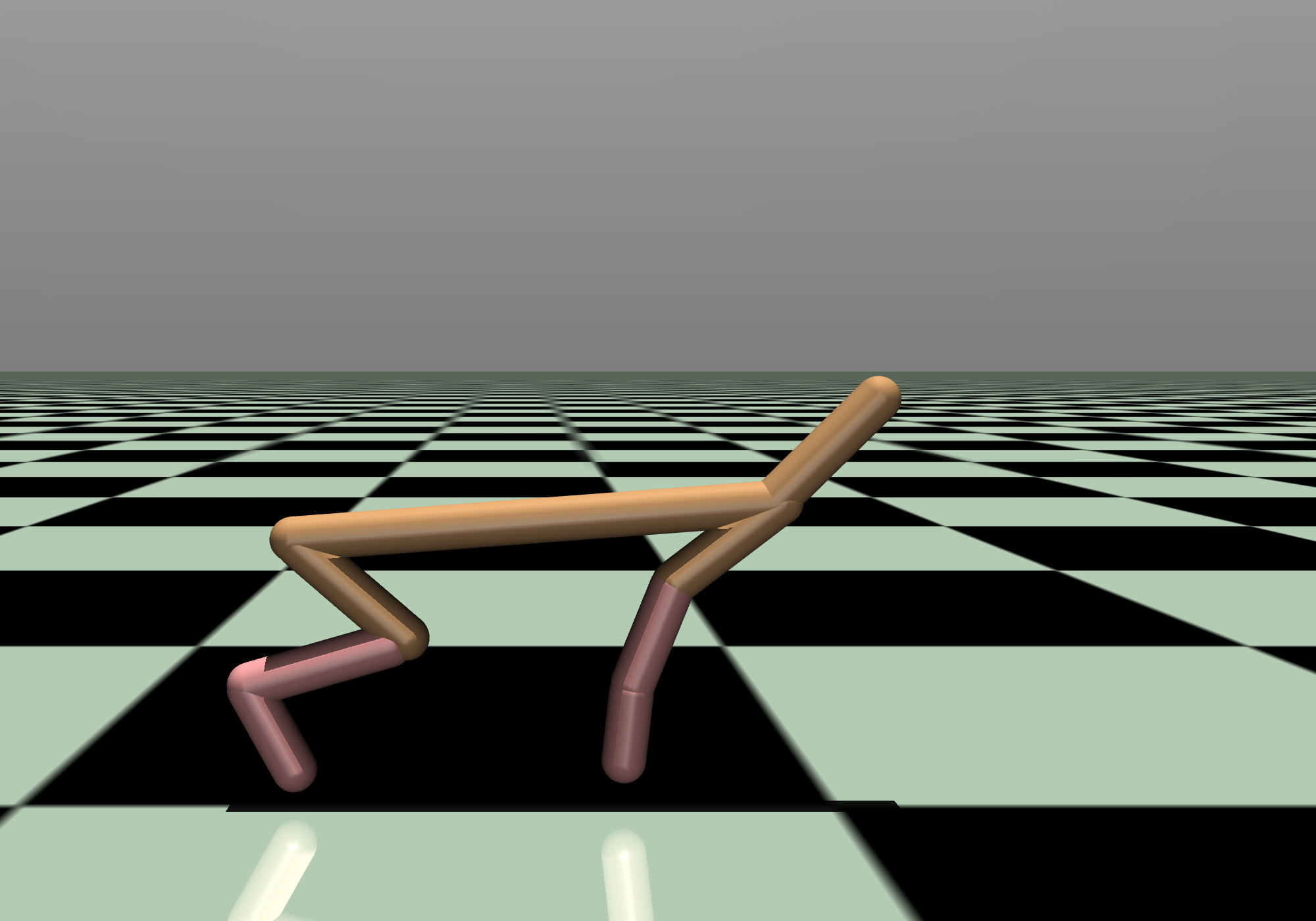}&
    \includegraphics[width=0.8\linewidth]{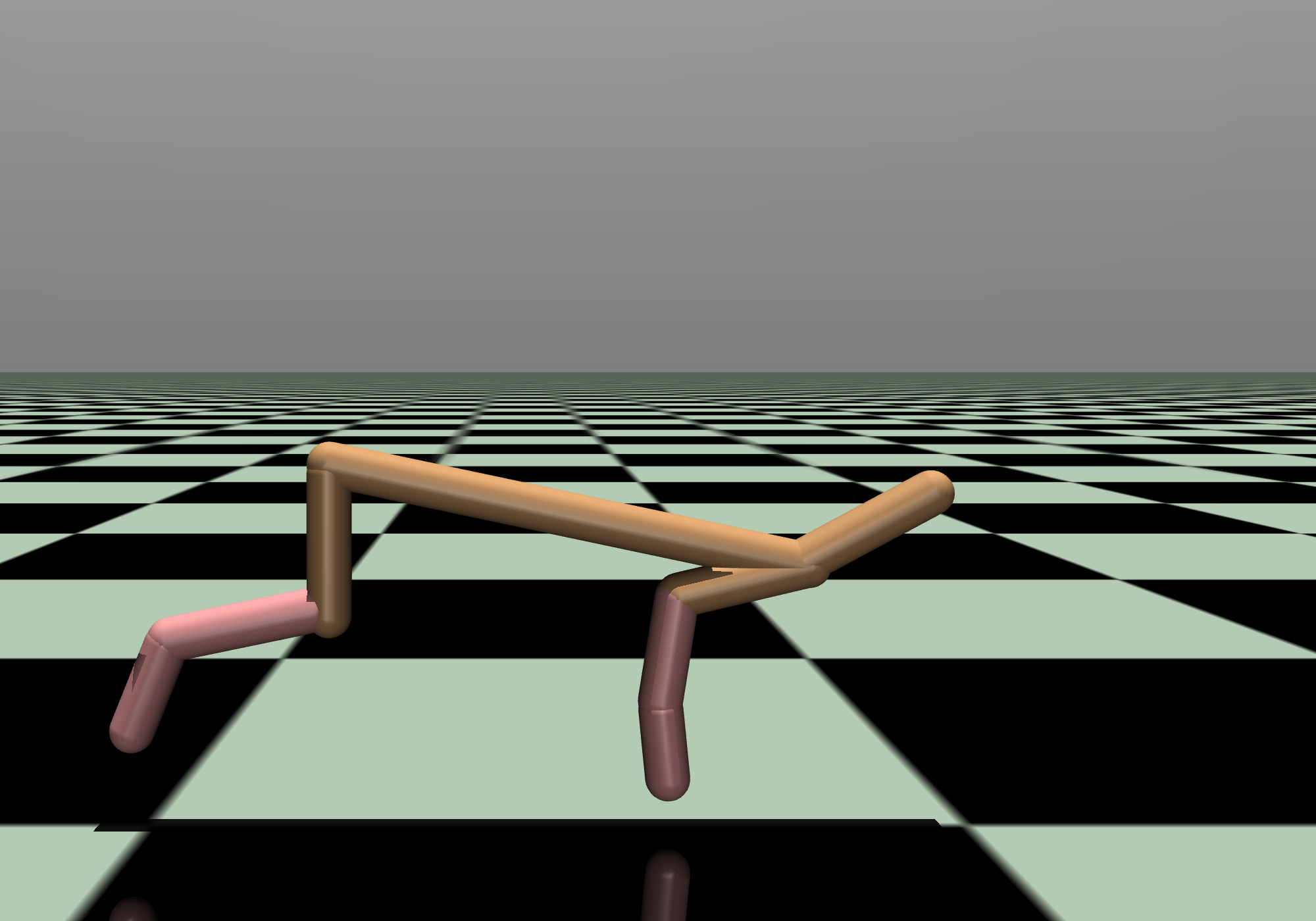}&
    \includegraphics[width=0.8\linewidth]{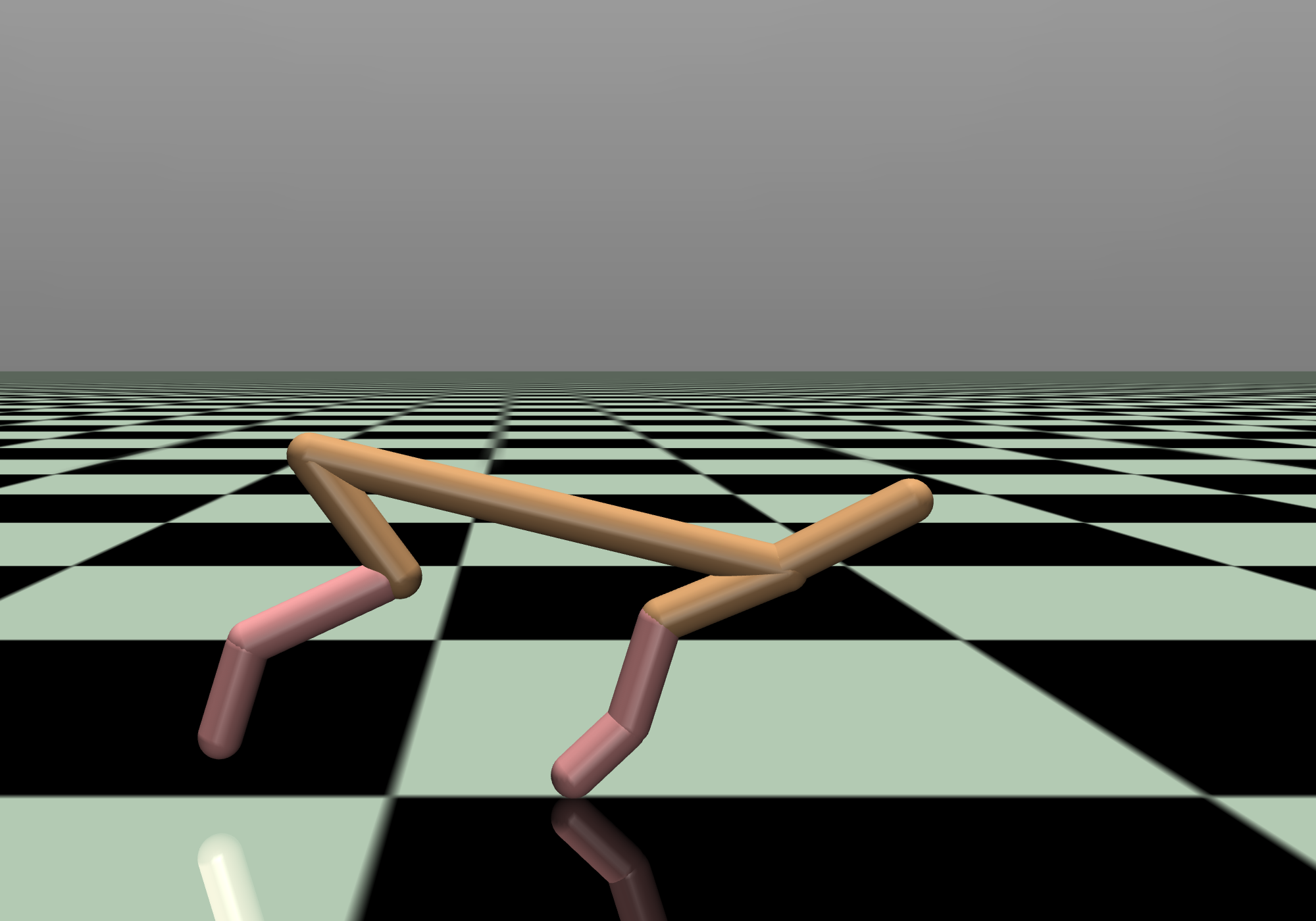}&
    \includegraphics[width=0.8\linewidth]{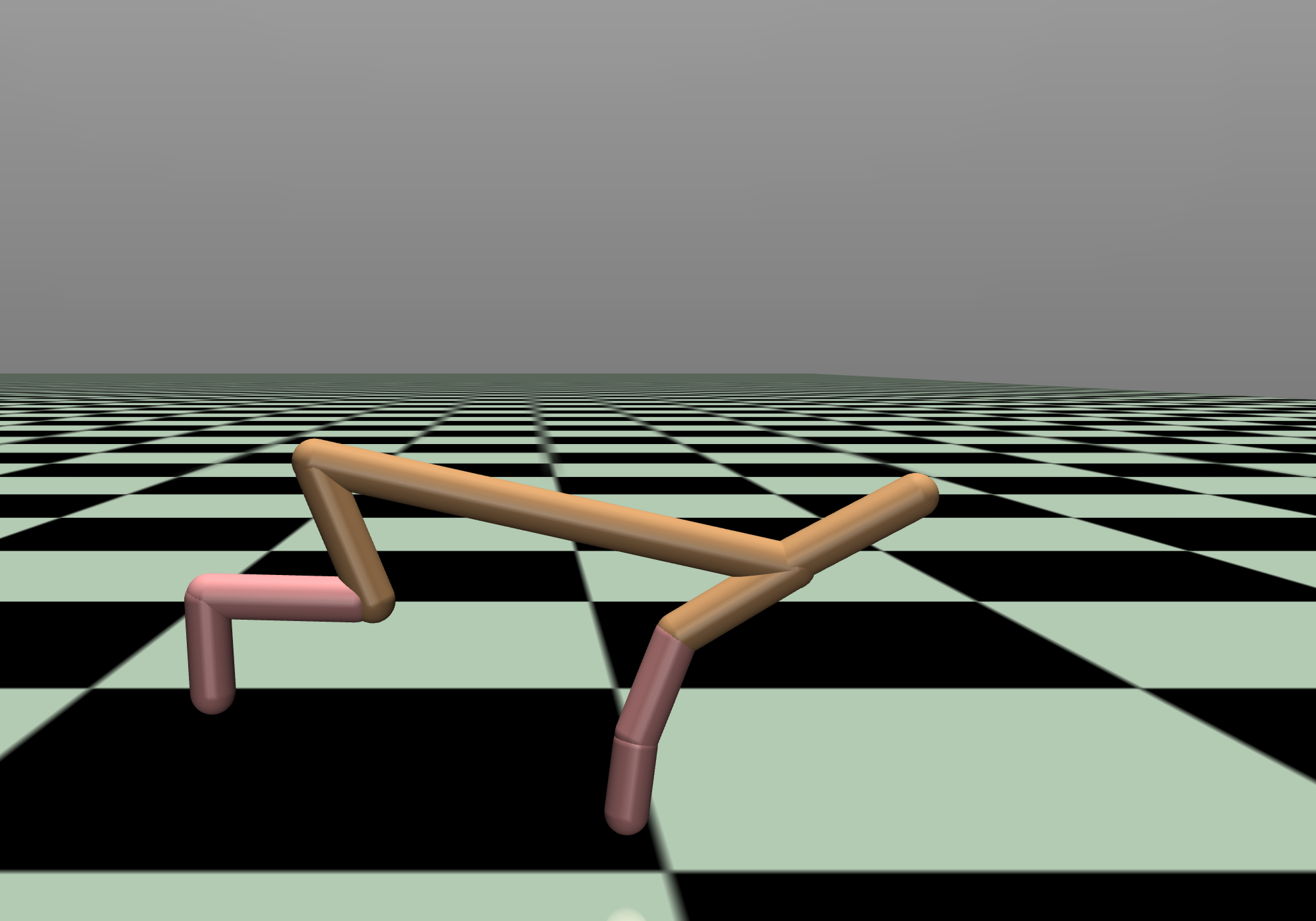}&
    \includegraphics[width=0.8\linewidth]{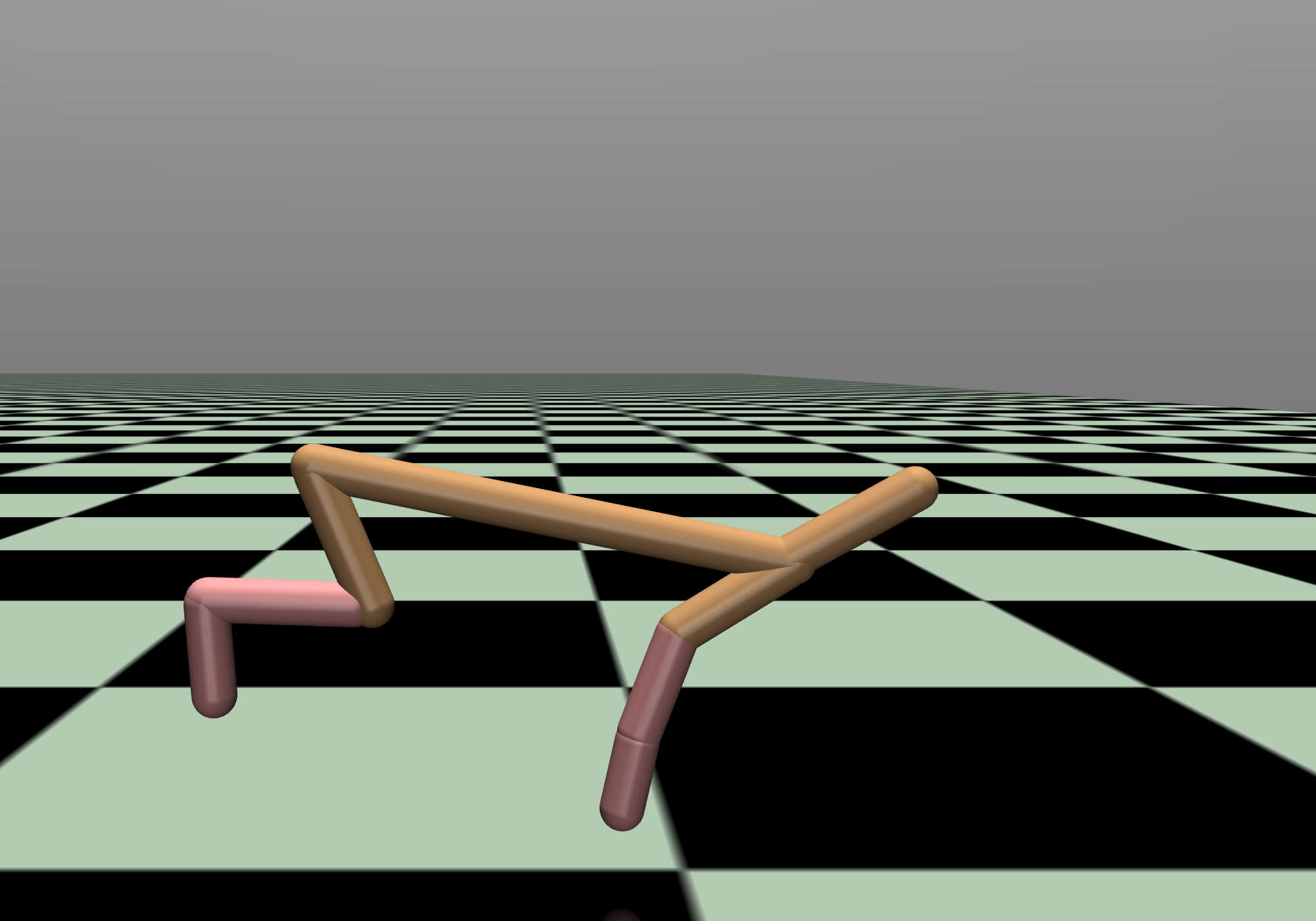}\\
    
     &
    $\qquad t = 0$&
    $\qquad t = 1$&
    $\qquad t = 2$&
    $\qquad t = 3$&
    $\qquad t = 4$\\
    \end{tabular}
  \caption{(a) Halfcheetah of mass $m$ using the source policy $\pi\sups$ in the source environment $\calM\sups$. (b) Halfcheetah of mass $1.5 \times m$ using the source policy $\pi\sups$ in the target environment $\calM\supt$. (c) Halfcheetah of mass $1.5 \times m$ using the learned target policy $\pi\supt$ in $\calM\supt$ with out method. Using the policy trained in the source environment without adapting it to the target environment yields suboptimal results. The adapted policy $\pi\supt$ can recover gaits close to the source trajectory.}
  \label{traj}
\end{figure*}

The paper is organized as follows. We review related work in Section 2 and the preliminaries in Section 3. In Section 4, we propose the theoretical version of the adaptation algorithm and prove its rate of convergence. In section 5 we describe the practical implementation using deviation models and practical optimization methods. We show detailed comparison with competing approaches in Section 6.
\section{Related Work}
\label{sec:related}
Our work connects to existing works on imitation learning, online adaptation, domain randomization and meta-learning, and model-based reinforcement learning. 

\noindent{\bf Imitation Learning.} In imitation learning, there is typically no separation between training environments and test environments. Existing imitation learning approaches aim to learn a policy that generates state distributions \cite{tobin2017domain,torabi2019recent,sun2019provably, yang2019imitation} or state-action distributions \cite{ho2016generative,fu2017learning,ke2019imitation,ghasemipour2019divergence} that are similar to the ones given by the expert policy. 
The difference in the policy adaptation setting is that the expert actions do not work in the first place in the new environment, and we need to both model the divergence and find a new policy for the target environment. In light of this difference, the work \cite{zhu2018reinforcement} considers a setting that is the closest to ours (which we will compare with in the experiments). It uses a state-action-imitation (GAIL\cite{ho2016generative}) approach to learn a policy in the target environment, to generate trajectories that are similar to the trajectories of the expert from the source environments. It also relies on using the true reward signals in the target environment to train the policy besides state imitation. In recent work, \cite{liu2020state} approaches a similar problem by using Wasserstein distance between the states as the reward. It uses adversarial training and model-free policy optimization methods. Our approach is model-based and relies on reduction to Data Aggregation~\cite{ross2011reduction} for efficiency. The reduction allows us to derive provable convergence guarantee with the rate of convergence. Experimentally we show that our approach is more sample efficient than model-free and minmax-based imitation approaches in general.

\noindent{\bf Online Adaptation.} Online adaptation methods transfer policies by learning a mapping between the source and target domains~\cite{daftry2016learning,tzeng2015adapting}. Such methods have achieved success in vision-based robotics but require extra kernel functions or learning feature spaces. In contrast, we focus on control problems where the policy adaptation is completely autonomous. 
\cite{christiano2016transfer} trains an inverse dynamics model (IDM) which can serve as the target policy by inquiring the source policy online. However, the approach does not focus on optimizing sample efficiency, which is crucial for the use of policy adaptation. In \cite{yu2018policy}, the agent selects from a range of pre-trained policies online, and does not perform further adaptation, and thus experiences problems similar to domain randomization approaches.

\noindent{\bf Domain Randomization and Meta-Learning.} 
Domain randomization and meta-learning methods are popular ways of transferring pre-trained policies to new environments. These methods rely on the key assumption that the training environments and test environments are sampled from the same predefined distribution. Domain randomization methods train robust agents on diverse samples of target environments~\cite{tobin2017domain,mordatch2015ensemble,antonova2017reinforcement,chebotar2019closing}. 
When the configuration of the target environment lies outside the training distribution, there is no performance guarantee. In meta-learning, such as Model Agnostic Meta-Learning (MAML) \cite{finn2017model,finn2017one,nagabandi2018learning}, meta-learned dynamics policies and models can adapt to perturbed environments with notable success including in physical robots. However, similar to domain randomization based approaches, they experience difficulties on new environments that are not covered by the training distribution. Our proposed approach focuses on online adaption in unseen environments. It is orthogonal to domain randomization and meta-learning approaches. We show in experiments that these different approaches can be easily combined. 

\noindent{\bf Model-based Reinforcement Learning.} 
Model-based reinforcement learning (MBRL) provides a paradigm that learns the environment dynamics and optimizes the control actions at the same time. Recent work has shown that MBRL has much better sample efficiency compared to model-free approaches both theoretically and empirically~\cite{tu2018gap,chua2018deep,sun2019model}. 
Our setting is different from the traditional MBRL setting. We consider test environments that are different from the training environment, and adapt the policy from the training environment to the test environment.

\section{Preliminaries}
\label{sec:prelim}
We consider finite-horizon Markov Decision Processes (MDP) $\calM =\langle\calS, \calA, f, H, R\rangle$ with the following components. $\mathcal{S}$ denotes the state space, and $\mathcal{A}$ the action space. The transition function $f\colon \mathcal{S} \times \mathcal{S} \times \mathcal{A} \rightarrow [0,1]$ determines the probability $f(s' |s,a) $ of transitioning into state $s'$ from state $s$ after taking action $a$. The reward function $R: \mathcal{S} \rightarrow \mathbb{R}$ is defined \emph{only} on states. We write $\pi_\theta$ to denote a stochastic policy $\pi_\theta \colon \mathcal{S} \times \mathcal{A} \rightarrow [0,1]$ parameterized by $\theta$. Each policy $\pi_\theta$ determines a distribution over trajectories $\{(s_i,a_i, r_i)\}_{i=1}^H$ under a fixed dynamics $f$. 
The goal of the agent is to maximize the expected cumulative reward $J(\theta) = {\Exp}_{\pi_{\theta}, f}\left[\sum_{h=1}^{H} R(s_h)\right]$ over all possible trajectories that can be generated by $\pi_{\theta}$.
Without loss of generality, in the theoretical analysis we always assume the normalized total reward is in the $[0,1]$ range, i.e., $\max_{s_1,\dots s_H}\sum_{h=1}^H R(s_h) \in [0,1]$.

We write the set $\{1, 2, \dots, N\}$ as $[N]$, and the uniform distribution over set $A$ as $U(A)$ throughout the paper. For any two distributions $d_1$ and $d_2$, we use $\|d_1-d_2\|$ to denote the total variation distance between the two distributions.

\section{Policy Adaptation with Data Aggregation}
\label{sec:theory}

\subsection{Basic Definitions}

In policy adapation, we consider a pair of MDPs and call them  a source MDP and a target MDP. We define the source MDP as $\calM\sups := \{ \calS, \calA\sups, f\sups, H, R \}$ and the target MDP as $\calM\supt := \{ \calS, \calA\supt, f\supt, H, R \}$. Note that the two MDPs share the same state space and reward functions, but can have different action spaces and transition dynamics. Fig. \ref{traj} demonstrates the problem of adapting a policy from a source environment to a target environment. Because of the difference in the action space and dynamics, directly using good policies from the source environment (Fig.\ref{traj}(a)) does not work in the target environment (Fig.\ref{traj}(b)). The objective is to adapt the policy from the source to the target environment to achieve good performance (Fig.\ref{traj}(c)). 

We focus on minimizing the samples needed for adaptation in the target MDP, by leveraging $\calM\sups$ to quickly learn a policy in $\calM\supt$. To achieve this, we assume that a pre-trained policy $\pi\sups$ from $\calM\sups$ achieves high rewards in $\calM\sups$. We wish to adapt $\pi\sups$ to a policy $\pi\supt$ that works well in $\calM\supt$. For ease of presentation, we consider $\pi^{(s)}$ and $\pi^{(t)}$ as deterministic throughout the theoretical analysis. 

Given a policy $\pi$, we write $\Exp\sups_{\pi}(\cdot)$ for the expectation over random outcomes induced by $\pi$ and $\calM\sups$. We write $d_{\pi;h}\sups$ to denote the state distribution induced by $\pi$ at time step $h$ under $\calM\sups$, and $d_{\pi}\sups = \sum_{h=1}^H d_{\pi;h}\sups / H$ as the average state distribution of $\pi$ under $\calM\sups$. We write $\rho_{\pi}\sups$ to represent the distribution of the state trajectories from $\pi$: for $\tau = \{s_h\}_{h=0}^H$,  $\rho_{\pi}\sups(\tau) = \prod_{h=1}^{H} f\sups(s_{h}|s_{h-1}, \pi(s_{h-1}))$. For the target MDP $\calM\supt$, we make the same definitions but drop the superscript $(t)$ for ease of presentation. 
Namely, $\Exp_{\pi}(\cdot)$ denotes the expectation over the randomness from $\pi$ and $\calM\supt$, $d_{\pi}$ denotes the induced state distribution of $\pi$ under $\calM\supt$, and $\rho_\pi$ denotes the state trajectory distribution. 

\subsection{Algorithm}

We now introduce the main algorithm Policy Adaptation with Data Aggregation (PADA). Note that this is the theoretical version of the algorithm, and the practical implementation will be described in detail in Section~\ref{sec:practical_alg}. To adapt a policy from a source environment to a target environment, PADA learns a model $\hat{f}$ to approximate the target environment dynamics $f\supt$. Based on the learned model, the algorithm generates actions that attempt to minimize the divergence between the trajectories in the target environment and those in the source environment generated by $\pi\sups$ at $\calM\sups$. Namely, the algorithm learns a policy $\pi\supt$ that reproduces the behavior of $\pi\sups$ on $\calM\sups$ in the target MDP $\calM\supt$. Since the state space $\calS$ is often large, learning a model $\hat{f}$ that can accurately approximate $f\supt$ globally is very costly. Instead, we only aim to iteratively learn a locally accurate model, i.e., a model that is accurate near the states that are generated by $\pi\supt$. This is the key to efficient adaptation.

\begin{algorithm}[t]
\caption{Policy Adaptation with Data Aggregation}
\label{alg:alg_theory}
\begin{algorithmic}[1]
\REQUIRE Source domain policy $\pi\sups$, source dynamics $f\sups$,  model class $\mathcal{F}$
\STATE Initialize dataset $\mathcal{D} = \emptyset$
\STATE Initialize $\hat{f}_1$
\FOR{e = 1, \ldots {\text{T}}}
\STATE Define policy $\pi\supt_e$ as in Eq.~\ref{eq:policy_f}

\FOR{n = 1, \ldots {\text{N}}} 
\STATE Reset $\calM\supt$ to a random initial state
\STATE Uniformly sample a time step $h\in [H]$
\STATE Execute $\pi\supt_e$ in $\calM\supt$ for $h$ steps to get state $s$
\STATE Sample exploration action $a\sim U({\calA}\supt)$
\STATE Take $a$ in $\calM\supt$ and get next state $s'$ 
\STATE Add $(s,a,s')$ into $\mathcal{D}$ (Data Aggregation)
\ENDFOR
\STATE Update to $\hat{f}_{e+1}$ via MLE by Eq.~\ref{eq:mle}
\ENDFOR 
\STATE \textbf{Output}: $\{\pi\supt_e\}_{e=1}^T$
\end{algorithmic}
\end{algorithm}

The detailed algorithm is summarized in Alg.~\ref{alg:alg_theory}. Given a model $\hat{f}_e$ at the $e$-th iteration, we define the ideal policy $\pi\supt_e$
\begin{align}
{\pi\supt_e(s) \triangleq \argmin_{a\in\mathcal{A}\supt} \| \hat{f}_e(\cdot|s,a) - f\sups(\cdot | s, \pi\sups(s)) \|.}
\label{eq:policy_f}
\end{align} 

The intuition is that, assuming $\hat{f}_e$ is accurate in terms of modelling $f\supt$ at state $s$, $\pi\supt_e(s)$ aims to pick an action such that the resulting next state distribution under $\hat{f}_e$ is similar to the next state distribution resulting from $\pi\sups$ under the source dynamics $f\sups$. We then execute $\pi\supt_e$ in the target environment $\calM\supt$ to generate a batch of data $(s,a,s')$. We further aggregate the newly generated data to the dataset $\mathcal{D}$ (i.e., data aggregation). We update model to $\hat{f}_{e+1}$ via Maximum Likelihood Estimation (MLE) on $\mathcal{D}$: 
\begin{align}
{\hat{f}_{e+1} = \argmax_{f\in\mathcal{F}} \sum_{s,a,s'\in\mathcal{D}}\log f(s'| s,a).}
\label{eq:mle}
\end{align}

Note that Algorithm \ref{alg:alg_theory} relies on two black-box offline computation oracles: (1) a \emph{one-step} minimization oracle (Eq.~\ref{eq:policy_f}) and (2) a Maximum Likelihood Estimator (Eq.~\ref{eq:mle}). In Section~\ref{sec:practical_alg}, we will introduce practical methods to implement these two oracles. We emphasize here that these two oracles are offline computation oracles and the computation itself does not require any fresh samples from the target environment $\calM\supt$.

\subsection{Analysis}

We now prove the performance guarantee of Alg.\ref{alg:alg_theory} for policy adaptation and establish its rate of convergence. At a high level, our analysis of Alg.~\ref{alg:alg_theory} is inspired from the analysis of DAgger \cite{ross2011reduction,ross2012agnostic} which leverages a reduction to no-regret online learning \cite{shalev2012online}. We will first make the connection with the Follow-the-Leader (FTL) algorithm, a classic no-regret online learning algorithm, on a sequence of loss functions. We then show that we can transfer the no-regret property of FTL to performance guarantee on the learned policy $\pi\supt$. Our analysis uses the FTL regret bound $\tilde{O}(1/T)$ where $T$ is the number of iterations~\cite{shalev2012online}. Since our analysis is a reduction to general no-regret online learning, in theory we can also replace FTL by other no-regret online learning algorithms as well (e.g., Online Gradient Descent \cite{zinkevich2003online} and AdaGrad \cite{duchi2011adaptive}). 
Intuitively, for fast policy adaptation to succeed, one should expect that there is similarity between the source environment and the target environment. We formally introduce the following assumption to quantify this. 
\begin{assum}[Adaptability]
\label{assum:similarity}
For any state action pair $(s,a)$ with source action $a\in\calA\sups$ , there exists a target action $a'\in \calA\supt$ in target environment, such that: 
\begin{align*}
\| f\sups(\cdot |s, a) - f\supt(\cdot |s, a') \| \leq \epsilon_{s,a},
\end{align*} for some small $\epsilon_{s,a} \in \mathbb{R}^+$. 
\end{assum} 
\begin{remark} When $\epsilon_{s,a} \to 0$ in the above assumption, the target environment can perfectly recover the dynamics of the source domain at $(s,a)$. However, $\epsilon_{s,a} = 0$ does not mean the two transitions are the same, i.e., $f\supt(s,a) = f\sups(s,a)$. First the two action spaces can be widely different. Secondly, there may exist states $s$, where one may need to take completely different target actions from $\calA\supt$ in order to match the source transition $f\sups(\cdot|s,a)$, i.e., $\exists a'\in\calA\supt$ such that $f\supt(\cdot|s,a') = f\sups(\cdot|s,a)$, but $a\neq a'$.
\end{remark}

\begin{assum}[Realizability]
\label{assum2:Realizability}
Let the model class $\mathcal{F}$ be a subset of $\{f : \calS\times\calS\times\calA \to [0,1]\}$. We assume $f\supt\in \mathcal{F}$. 
\end{assum} 
Here we assume that our model class $\mathcal{F}$ is rich enough to include $f\supt$. Note that the assumption on realizability is just for analysis simplicity. Agnostic results can be achieved with more refined analysis similar to~\cite{ross2011reduction,ross2012agnostic}.

We define the following loss function: 
\begin{align*}
\ell_e(f) \triangleq \mathbb{E}_{s\sim d_{\pi\supt_e} ,a\sim U(\mathcal{A}\supt)}\left[ D_{KL}\left(  f\supt(\cdot|s,a), f(\cdot |s,a)  \right)\right], 
\end{align*} for all $e\in [T]$. 
The loss function $\ell_e(f)$ measures the difference between $f\supt$ and $f$ under the state distribution induced by $\pi\supt_e$ under $\calM\supt$ and the uniform distribution over the action space. This definition matches the way we collect data inside each episode. We generate $(s,a,s')$ triples via sampling $s$ from $d_{\pi\supt_e}$, $a$ from $U(\calA\supt)$, and then $s'\sim f\supt(\cdot|s,a)$.
At the end of the iteration $e$, the learner uses FTL to compute $\hat{f}_{e+1}$ as:
\begin{align*}
\hat{f}_{e+1} = \argmin_{f\in\mathcal{F}} \sum_{i=1}^e \ell_i(f).
\end{align*}
Using the definition of KL-divergence, it is straightforward to show that the above optimization is equivalent to the following Maximum Likelihood Estimation:
\[\argmax_{f\in\mathcal{F}}\sum_{i=1}^e\mathbb{E}_{s\sim d_{\pi\supt_e}, a\sim U(\calA\supt), s'\sim f\supt(\cdot|s,a)}\left[\log f(s'|s,a)\right].\]
At the end of the episode $e$, the aggregated dataset $\calD$ contains triples that are sampled based on the above procedure from the first to the $e$-th episode.

With no-regret learning on $\hat{f}_e$, assumptions~\ref{assum:similarity}, and \ref{assum2:Realizability}, we can obtain the following main results. We first assume that the target environment $\calM\supt$ has a discrete action space, i.e., $\calA\supt$ is discrete, and then show that the result can be easily extended to continuous action spaces. 

\begin{theorem}[Main Theorem]
Assume  $\calM\supt$ has a discrete action space $\mathcal{A}\supt$ and denote $A \triangleq |\calA\supt|$. Among the sequence of policies computed in Alg.~\ref{alg:alg_theory}, there exists a policy $\hat{\pi}$ such that:
\begin{align*}
\mathbb{E}_{s\sim d_{\hat\pi}} \| f\supt(\cdot | s, \hat\pi(s)) - f\sups(\cdot | s, \pi\sups(s))    \| \\\leq O\left({A} T^{-1/2} + \mathbb{E}_{s\sim d_{\hat\pi}}\left[ \epsilon_{s, \pi\sups(s)} \right]\right),
\end{align*} which implies that:
\begin{align*}
\left\| \rho_{\hat\pi} - \rho\sups_{\pi\sups} \right\| \leq O\left(H A T^{-1/2} + H\mathbb{E}_{s\sim d_{\hat\pi}}\left[ \epsilon_{s, \pi\sups(s)} \right] \right),
\end{align*} where we recall that $\rho_{\pi}$ stands for the state-trajectory distribution of policy $\pi$ under $\mathcal{M}\supt$ and $\rho\sups_{\pi\sups}$ stands for the state-trajectory distribution of $\pi\sups$ under $\mathcal{M}\sups$.
\label{them:main}
\end{theorem}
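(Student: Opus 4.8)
The plan is to follow the DAgger-style reduction to no-regret online learning that the preceding discussion sets up, and then to convert the resulting guarantee on the learned dynamics into a guarantee on the recovered trajectories. First I would invoke realizability (Assumption~\ref{assum2:Realizability}): since $f\supt\in\mathcal{F}$ and $\ell_e(f\supt)=0$ while every $\ell_e\geq 0$, we have $\min_{f\in\mathcal{F}}\sum_{i=1}^T\ell_i(f)=0$. Feeding the stated FTL regret bound $\tilde O(1/T)$ into this gives $\frac1T\sum_{e=1}^T \ell_e(\hat f_e)\leq \tilde O(1/T)$, so the best iterate satisfies $\min_e \ell_e(\hat f_e)\leq \tilde O(1/T)$; I take $\hat\pi$ to be the corresponding $\pi\supt_e$, so that $d_{\hat\pi}=d_{\pi\supt_e}$ and $\ell_e(\hat f_e)=\mathbb{E}_{s\sim d_{\hat\pi},a\sim U(\calA\supt)}[\dkl(f\supt(\cdot|s,a),\hat f_e(\cdot|s,a))]\leq \tilde O(1/T)$. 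The crucial point this reduction exploits is that $\ell_e$ is measured under the \emph{on-policy} distribution $d_{\pi\supt_e}$ induced by the current model's own planner, so ordinary ERM would not suffice and the no-regret property is what closes the loop.

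Next I would bound the target quantity pointwise in $s$ by the triangle inequality. Writing $a^\star=\hat\pi(s)$, I split $\|f\supt(\cdot|s,a^\star)-f\sups(\cdot|s,\pi\sups(s))\|$ into the model error $\|f\supt(\cdot|s,a^\star)-\hat f_e(\cdot|s,a^\star)\|$ plus the planning residual $\|\hat f_e(\cdot|s,a^\star)-f\sups(\cdot|s,\pi\sups(s))\|$. For the residual I use that $a^\star$ minimizes exactly this objective over $\calA\supt$ (Eq.~\ref{eq:policy_f}), so it is no larger than the same quantity evaluated at the action $a'$ guaranteed by Adaptability (Assumption~\ref{assum:similarity}); a further triangle inequality then replaces $f\sups(\cdot|s,\pi\sups(s))$ by $f\supt(\cdot|s,a')$ at cost $\epsilon_{s,\pi\sups(s)}$ and leaves a second model-error term $\|\hat f_e(\cdot|s,a')-f\supt(\cdot|s,a')\|$.

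It then remains to bound the two model-error terms, each evaluated at a single state-dependent action ($a^\star$ or $a'$), in expectation over $s\sim d_{\hat\pi}$, using only the uniformly-averaged loss $\ell_e(\hat f_e)$. The key device is a change of measure from a single action to the uniform action distribution: since any fixed action carries mass $1/A$ under $U(\calA\supt)$ and total variation is nonnegative, for a state-dependent action $a(s)$ one has $\mathbb{E}_{s\sim d_{\hat\pi}}\|f\supt(\cdot|s,a(s))-\hat f_e(\cdot|s,a(s))\|\leq A\,\mathbb{E}_{s\sim d_{\hat\pi},\,a\sim U(\calA\supt)}\|f\supt(\cdot|s,a)-\hat f_e(\cdot|s,a)\|$. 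Applying Pinsker's inequality and then Jensen to pull the expectation inside the square root converts the right-hand side into $A\sqrt{\tfrac12\,\ell_e(\hat f_e)}=\tilde O(A\,T^{-1/2})$. Crucially the factor $A$ must be taken on the total-variation distance \emph{before} Pinsker; doing it in the other order would only yield $\sqrt A$ and would not match the stated rate. Combining the two terms with $\mathbb{E}_{s\sim d_{\hat\pi}}[\epsilon_{s,\pi\sups(s)}]$ proves the first inequality.

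For the second inequality I would use a standard telescoping (simulation-lemma) argument on the two trajectory distributions, which share the same initial-state law: treating $f\supt(\cdot|s,\hat\pi(s))$ and $f\sups(\cdot|s,\pi\sups(s))$ as Markov kernels and interpolating step by step bounds $\|\rho_{\hat\pi}-\rho\sups_{\pi\sups}\|$ by $\sum_{h=1}^H\mathbb{E}_{s\sim d_{\hat\pi;h-1}}\|f\supt(\cdot|s,\hat\pi(s))-f\sups(\cdot|s,\pi\sups(s))\|$, which equals $H\,\mathbb{E}_{s\sim d_{\hat\pi}}\|f\supt(\cdot|s,\hat\pi(s))-f\sups(\cdot|s,\pi\sups(s))\|$ since $d_{\hat\pi}$ is exactly the time-averaged state distribution; substituting the first inequality yields the claimed $O(HA\,T^{-1/2}+H\,\mathbb{E}_{s\sim d_{\hat\pi}}[\epsilon_{s,\pi\sups(s)}])$ bound. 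I expect the main obstacle to be the combination of the self-referential online-learning reduction (aligning the on-policy loss sequence with the FTL regret guarantee) and the change-of-measure step that produces the $A$ dependence; the telescoping step is routine but must be oriented so that the target policy's own state distribution $d_{\hat\pi}$ appears, matching the first part.
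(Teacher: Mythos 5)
Your proposal is correct and follows essentially the same route as the paper's proof: FTL plus realizability yields a best-iterate local model-accuracy bound under the on-policy distribution, triangle inequalities combined with the planner's optimality, the adaptability assumption, Pinsker's inequality, and a change of measure from the uniform action distribution give the per-step total-variation bound, and a Markov-chain telescoping lemma lifts it to trajectory distributions with the factor $H$. One minor remark: your claim that applying Pinsker \emph{before} the change of measure ``would only yield $\sqrt{A}$ and would not match the stated rate'' is backwards --- that order (which is in fact what the paper uses) yields the tighter bound $\tilde{O}(\sqrt{A}\,T^{-1/2})$, which trivially implies the stated $O(A\,T^{-1/2})$; both orderings are valid.
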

The full proof is in the Appendix \ref{app:proof_of_main_theorem}. The theorem shows that our algorithm can provide a policy in the target environment that induces trajectories close to those induced by the experts in the source environment. For instance, if the target and source MDPs are completely adaptable (i.e., $\epsilon_{s,a} = 0$ in Assumption~\ref{assum:similarity} for all $(s,a)$)  and the number of iterations approach to infinity, then we can learn a  policy $\hat{\pi}$ that generates state trajectories in $\calM\supt$ that match the state trajectories generated via the source policy $\pi\sups$ at the source MDP $\calM\sups$. 

\begin{remark} The error $\mathbb{E}_{s\sim d_{\hat\pi}}\left[ \epsilon_{s, \pi\sups(s)}\right]$ is averaged over the state distribution induced by the learned policy rather than in an $\ell_{\infty}$ form, i.e., $\max_{s,a} \epsilon_{s,a}$. 
\end{remark}

Although the analysis is done on discrete action space, the algorithm can be naturally applied to compact continuous action space as follows. The proof of the following corollary and its extension to the $d$-dimensional continuous action spaces are in the Appendix. 
\begin{corollary}[Continuous Action Space]
Assume ${\calA}\supt = [0,1]$, $f\supt$ and functions $f\in\mathcal{F}$ are Lipschitz continuous with (and only with) actions in $\calA\supt$. Among policies returned from Alg.~\ref{alg:alg_theory}, there exists a policy $\hat{\pi}$ such that:
\[\| \rho_{\hat{\pi}} - \rho\sups_{\pi\sups} \| \leq O\left(H  T^{-1/4} + H\mathbb{E}_{s\sim d_{\hat\pi}}\left[ \epsilon_{s, \pi\sups(s)} \right]  \right).\]
\label{cor:continuous_action}
\end{corollary}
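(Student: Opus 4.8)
The plan is to reduce the continuous case to the discrete Main Theorem (Theorem~\ref{them:main}) by running Alg.~\ref{alg:alg_theory} on a finite grid of actions and then trading the grid resolution against the number of iterations $T$. Concretely, I would discretize $\calA\supt = [0,1]$ into $A$ evenly spaced points $\calA_A$ with spacing $\delta \asymp 1/A$, and execute Alg.~\ref{alg:alg_theory} verbatim with $\calA\supt$ replaced by $\calA_A$: the one-step oracle in Eq.~\ref{eq:policy_f} now minimizes over the $A$ grid points, and the exploratory action is drawn uniformly from $\calA_A$. Since $f\supt\in\mathcal{F}$, the realizability Assumption~\ref{assum2:Realizability} is untouched by restricting the action set, so Theorem~\ref{them:main} applies directly to this discretized problem with $A = |\calA_A|$.

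The only place discretization costs us is the adaptability term. For a state $s$, Assumption~\ref{assum:similarity} furnishes a \emph{continuous} action $a^\star\in[0,1]$ with $\| f\sups(\cdot|s,\pi\sups(s)) - f\supt(\cdot|s,a^\star) \| \le \epsilon_{s,\pi\sups(s)}$, but $a^\star$ need not lie on the grid. Letting $\tilde a\in\calA_A$ be the nearest grid point, we have $|\tilde a - a^\star| \le \delta/2$, and the assumed Lipschitz continuity of $f\supt$ in the action (constant $L$) gives $\|f\supt(\cdot|s,\tilde a) - f\supt(\cdot|s,a^\star)\| \le L\delta/2$. Thus, on the grid, Assumption~\ref{assum:similarity} holds with the inflated constant $\epsilon_{s,\pi\sups(s)} + L\delta/2 = \epsilon_{s,\pi\sups(s)} + O(1/A)$. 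The key point is that Lipschitzness is used \emph{only} for this continuous-to-grid step, while the model errors at $\tilde a$ and at the chosen grid action are already controlled by the FTL reduction because both are actions sampled under the uniform grid measure. Feeding the inflated constant into the conclusion of Theorem~\ref{them:main} yields
\begin{align*}
\mathbb{E}_{s\sim d_{\hat\pi}} \| f\supt(\cdot | s, \hat\pi(s)) - f\sups(\cdot | s, \pi\sups(s)) \| \\ \le O\!\left(A\, T^{-1/2} + \tfrac{1}{A} + \mathbb{E}_{s\sim d_{\hat\pi}}\!\left[ \epsilon_{s,\pi\sups(s)} \right]\right).
\end{align*}

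The last step is to optimize the free parameter $A$. Minimizing $A\,T^{-1/2} + 1/A$ balances the two terms at $A \asymp T^{1/4}$, where both are $O(T^{-1/4})$; taking $A = \lceil T^{1/4}\rceil$ collapses the per-step bound to $O(T^{-1/4} + \mathbb{E}_{s\sim d_{\hat\pi}}[\epsilon_{s,\pi\sups(s)}])$. Applying the same single-step-to-trajectory amplification used in the second half of Theorem~\ref{them:main}, which converts the average state-transition discrepancy into a total-variation bound on $\rho_{\hat\pi}$ versus $\rho\sups_{\pi\sups}$ at the cost of a factor $H$, then gives $\| \rho_{\hat\pi} - \rho\sups_{\pi\sups} \| \le O(H\,T^{-1/4} + H\,\mathbb{E}_{s\sim d_{\hat\pi}}[\epsilon_{s,\pi\sups(s)}])$, as claimed. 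The extension to $\calA\supt = [0,1]^d$ is identical, except a grid of $A$ points has spacing $\delta \asymp A^{-1/d}$, so one balances $A\,T^{-1/2} + A^{-1/d}$ and obtains the slower rate $T^{-1/(2(d+1))}$.

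I expect the obstacle to be bookkeeping rather than conceptual. First, one must confirm that the multiplicative factor producing the $A$ in Theorem~\ref{them:main} is exactly the number of grid points (an importance-weighting factor $|\calA_A| = A$ applied to the total-variation error), since otherwise the trade-off between $A\,T^{-1/2}$ and $1/A$ — and hence the exponent $1/4$ — would shift. Second, one must verify that the Lipschitz constant enters the bound only additively through the nearest-grid-point argument and never contaminates the no-regret/statistical term. Finally, one should check that the distribution $d_{\hat\pi}$ under which the residual $\epsilon$ is averaged is precisely the one induced by the grid-restricted policy returned by the algorithm, so that the averaged-$\epsilon$ form of the conclusion carries over unchanged.
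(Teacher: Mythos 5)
Your reduction is internally coherent and recovers the right rate (including the correct $T^{-1/(2(d+1))}$ generalization in $d$ dimensions), but it is a genuinely different argument from the paper's, and strictly speaking it proves a different statement. You modify the algorithm itself: both the exploration distribution and the one-step oracle of Eq.~\ref{eq:policy_f} are restricted to a grid of $A = \lceil T^{1/4}\rceil$ points chosen in advance as a function of the iteration budget. The guarantee you obtain therefore concerns this $T$-dependent discretized variant, whereas Corollary~\ref{cor:continuous_action} asserts the bound for policies returned by Alg.~\ref{alg:alg_theory} run \emph{unchanged} on $\calA\supt = [0,1]$ (which is also what the practical CEM-based implementation in Alg.~\ref{alg:practical} approximates). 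The paper's proof keeps the algorithm intact and discretizes only inside the analysis: starting from the uniform-action guarantee of Lemma~\ref{lemma:local_model} (via Pinsker), it restricts the expectation to a bin $[0,\delta]$ containing an arbitrary action $\hat{a}$, paying a factor $1/\delta$, and then uses Lipschitz continuity of \emph{both} $f\supt$ and $\hat{f}\in\mathcal{F}$ to convert the bin-averaged error into a pointwise-in-action bound $O\bigl(L\delta + 1/(\delta\sqrt{T})\bigr) = O(T^{-1/4})$ at $\delta = \Theta(T^{-1/4})$; it then reruns the argument of Lemma~\ref{lemma:behavior} and finishes with Lemma~\ref{lemma:divergence}, exactly as you do.

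The tradeoffs are instructive. Your route needs Lipschitzness only of $f\supt$ (for the nearest-grid-point inflation $\epsilon_{s,\pi\sups(s)} \to \epsilon_{s,\pi\sups(s)} + L\delta/2$, which is correct); the corollary's hypothesis that every $f\in\mathcal{F}$ is Lipschitz goes unused, since in your argument the learned model is only ever evaluated at grid actions, which the uniform-grid importance weighting already controls. The paper, by contrast, needs Lipschitzness of the model class precisely because its learned model is queried at arbitrary continuous actions — the presence of that hypothesis in the corollary is a signal of the intended proof. Conversely, the paper's result covers the algorithm as actually specified, while yours requires committing to a grid resolution tied to $T$ before running. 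Your balancing computation ($AT^{-1/2} + 1/A$ at $A \asymp T^{1/4}$), the treatment of realizability, and the factor-$H$ amplification through Lemma~\ref{lemma:divergence} are all correct; and your worry about whether Theorem~\ref{them:main} scales linearly in $A$ is benign, since the importance-weighting argument in Lemma~\ref{lemma:behavior} actually yields the tighter $\sqrt{A/T}$, so your exponent can only improve. If the corollary is read literally as a statement about the unmodified algorithm, the algorithm modification is the one real gap in your proposal; everything else stands.
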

\vspace{-.8cm}
\begin{remark} As we assume the reward function only depends on states,  $\|\rho_{\hat{\pi}} - \rho\sups_{\pi\sups} \| \leq \delta$ implies $|J\supt( \hat{\pi}) - J\sups(\pi\sups)|\leq \| \rho_{\hat\pi} - \rho\sups_{\pi\sups} \| \left(\max_{s_1\dots s_H} \sum_h^H R(s_h)\right) \leq \delta$ due to the normalization assumption on the rewards. 
Thus, though our algorithm runs without rewards, when $\pi\sups$ achieves high reward in the source MDP $\calM\sups$, the algorithm is guaranteed to learn a policy  $\hat\pi$ that achieves high rewards in the target environment $\calM\supt$.
\end{remark}

\section{Practical Implementation}
\label{sec:practical_alg}
In Algorithm~\ref{alg:alg_theory} we showed the theoretical version of our approach, which takes an abstract model class $\calF$ as input and relies on two offline computation oracles 
(Eq.~\ref{eq:policy_f} and Eq.~\ref{eq:mle}). We now design the practical implementation by specifying the parameterization of the model class $\calF$ and the optimization oracles. Algorithm~\ref{alg:practical} shows the practical algorithm, and we explain the details in this section.

\subsection{Model Parameterization} \label{model_param}
In the continuous control environments, we focus on 
stochastic transitions with Gaussian noise, where $f\supt (s,a)= \bar{f}\supt(s,a) + \epsilon$, $f\sups(s,a) = \bar{f}\sups(s,a) + \epsilon'$, with $\epsilon$ and $\epsilon'$ from $\mathcal{N}(0,\Sigma)$ and $\bar{f}\supt$ and $\bar{f}\sups$ being nonlinear deterministic functions. In this case, we 
consider the following model class with parameterization $\theta$:  
{ \begin{align*}
\calF = \{\delta_{\theta}(s,a) + \hat f^{(s)}(s,\pi\sups(s)),\forall s,a: \theta\in\Theta \}.
\end{align*}}
where $\hat f^{(s)}$ is a pre-trained model of the source dynamics $f^{(s)}$ and we assume $\hat f\sups$ well approximates $f\sups$ (and one has full control to the source environment such as the ability to reset). Then for each state $s$, $\hat{f}\sups(s,\pi\sups(s))$ is a fixed distribution of the next state in the source environment by following the source policy. 
Define $\Delta^{\pi\sups}(s,a) \triangleq \hat{f}\sups(s,\pi\sups(s)) - f\supt(s,a)$, which captures the deviation from taking action $a$ in the target environment to following $\pi\sups$ in the source environment. 
So $\delta_{\theta}(s,a)$ is trained to approximate the deviation $\Delta^{\pi\sups}(s,a)$. Note that learning $\Delta^{\pi\sups}$ is just an alternative way to capture the target dynamics since we know $\hat{f}\sups(s,\pi(s))$ foresight, thus it should be no harder than learning $f\supt$ directly.

\subsection{Model Predictive Control}

For deterministic transition, Eq~\ref{eq:policy_f} reduces to one-step minimization $\argmin_{a\in{\calA}\supt} \| \hat{f}_e(\cdot|s,a) - \hat{f}\sups( s, \pi\sups(s)) \|_2.$ Since $\hat{f}_e\in \calF$, we have $\hat{f}_e(s,a) = \delta_{\theta_e}(s,a) + \hat f\sups(s,\pi\sups(s))$, and the optimization can be further simplfied to:
$\argmin_{a\in\mathcal{A}\supt} \| \delta_{\theta_e}(s,a) \|_2.$
We use the Cross Entropy Method (CEM)~\cite{botev2013cross} which iteratively repeats: randomly draw $N$ actions, evaluate them in terms of the objective value $\|\delta_{\theta_e}(s,a)\|_2$, pick the top $K$ actions in the increasing order of the objective values, and then refit a new Gaussian distribution using the empirical mean and covariance of the top $K$ actions.

We emphasize here we only need to solve a \emph{one-step} optimization problem without unrolling the system for multiple steps. We write the CEM oracle as $\mathrm{CEM}(\calA\supt, s, \delta_{\theta})$ which outputs an action $a$ from $\calA\supt$ that approximately minimizes $\|\delta_{\theta}(s,a)\|_2$. Here, $\mathrm{CEM}(\calA\supt, s, \delta_{\theta}): \mathcal{S}\to\calA\supt$ can be considered as a policy that maps state $s$ to a target action $a$. 

\begin{algorithm}[t]
\caption{Policy Adaptation with Data Aggregation via Deviation Model}
\label{alg:practical}
\begin{algorithmic}[1]
\REQUIRE $\pi_s$, $\hat{f}\sups$, deviation model class $\{\delta_{\theta}: \theta\in\Theta\}$, explore probability $\epsilon$, 
replay buffer $\calD$, learning rate $\eta$
\STATE Randomly initialize divergence model $\delta_{\theta}$ 
\FOR{$T$ Iterations}
\FOR{$n$ steps} 
\STATE $s \gets \text{Reset $\calM\supt$}$ 
\WHILE{current episode does not terminate}
\STATE With probability $\epsilon$: $a \sim U(\calA\supt)$ \label{line:epsilon_greedy}
\STATE Otherwise: $a \gets \mathrm{CEM}(\calA\supt, s, \delta_{\theta})$
\STATE Execute $a$ in $\calM\supt$: $s' \gets f\supt(s,a)$
\STATE Update replay buffer: $\calD \gets \calD \cup \{(s, a, s')\}$ \label{line:replay_buffer}
\STATE $s  \gets s'$ 
\ENDWHILE
\ENDFOR
\STATE Update $\theta$ with Eq. \ref{eq:sgd}
\ENDFOR
\end{algorithmic}
\end{algorithm}

\subsection{Experience Replay for Model Update}\label{ssec:exp}

Note that Alg.~\ref{alg:alg_theory} requires to solve a batch optimization problem (MLE in Eq.~\ref{eq:mle}) in every iteration, which could be computationally expensive in practice. We use Experience Replay \cite{adam2011experience,mnih2013playing}, which is more suitable to optimize rich non-linear function approximators ($\delta_{\theta}$ is a deep neural network in our experiments). Given the current divergence model $\delta_{\theta}$ and the aggregated dataset $\calD = \{s, a, s'\}$ (aka, replay buffer) with $s' = f\supt(s,a)$, we randomly sample a mini-batch $B\subset \calD$ and perform a stochastic gradient descent step with learning rate $\eta$:
\begin{align} 
\resizebox{1.\hsize}{!}{$\theta\leftarrow \theta - \frac{\eta}{|B|}\nabla_{\theta}\left( \sum_{i=1}^{|B|} \| \hat{f}\sups(s_i,\pi\sups(s_i)) +\delta_{\theta}(s_i,a_i) - s'_i \|_2^2  \right)$}.
\label{eq:sgd}
\end{align}
\subsection{Policy Adaptation with Data Aggregation}
\label{ssec:summ}
As show in Algorithm~\ref{alg:practical}, we maintain a reply buffer that stores all experiences from the target model $\calM\supt$ (Line~\ref{line:replay_buffer}) and constantly update the model $\delta_{\theta}$ using mini-batch SGD (Eq.~\ref{eq:sgd}).  Alg~\ref{alg:practical} performs local exploration in an $\epsilon$-greedy way.
We refer our method as Policy Adaptation with Data Aggregation via Deviation Model (\textbf{PADA-DM}). 

\begin{remark}  Even being one-step, $\mathrm{CEM}(\calA\supt, s, \delta_{\theta})$ may be computationally expensive, we could obtain a MPC-free policy (target policy) by training an extra parameterized policy to mimic $\mathrm{CEM}(\calA\supt, s, \delta_{\theta})$ via techniques of Behavior Cloning \cite{bain1999framework}. 
When we train this extra parameterized policy, we name the method as \textbf{PADA-DM with target policy} and we will show it does not affect the performance of the overall algorithm during training. However, during test time, such parameterized policy runs faster than CEM and thus is more suitable to be potentially deployed on real-world systems.
\end{remark}

\begin{figure*}[th!]
    \centering
    \includegraphics[width = 0.245\textwidth, trim=0cm 0cm 0cm 0cm, clip=true]{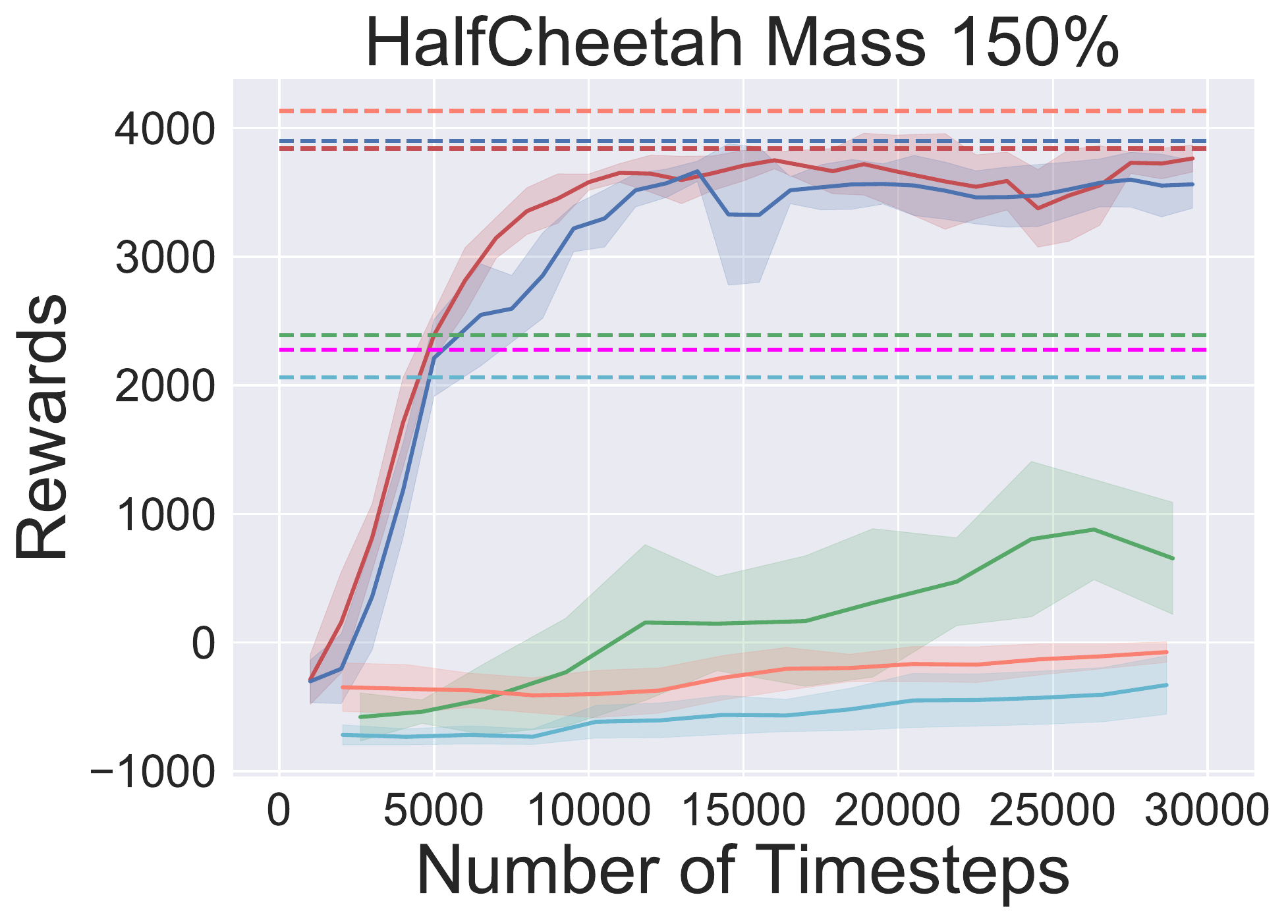}
	\includegraphics[width = 0.245\textwidth, trim=0cm 0cm 0cm 0cm, clip=true]{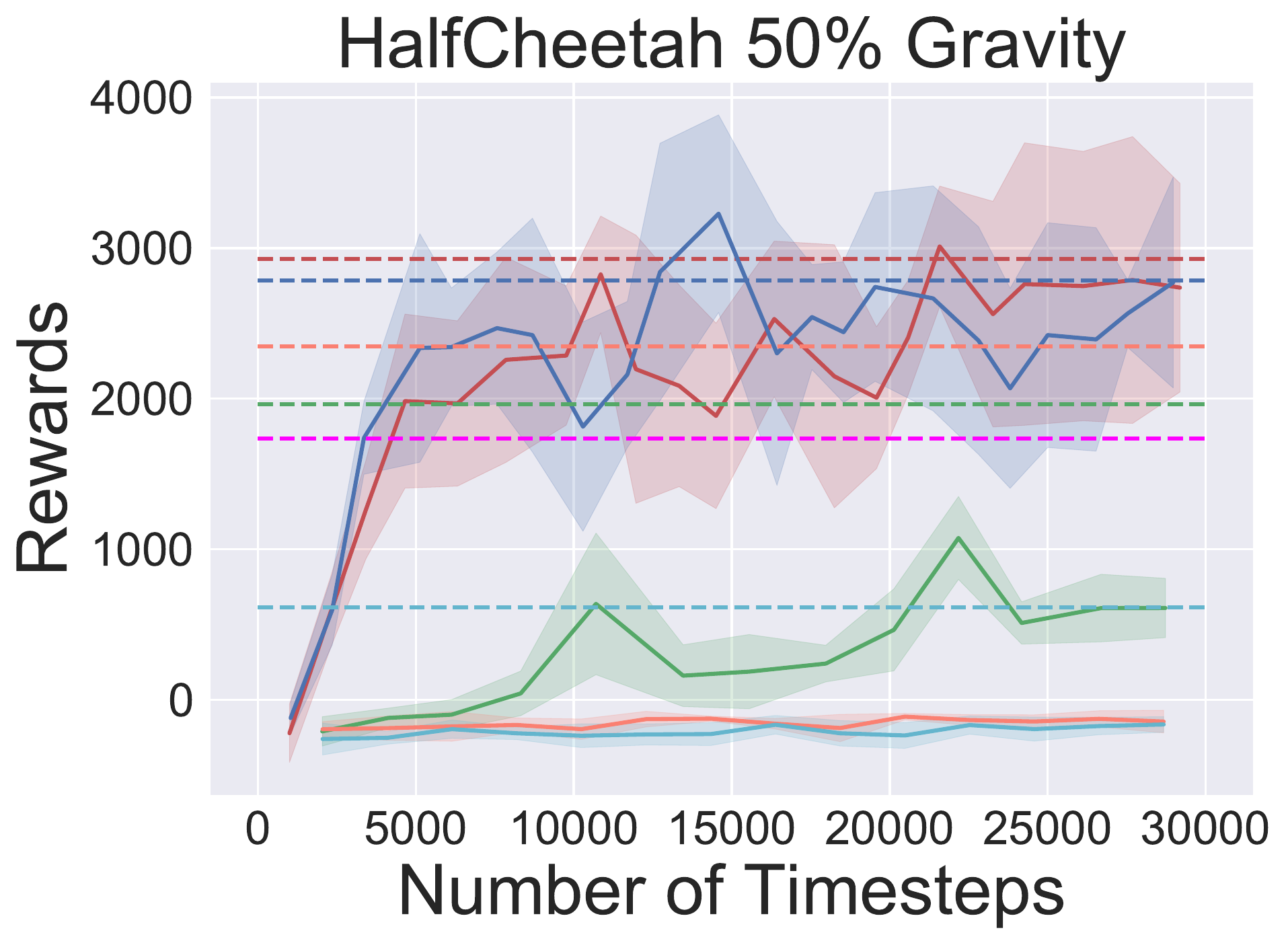}
	\includegraphics[width = 0.245\textwidth, trim=0cm 0cm 0cm 0cm, clip=true]{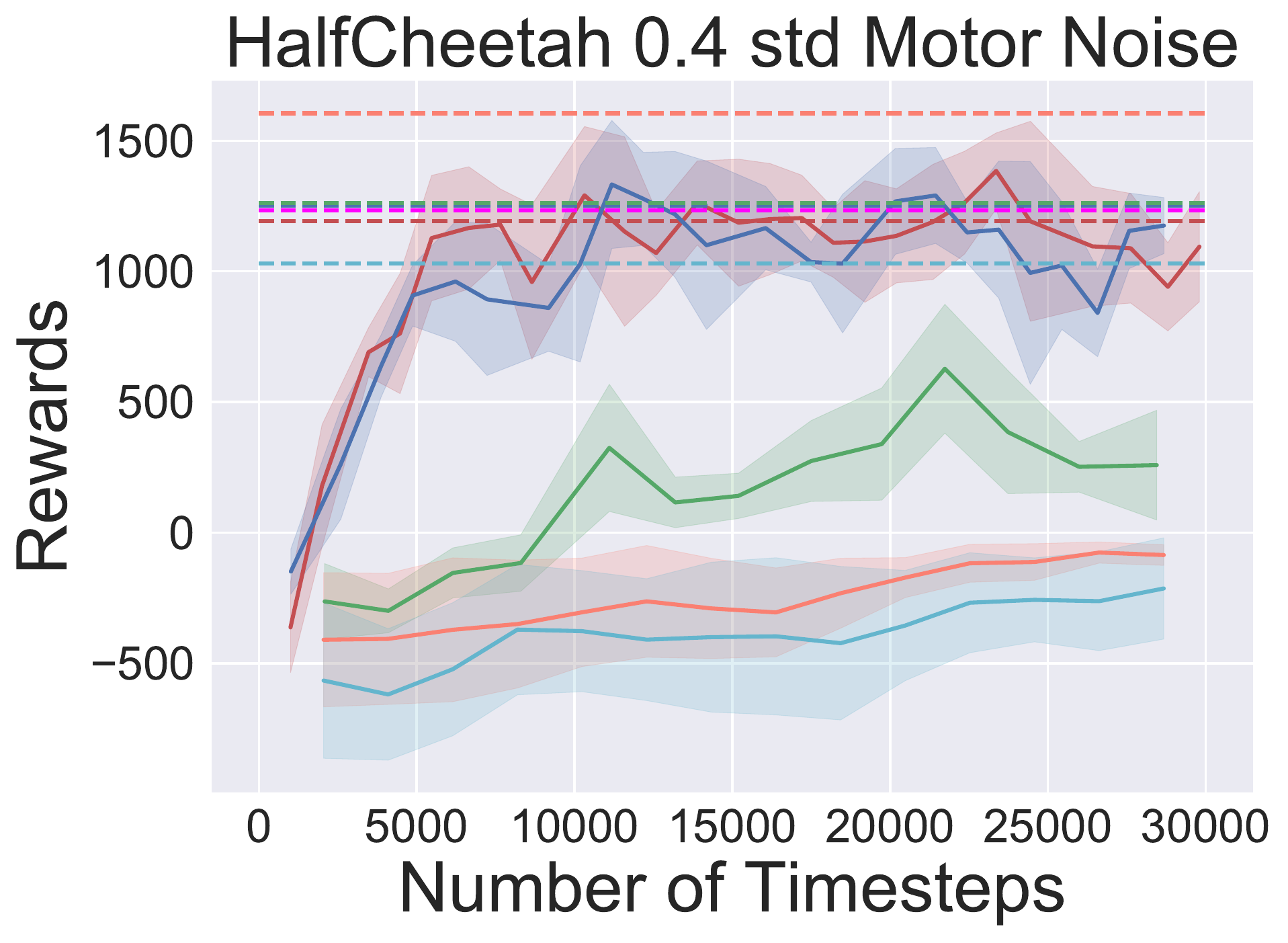}
	\includegraphics[width = 0.245\textwidth, trim=0cm 0cm 0cm 0cm, clip=true]{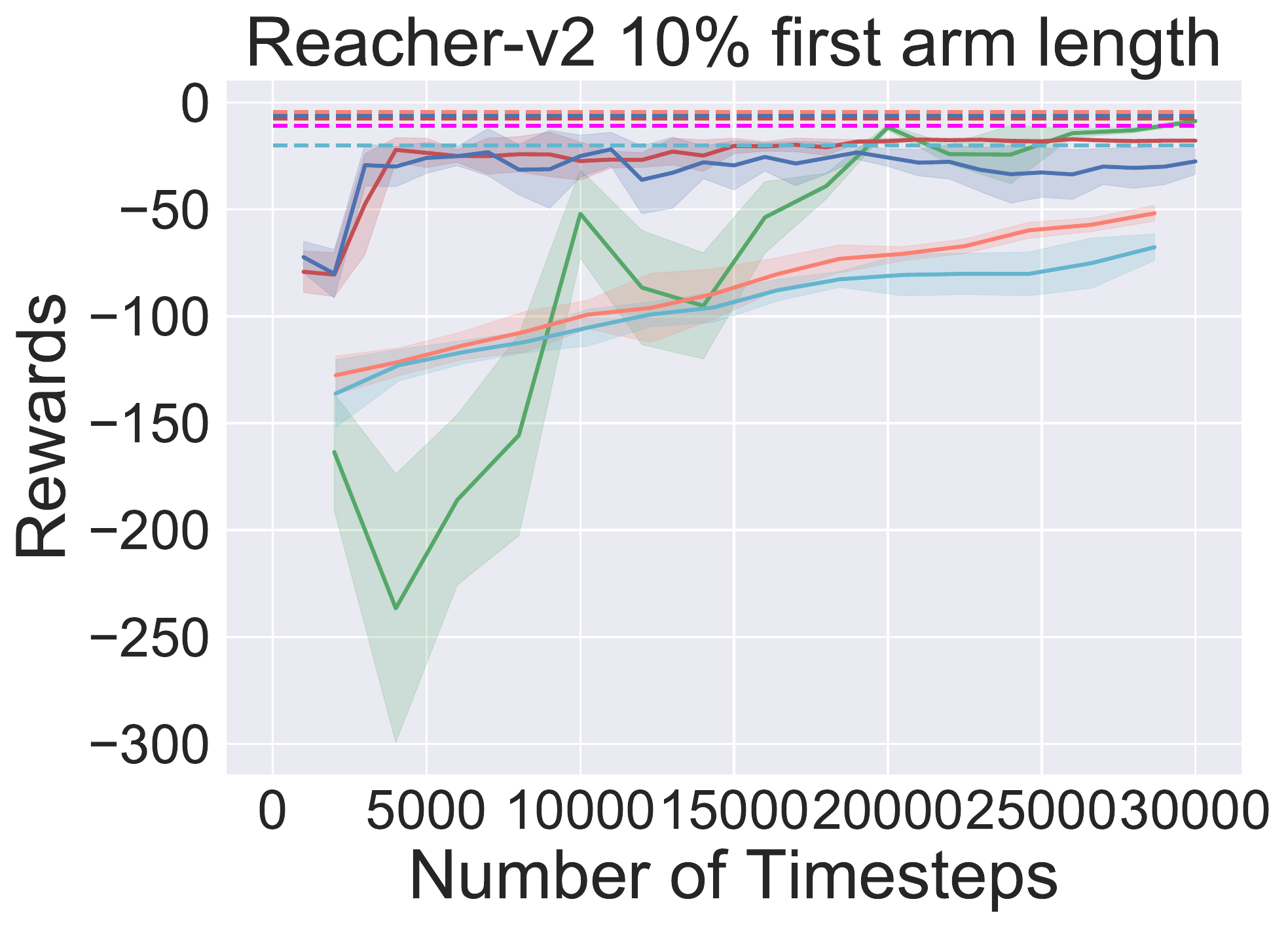}

	\centering
	\includegraphics[width = 0.245\textwidth, trim=0cm 0cm 0cm 0cm, clip=true]{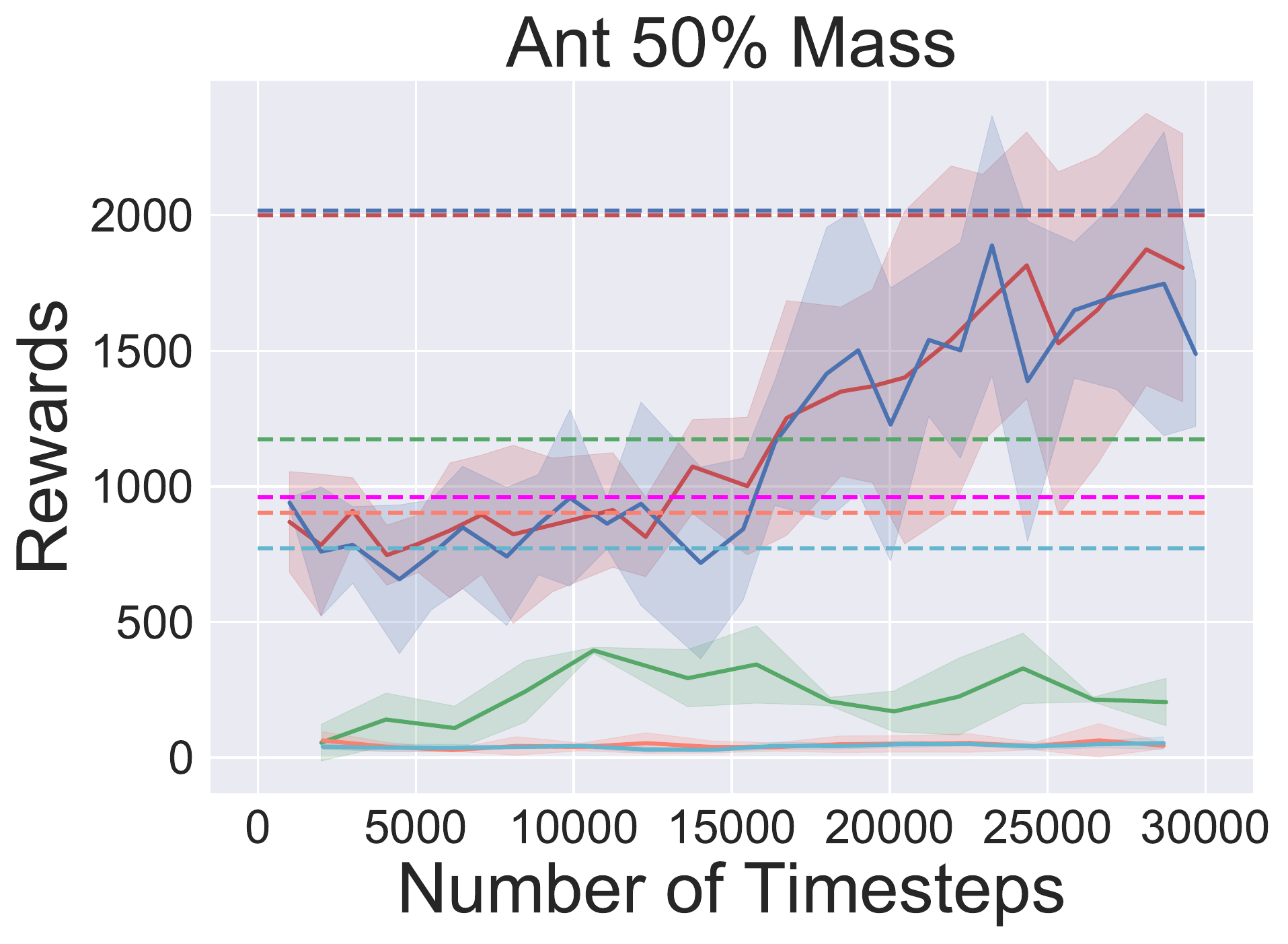}
	\includegraphics[width = 0.245\textwidth, trim=0cm 0cm 0cm 0cm, clip=true]{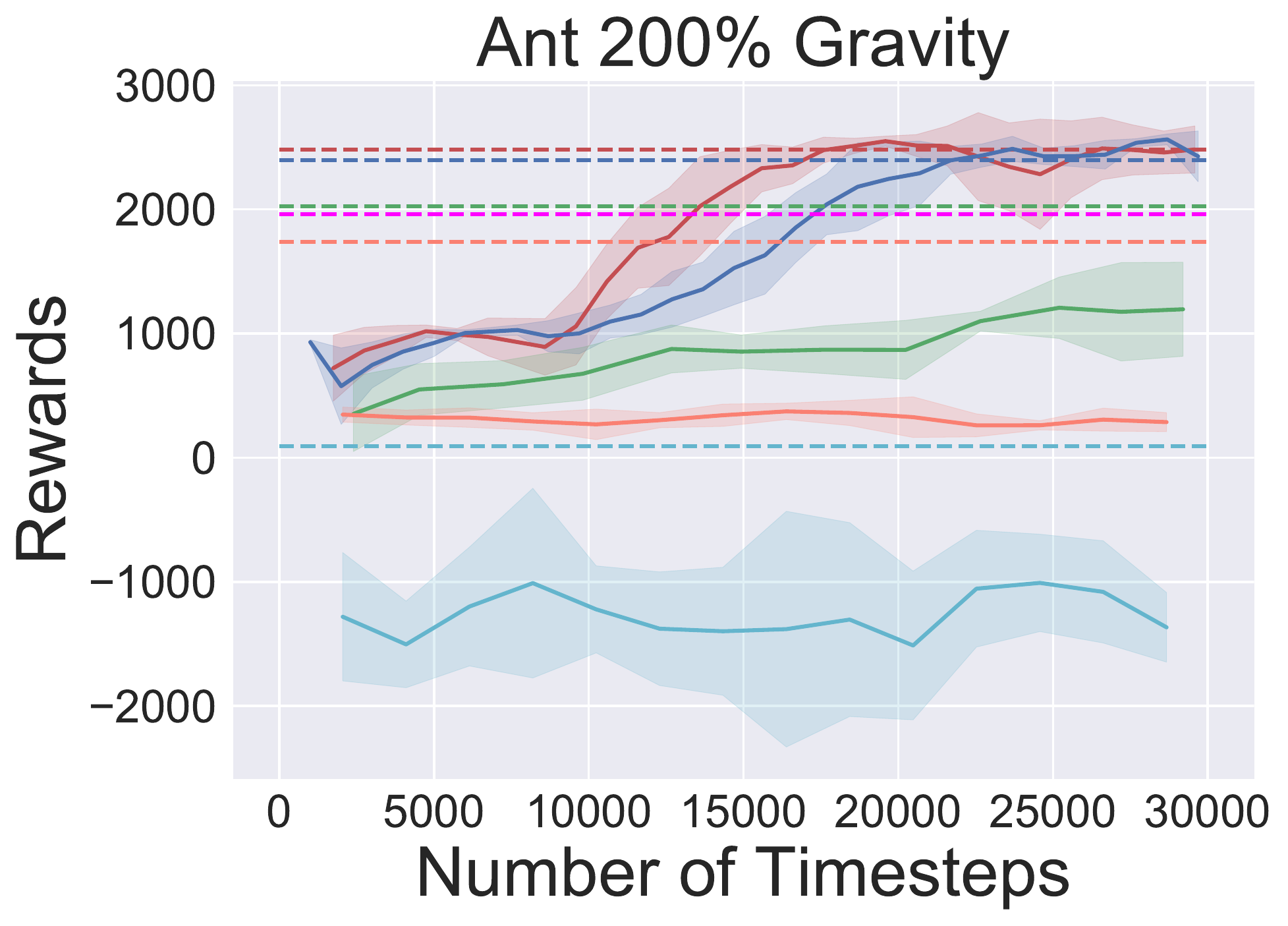}
	\includegraphics[width = 0.245\textwidth, trim=0cm 0cm 0cm 0cm, clip=true]{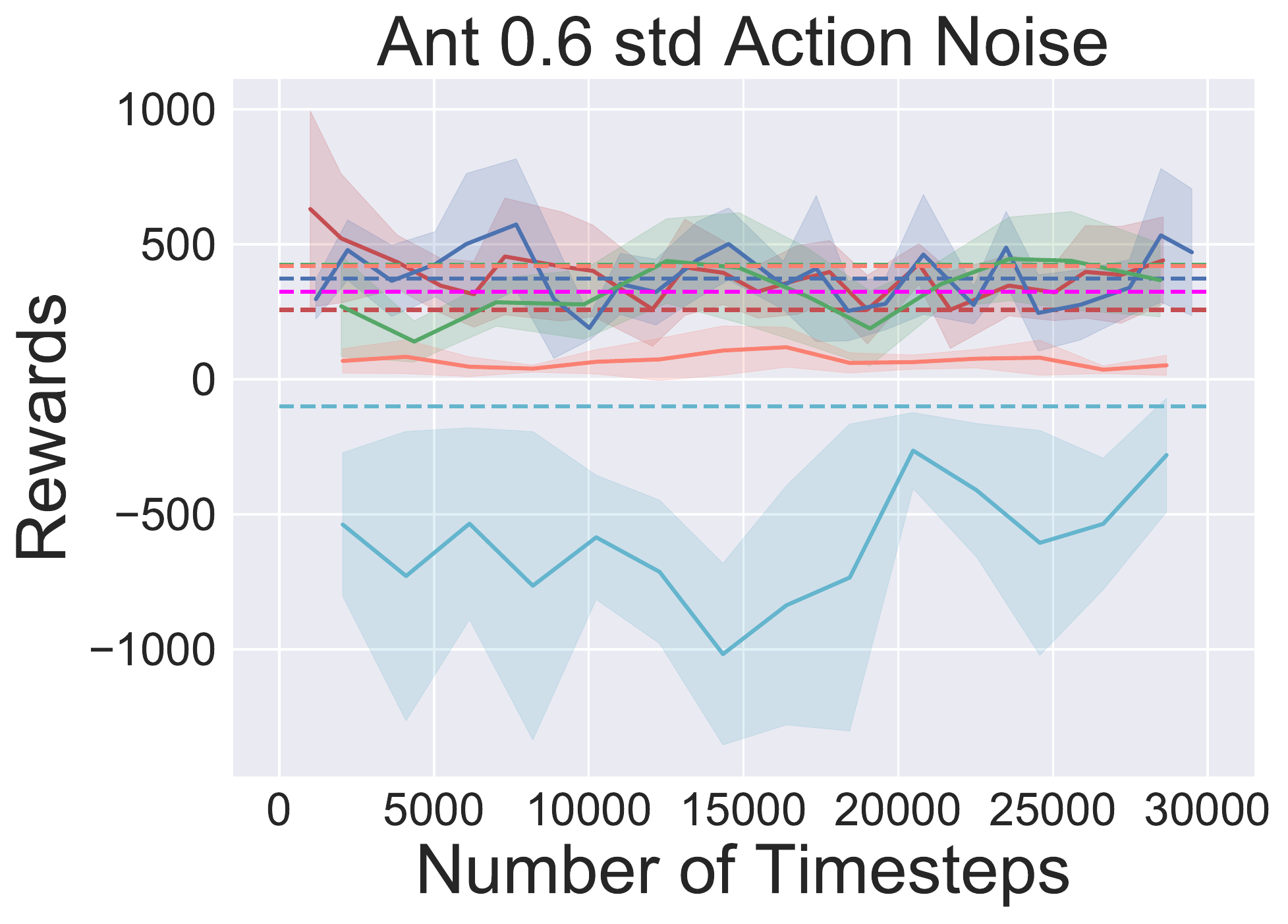}
	\includegraphics[width = 0.245\textwidth, trim=0cm 0cm 0cm 0cm, clip=true]{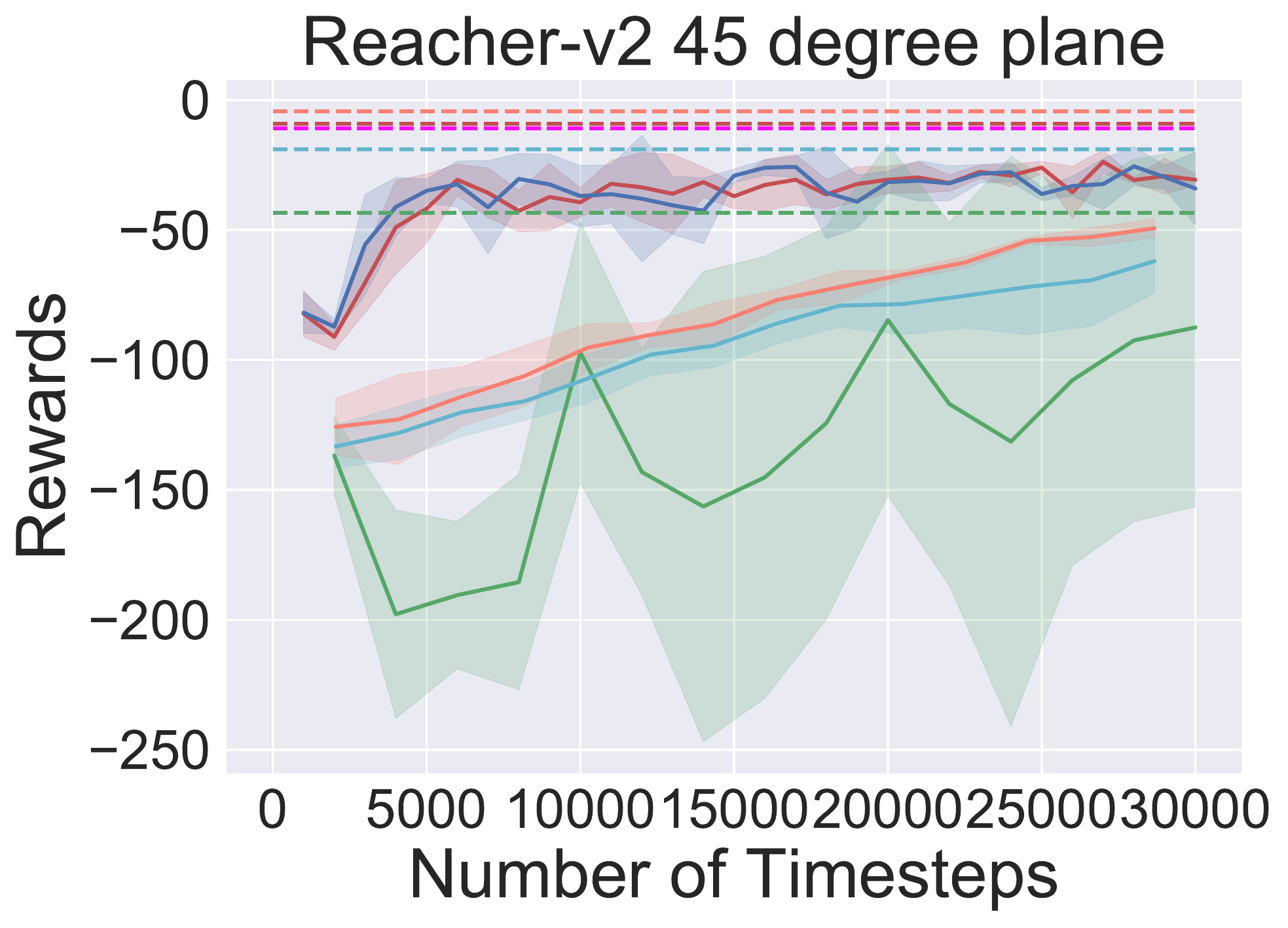}
	
	\centering
	\includegraphics[width = 1\textwidth, trim=0cm 0cm 0cm 0cm, clip=true]{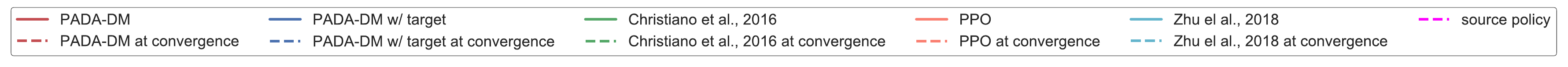}
  \caption{We plot the learning curves across 5 random seeds on a number of tasks. The title of each plot correspounds to the perturbation in the target domain, e.g., HalfCheetah Mass 150\% means in mass of the agent in the target domain is 150\% of that in the source domain.
  }
	\label{performance}
\end{figure*}

\section{Experiments} \label{sec:experiments}
In this section we compare our approach with the state-of-the-art methods for policy adaptation~\cite{christiano2016transfer,zhu2018reinforcement,ppo,finn2017model} and show that we achieve competitive results more efficiently. 
We also test the robustness of the approaches on multi-dimensional perturbations. We then compare to domain randomization and meta-learning approaches and show how they can be combined with our approach. We provide further experiments in Appendix \ref{appendix:sup_exp}. 

Following the same experiment setup as~\cite{christiano2016transfer}, We focus on standard OpenAI Gym~\cite{gym} and Mujoco \cite{mujoco} control environments such as HalfCheetah, Ant, and Reacher. We perturb the environments by changing their parameters such as mass, gravity, dimensions, motor noise, and friction. More details of task designs are in Appendix~\ref{appendix:env_des}. 

\subsection{Comparison with Existing Approaches}

We compare our methods (PADA-DM, PADA-DM with target policy) with the following state-of-the-art methods for policy adaptation. The names correspond to the learning curves shown in Figure~\ref{performance}. 

\textbf{Christiano et al., 2016:}  \cite{christiano2016transfer} uses a pre-trained policy $\pi\sups$ and source dynamics $f\sups$, to learn an inverse dynamics model $\phi \colon \mathcal{A} \times \mathcal{S} \times \mathcal{S} \rightarrow [0,1]$, where $\phi(a|s,s')$ is the probability of taking action $a$ that leads to $s'$ from $s$.\footnote{In general an inverse model is ill-defined without first specifying a policy, i.e., via Bayes rule, $\phi(a|s,s') \propto P(a, s, s') = f\supt(s'|s,a) \pi(a|s)$. Hence one needs to first specify $\pi$ in order to justify the existence of an inverse model $\phi(a|s,s')$.} That is, $\pi\supt(s) = \phi(s, f\sups(s,\pi\sups(s)))$.

\textbf{Zhu et al., 2018:} 
\cite{zhu2018reinforcement} 
proposed an approach for training policies in the target domain with a new reward 
$\lambda R(s_h) + (1-\lambda)R_{gail}(s_h,a_h)$, $\lambda \in [0,1]$. Here $R_{gail}$ is from the discriminator from GAIL. Note that this baseline has access to true reward signals while ours do not. 

For additional baselines, we also show the performance of directly running Proximal Policy Optimization ({\bf PPO}) \cite{ppo} in the target environment, as well as directly using the \textbf{source policy} in the perturbed environment without adapation. 

\begin{figure}[H]

	\centering
	\includegraphics[width = 0.23\textwidth, trim=0cm 0cm 0cm 0cm, clip=true]{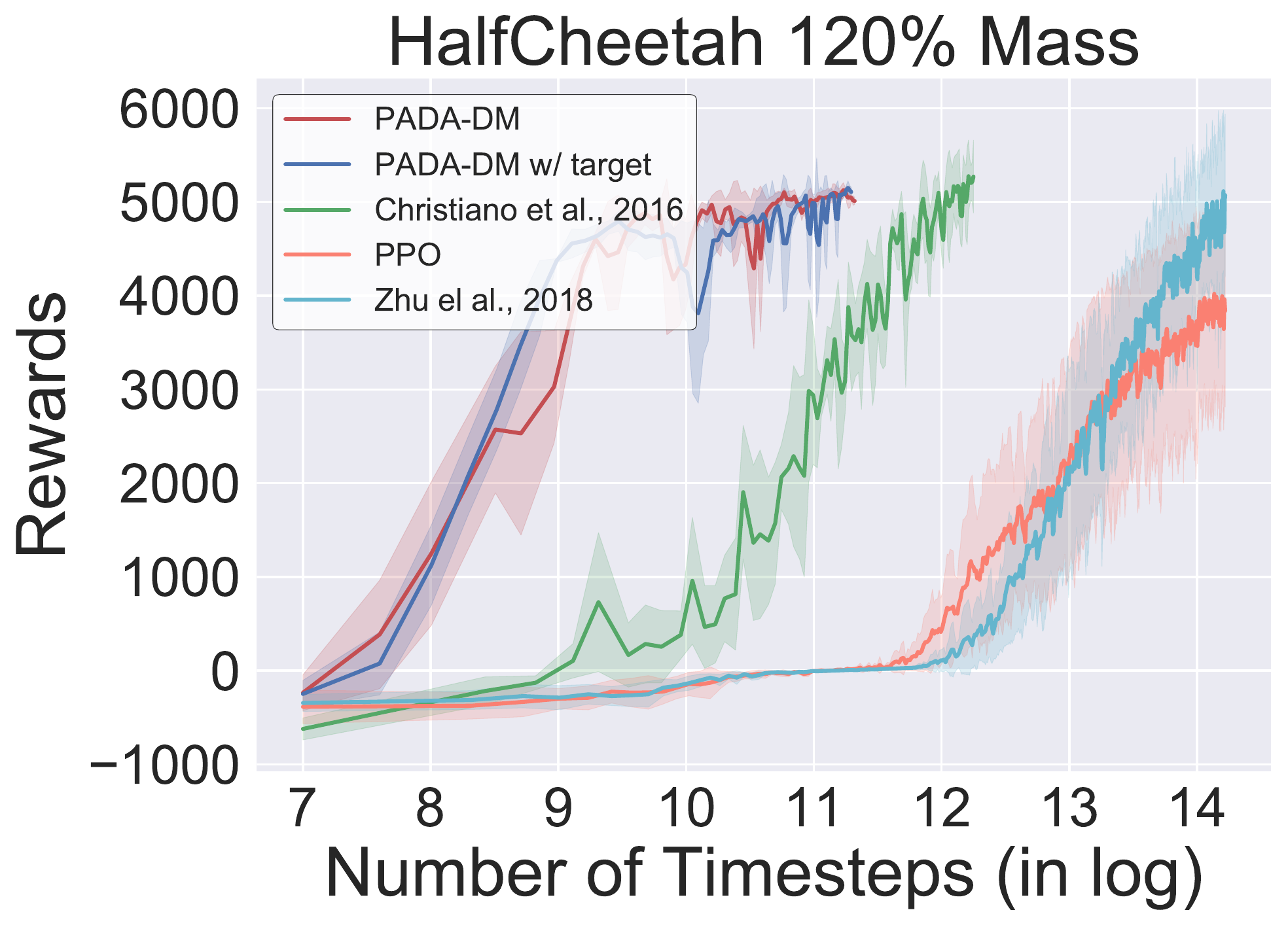}
	\includegraphics[width = 0.23\textwidth, trim=0cm 0cm 0cm 0cm, clip=true]{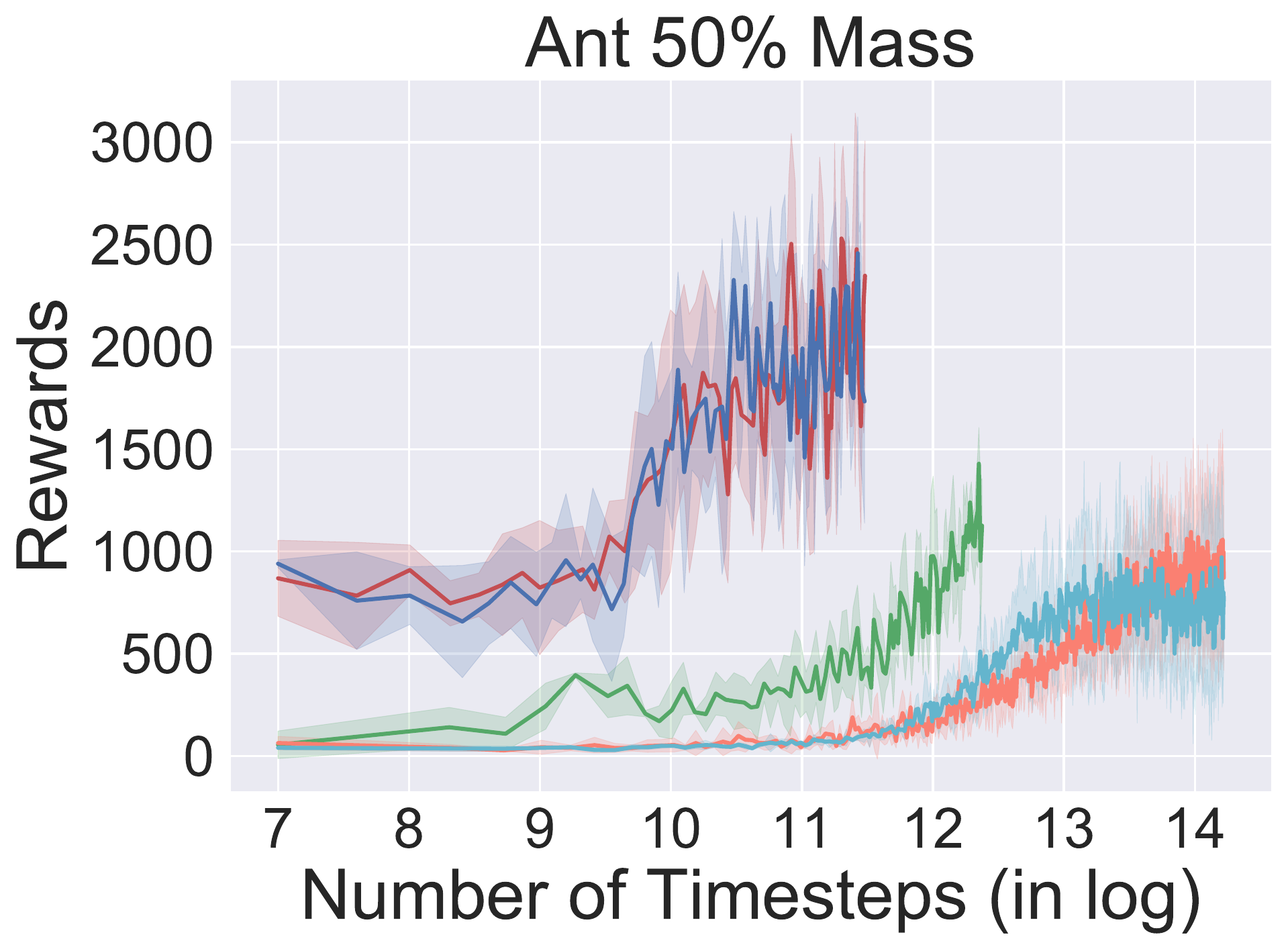}
	
	\caption{The long-term learning curves. The x-axis is the number of timesteps in natural logarithm scale. }
	\label{long_plots}
\end{figure}

\textbf{Results.} Figure~\ref{performance} demonstrates the sample efficiency of our methods compared to the other methods and baselines. Both PADA-DM and PADA-DM with target policy converge within 10k to 50k training samples in the target environments. In contrast, \cite{christiano2016transfer} requires 5 times more samples than our methods on average, and \cite{zhu2018reinforcement} and PPO require about 30 times more. At convergence, our methods obtain the highest episodic rewards in 7 out of 8 tasks above among the policy adaptation methods. The baseline performance of PPO is better than the policy adaptation methods in HalfCheetah and Reacher (recall PPO uses true reward signals), but it takes significantly longer as shown in Fig.~\ref{performance}. Note that in the Ant environment, even at convergence our methods outperform PPO as well.  

The only task where our methods failed to achieve top performance is Ant-v2 0.6 std motor noise. In this environment, the action noise causes high divergence between the target and source environments, making it hard to efficiently model the domain divergence. All the adaptation methods deliver bad performance in this case, indicating the difficulty of the task. 

We observe that the learning curves of PADA-DM and PADA-DM with target policy are similar across all tasks without sacrificing efficiency or performance. The target policy can be directly used without any MPC step.

To further illustrate the sample efficiency of our method, we compare the long-term learning curves in Fig. \ref{long_plots}. We plot the learning curves up to convergence of each method. We further include a long-term version of Fig \ref{performance} and the hyperparameters in the Appendix.

\subsection{Performance on Multi-Dimensional Perturbations}

We further evaluate the robustness of our methods by perturbing multiple dimensions of the target environment (Fig. \ref{vector}). Note that online adaptation is particularly useful for multiple-dimension perturbations, because they generate an exponentially large space of source environments that are hard to sample offline. In Fig. \ref{vector}(b), we show that even when perturbing 15 different degrees of freedom of the target environment, our adaptation method can still achieve competitive performance at a much faster rate than all the other methods. We record the details of the configurations of the target environments in Appendix \ref{app-multi-design}.
\vspace*{-\baselineskip}

\begin{figure}[H]
	\begin{tabular}{@{}l@{}}
		\subfigure[]{\includegraphics[width = 0.23\textwidth]{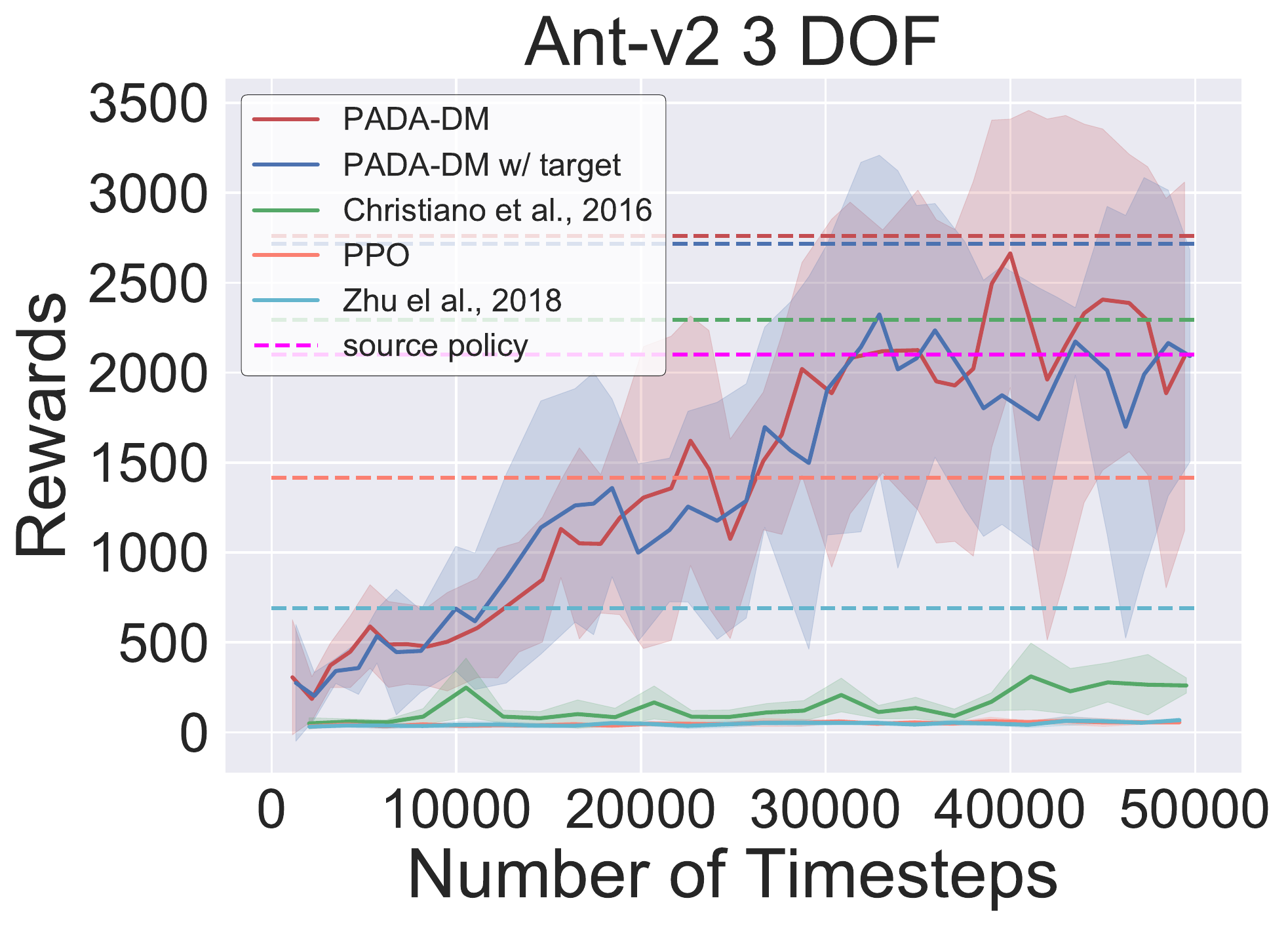}}
		\subfigure[]{\includegraphics[width = 0.23\textwidth]{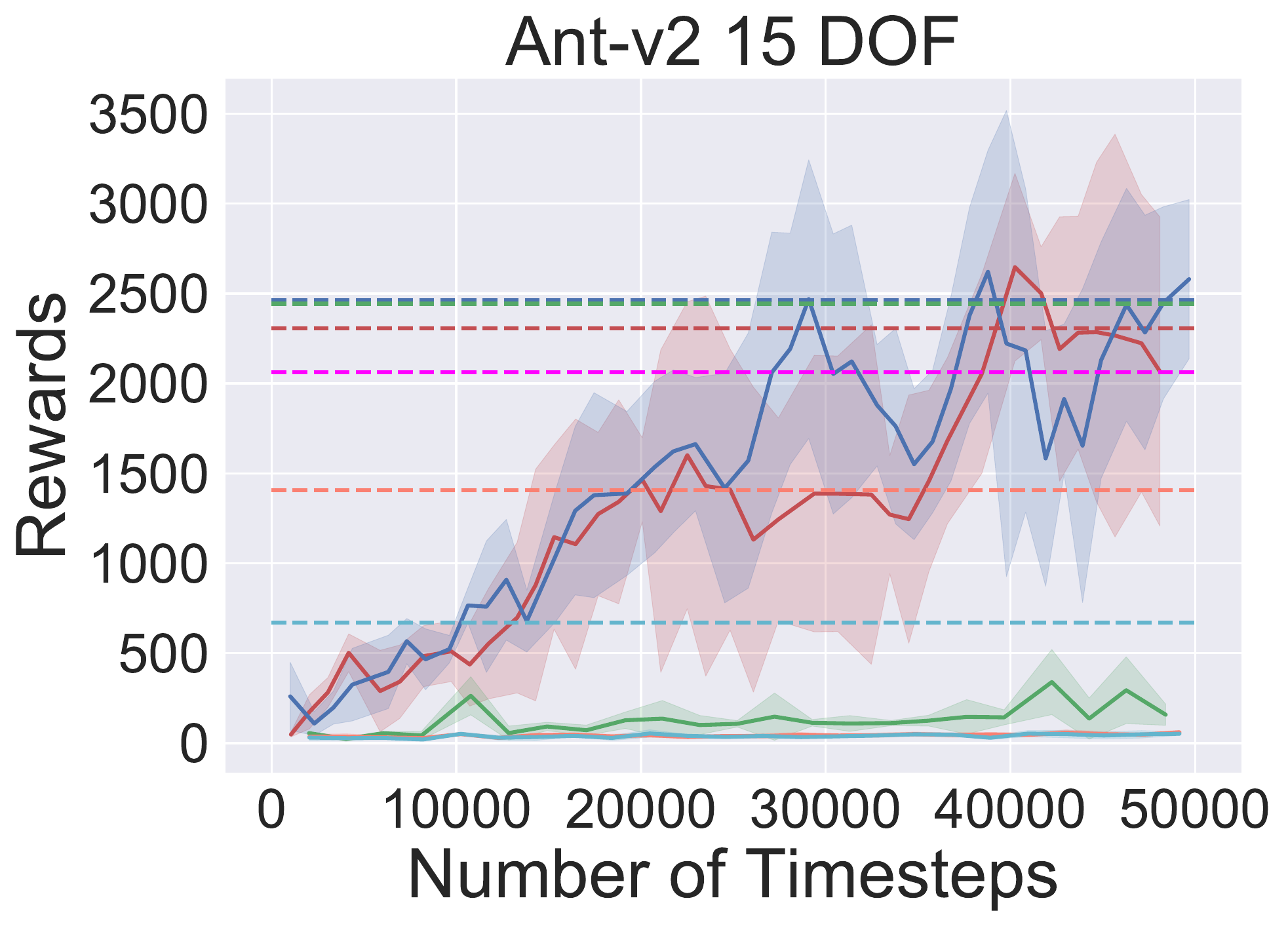}}
	\end{tabular}
	\caption{Learning curves for: changing 3 (a) and 15 (b) configurations in the target environment.}
	\label{vector}
\end{figure}
\subsection{Comparison with Domain Randomization and Meta-Learning} \label{exp:drmaml}
We now compare with domain randomization and meta-learning approaches and show how they can be combined with our methods. 
\begin{table}[h]
\small
\centering
\begin{tabular}{|c|c|c|}
\hline
& \begin{tabular}{@{}c@{}}single \\ source domain\end{tabular} & \begin{tabular}{@{}c@{}}randomized \\ source domains\end{tabular}      \\ \hline
\begin{tabular}{@{}c@{}}no adaptation in\\  target domain\end{tabular}  & source policy        & DR       \\ \hline
\begin{tabular}{@{}c@{}}adaptation in\\  target domain\end{tabular}     & PADA-DM              & \begin{tabular}{@{}c@{}}PADA-DM w/ DR; \\ MAML\end{tabular}  \\ \hline
\end{tabular}
\caption{Relationship of 5 methods in our ablation study.}
\label{trel}
\end{table}

\textbf{Domain randomization (DR)}~\cite{peng2018sim}: During the training in the source domain, at the beginning of each episode, we randomly sample a new configuration for the source domain within a specified distribution. The total number of training samples here is the same as that for training the source policy. The policy outputed from DR is used in the target environment without further adaptation.  

\textbf{MAML:} We adopt the RL version of MAML \cite{finn2017model} that meta-learns a model-free policy over a distribution of source environments and performs few-shot adaptation on the target environment.

We compare the following methods: (1) source policy trained in a fixed source environment, (2) domain randomization, (3) PADA-DM, (4) PADA-DM with DR (using domain randomization's output as $\pi\sups$ for PADA), 
and (5) MAML. 
Table \ref{trel} shows how these methods relate to each other. 

For the first four methods, we train the source policy with 2m samples and perform adaptation with 80k samples.  For MAML, we use 500 batches of meta-training (400m samples), and adapt 10 times with 80k samples in the target domain. We perform 100 trajectories across 5 random seeds in the target domain for each method and compare the episodic reward in Figure \ref{domainrand}. 
We first observe that when the target domains lie in the distribution of domain randomization ($70\% - 130\%$), domain randomization outperforms source policy significantly, but does not help when the target lies far away from the distribution, which is the most notable shortcoming of domain randomization. Note that using domain randomization in conjunction with our adaptation method usually yields the best results. Domain randomization often provides robustness within the environments'  distribution, and online adaptation in real target environment using our approach further ensures robustness to out-of-distribution environments. We also observe that our method provides the most stable performance  given the smallest test variances. We include additional experiments and detailed numbers of the performances of all methods (mean and standard deviations) in Appendix \ref{app_exp_dr}.

\begin{figure}[H]
	\begin{tabular}{@{}l@{}}
	\subfigure[]{\includegraphics[width = 0.22\textwidth]{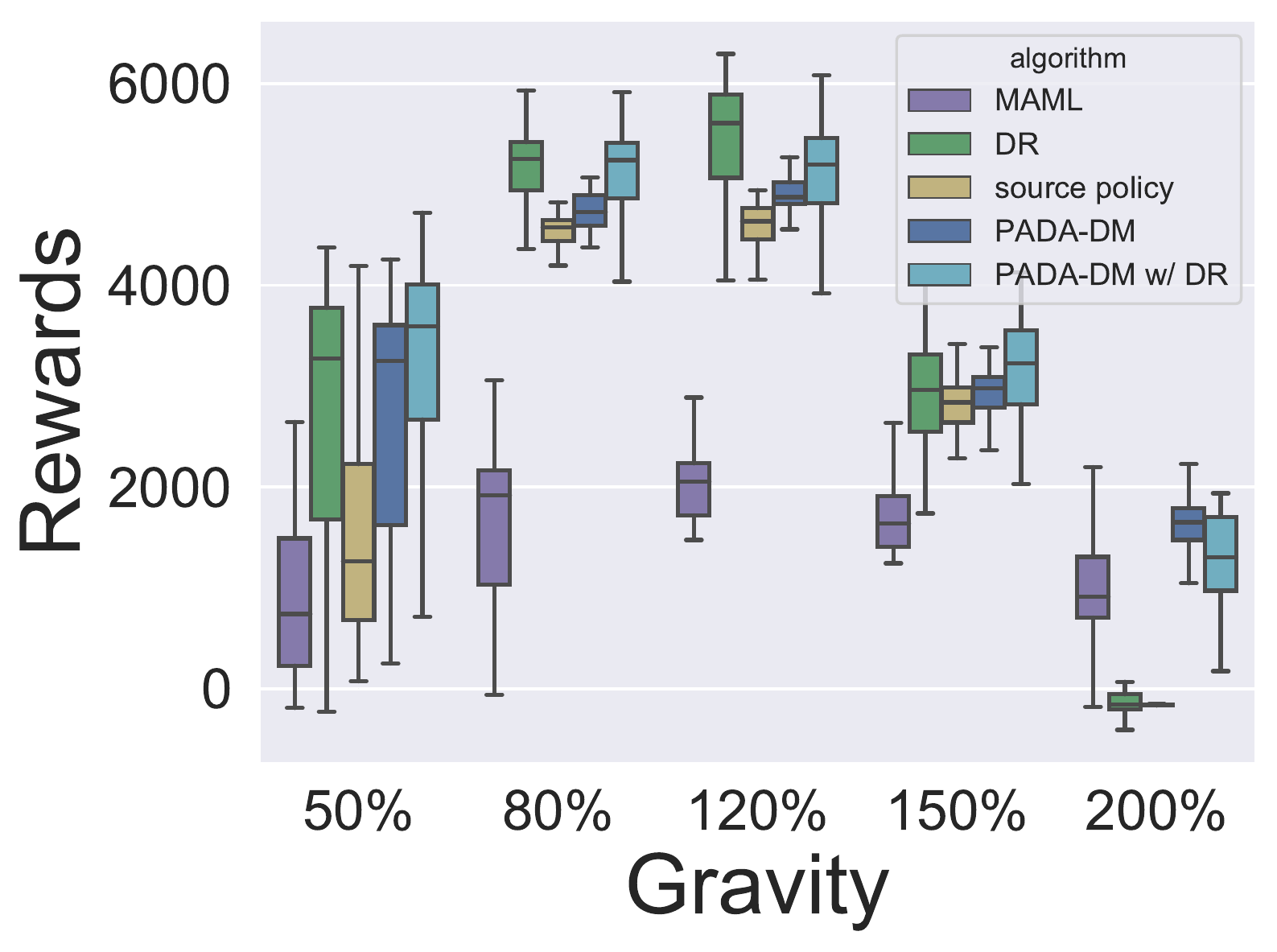}}
	\subfigure[]{\includegraphics[width = 0.22\textwidth]{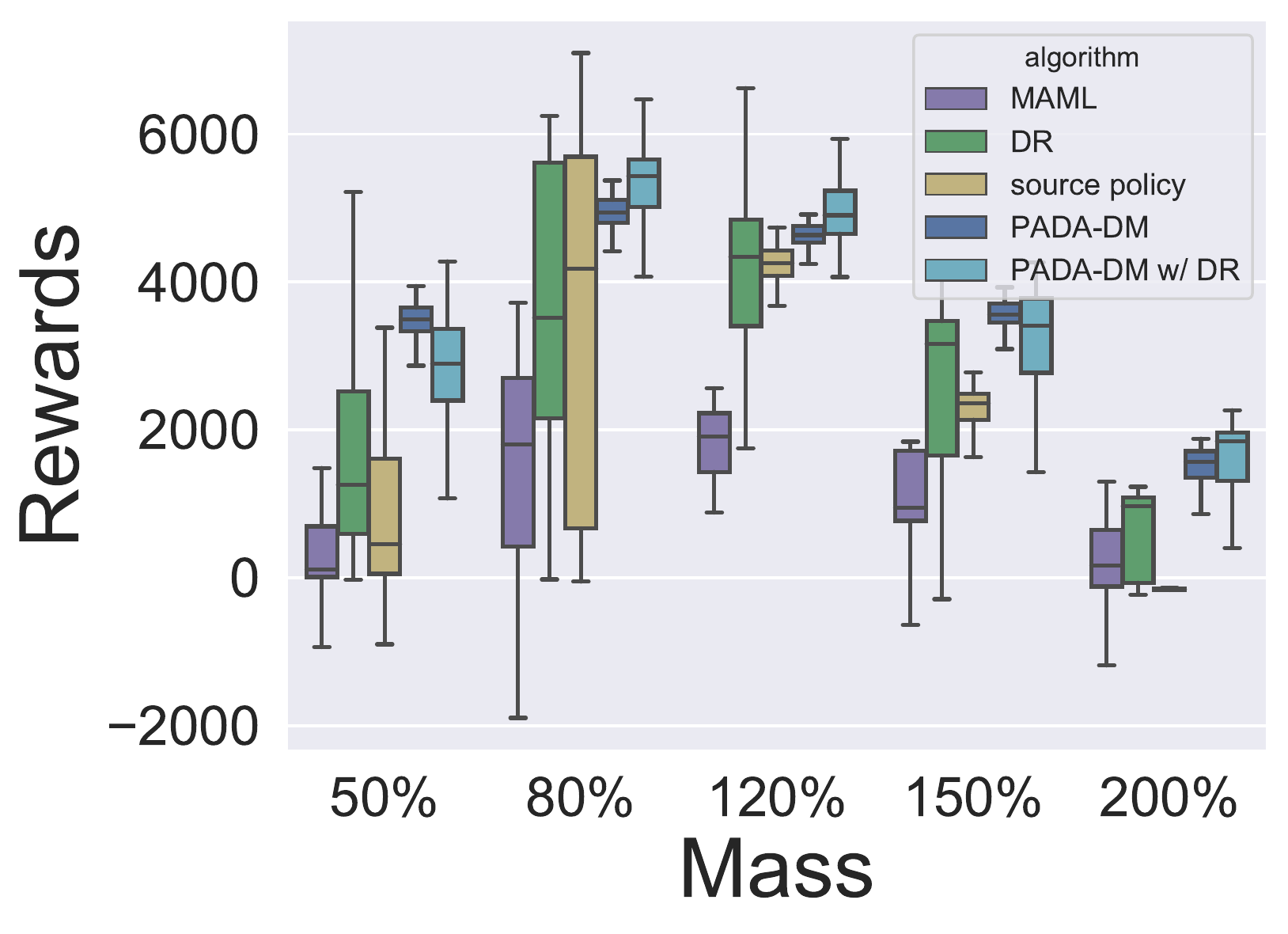}}
    \end{tabular}
	\caption{Ablation experiments using domain randomization and meta-learning. (a) Varying gravity. (b) Varying mass.} 
	\label{domainrand}
\end{figure}
\section{Conclusion}
We proposed a novel policy adaptation algorithm that combines techniques from model-based RL and no-regret online learning. We theoretically proved that our methods generate trajectories in the target environment that converge to those in the source environment. We established the rate of convergence of the algorithms. We have shown that our algorithm achieves competitive performance across a diverse set of continuous control tasks with better sample efficiency. A natural extension is to use our approach on simulation-to-real problems in combination with domain randomization and meta-learning. 

As our experiments indicated that the combination of domain randomization and our online adaptation approach together often yields good results,  for future work, we plan to investigate general theoretical framework for combining domain randomization and online adaptive control techniques.

\bibliography{refs}
\bibliographystyle{icml2020}

\newpage
\onecolumn
\appendix

\section{Detailed Analysis of Algorithm~\ref{alg:alg_theory} (Proof of Theorem~\ref{them:main})}
\label{app:proof_of_main_theorem}

In this section, we provide detailed proof for Theorem~\ref{them:main}.

Consider a Markov Chain $p: \calS \to \Delta(\calS)$ over horizon $H$. Denote $d_{p}$ as the induced state distribution under $p$ and $\rho_{p}$ as the induced state trajectory distribution under $p$.  

\begin{lemma}
Consider two Markov Chains $p_i:\mathcal{S}\to \Delta(\mathcal{S})$ with $i\in \{1,2\}$. If 
\begin{align*}
    \mathbb{E}_{s\sim d_{p_1}} \left[\| p_1(\cdot|s) - p_2(\cdot|s)\|\right] \leq \delta,
\end{align*} then for trajectory distributions, we have:
\begin{align*}
    \|\rho_{p_1} - \rho_{p_2}\| \leq O(H\delta).
\end{align*}
\label{lemma:divergence}
\end{lemma}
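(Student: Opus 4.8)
The plan is to reduce the trajectory-level total variation distance to a telescoping sum of per-step transition discrepancies, and then match that sum against the hypothesis using the definition of the average state distribution $d_{p_1}$. I will assume $p_1$ and $p_2$ share a common initial state distribution, as is implicit in the setup.

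First I would introduce a family of \emph{hybrid} trajectory distributions $\rho^{(k)}$ for $k = 0,1,\dots,H$, where $\rho^{(k)}$ is the law of the trajectory obtained by following $p_1$ for the first $k$ transitions and $p_2$ for the remaining $H-k$ transitions. By construction $\rho^{(H)} = \rho_{p_1}$ and $\rho^{(0)} = \rho_{p_2}$, so the triangle inequality gives $\|\rho_{p_1} - \rho_{p_2}\| \le \sum_{k=1}^{H} \|\rho^{(k)} - \rho^{(k-1)}\|$. The point of this interpolation is that $\rho^{(k)}$ and $\rho^{(k-1)}$ differ only in which kernel, $p_1$ or $p_2$, is applied at the single transition $k$.

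Next I would bound each consecutive term. The two measures $\rho^{(k)}$ and $\rho^{(k-1)}$ induce the identical law on the prefix $(s_0,\dots,s_{k-1})$, since both run $p_1$ there; in particular $s_{k-1}$ is distributed according to the step-$(k-1)$ state distribution $d_{p_1;k-1}$ under $p_1$. Using the elementary identity that two joint laws with a common first-coordinate marginal $m$ satisfy $\|\mu-\nu\| = \mathbb{E}_{x\sim m}\|\mu(\cdot|x)-\nu(\cdot|x)\|$, together with the data-processing inequality (applying the common continuation kernel $p_2$ to the two candidate laws of $s_k$ cannot increase total variation), I would obtain $\|\rho^{(k)} - \rho^{(k-1)}\| \le \mathbb{E}_{s\sim d_{p_1;k-1}}\!\left[\|p_1(\cdot|s) - p_2(\cdot|s)\|\right]$.

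Finally I would sum over $k$ and invoke the hypothesis. The telescoping bound yields $\|\rho_{p_1}-\rho_{p_2}\| \le \sum_{k=1}^{H}\mathbb{E}_{s\sim d_{p_1;k-1}}\|p_1(\cdot|s)-p_2(\cdot|s)\|$, which is exactly $H$ times an average of the per-step transition gaps. Recognizing this average as $d_{p_1} = \tfrac{1}{H}\sum_h d_{p_1;h}$ (up to a harmless shift of the summation index) lets me substitute the assumed bound, giving $\|\rho_{p_1}-\rho_{p_2}\| \le H\delta = O(H\delta)$. The main obstacle is making the per-step reduction airtight: precisely justifying that the discrepancy between consecutive hybrids localizes to transition $k$ and is dominated by the transition total variation evaluated under $p_1$'s own state distribution (the data-processing step), while the off-by-one between the telescoping indices $d_{p_1;k-1}$ and the averaging convention defining $d_{p_1}$ is immaterial inside the $O(\cdot)$.
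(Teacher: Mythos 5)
Your proof is correct and is essentially the paper's own argument in different packaging: the paper recursively peels one transition at a time off the explicit sum $\sum_\tau|\prod_h p_1 - \prod_h p_2|$, and the terms it extracts are exactly your consecutive hybrid differences $\|\rho^{(k)}-\rho^{(k-1)}\|$, with your data-processing step appearing in the paper as the observation that the $p_2$-suffix sums to one. The only difference is that you state explicitly the common-initial-distribution assumption and the $h$ versus $h-1$ indexing slack, both of which the paper leaves implicit (and indeed needs, since the lemma would fail under differing initial distributions).
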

The above lemma implies that if the Markov chain $p_2$ can predict the Markov chain $p_1$ \emph{under the state distribution induced by $p_1$}, then we can guarantee that the state-wise trajectory distributions from $p_1$ and $p_2$ are close as well. 
\begin{proof}
Denote $\rho_{p,s_1,\dots s_h}$ as the trajectory distribution induced by $p$ conditioned on the first $h$ many states are equal to $s_1,\dots, s_h$. Denote $p(\cdot |s_0)$ as the initial state distribution for $s_1$ with $s_0$ being a faked state. 
By definition, we have:
\begin{align*}
&\| \rho_{p_1} - \rho_{p_2} \|  =  \sum_{\tau} \left\lvert \rho_{p_1}(\tau) - \rho_{p_2}(\tau)  \right\rvert  \\
& = \sum_{s_1,\dots s_H} \left\lvert \prod_{h=1}^H p_1(s_h|s_{h-1}) - \prod_{h=1}^H p_2(s_h | s_{h-1}) \right\rvert \\
& =  \sum_{s_1,\dots s_H} \left\lvert  \prod_{h=1}^H p_1(s_h|s_{h-1}) - p_1(s_1|s_{0})\prod_{h=2}^H p_2(s_h | s_{h-1}) + p_1(s_1|s_{0})\prod_{h=2}^H p_2(s_h | s_{h-1})  - \prod_{h=1}^H p_2(s_h|s_{h-1})\right\rvert   \\
& \leq \sum_{s_1} p_1(s_1|s_0)  \sum_{s_2,\dots, s_{H}}\left\lvert  \prod_{h=2}^H p_1(s_h|s_{h-1}) - \prod_{h=2}^H p_2(s_h|s_{h-1})  \right\rvert + \sum_{s_1} \left\lvert  p_1(s_1|s_0) - p_2(s_1|s_0)\right\rvert \left(\sum_{s_2,\dots,s_H} \prod_{h=2}^H p_2(s_h|s_{h-1})\right)\\
& = \Exp_{s_1\sim p_1} \left[  \| \rho_{p_1, s_1} - \rho_{p_2, s_1}\|  \right]  + \| p_1(\cdot |s_0) - p_2(\cdot | s_0)\| \\
& \leq \Exp_{s_1,s_2\sim p_1} \left[ \| \rho_{p_1,s_1,s_2} - \rho_{p_2,s_1,s_2} \| \right] + \| p_1(\cdot |s_0) - p_2(\cdot | s_0)\| + \Exp_{s_1\sim d_{\pi_1; 1}}[ \| p_1(\cdot|s_1) - p_2(\cdot|s_1) \|].
\end{align*}  Recursively applying the same operation on $ \| \rho_{p_1, s_1} - \rho_{p_2, s_1}\|$ to time step $H$, we have:
\begin{align*}
&\| \rho_{p_1} - \rho_{p_2} \| \leq \sum_{h=1}^H  \Exp_{s_h\sim d_{p_1;h}} \left[ \| p_1(\cdot|s_h) - p_2(\cdot |s_h) \|\right] \leq H \delta,
\end{align*} where we recall $d_{\pi} = \sum_{h=1}^H d_{\pi;h} / H$ by definition. Extension to continuous state can be achieved by simply replaying summation by integration. 

\end{proof}

The next lemma shows that by leveraging the no-regret property of FTL, we can learn a locally accurate model. 
\begin{lemma}[Local Accuracy of the Learned Model]
\label{lemma:local_model}
Denote the sequence of models learned in Alg.~\ref{alg:alg_theory} as $\{\hat{f}_1, \dots, \hat{f}_T\}$, there exists a model $\hat{f} \in \{\hat{f}_1,\dots, \hat{f}_T\}$ such that:
\begin{align*}
    \mathbb{E}_{s\sim d_{\pi_{\hat{f}}}}\left[\mathbb{E}_{a\sim U(\mathcal{A}\supt)}\left[ D_{KL}(f\supt(\cdot|s,a), \hat{f}(\cdot|s,a)) \right]\right] \leq  O(1/T),
\end{align*} where $\pi_{{f}}(s)\triangleq \argmin_{a\in\mathcal{A}\supt} \| {f}(\cdot|s,a) - f\sups(\cdot | s, \pi\sups(s))$ for all $s\in\calS$ for any $f$.
\end{lemma}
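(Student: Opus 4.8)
The plan is to recognize the learned model sequence $\{\hat f_e\}$ as the Follow-the-Leader (FTL) iterates on the loss sequence $\{\ell_e\}$ and then run the standard no-regret-to-performance reduction. Concretely, at iteration $e$ the algorithm first forms $\hat f_e = \argmin_{f\in\mathcal{F}}\sum_{i=1}^{e-1}\ell_i(f)$ (the empty sum giving the arbitrary initialization $\hat f_1$), uses it to define the policy $\pi\supt_e = \pi_{\hat f_e}$ via Eq.~\ref{eq:policy_f}, and only afterwards incurs the loss $\ell_e$, whose state distribution $d_{\pi\supt_e}$ is induced by that very policy. This is exactly FTL, and the data-aggregation step (together with the MLE of Eq.~\ref{eq:mle}) is what realizes the cumulative objective $\sum_{i=1}^e \ell_i$.

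First I would pin down the comparator. By Assumption~\ref{assum2:Realizability} we have $f\supt\in\mathcal{F}$, and since $D_{KL}(f\supt(\cdot|s,a), f\supt(\cdot|s,a))=0$ pointwise, $\ell_e(f\supt)=0$ for every $e$; combined with non-negativity of the KL divergence this gives $\min_{f\in\mathcal{F}}\sum_{e=1}^T\ell_e(f)=0$. Next I would invoke the FTL regret bound to obtain
\[
\sum_{e=1}^T \ell_e(\hat f_e) - \min_{f\in\mathcal{F}}\sum_{e=1}^T \ell_e(f) \leq \tilde{O}(1),
\]
so that $\frac{1}{T}\sum_{e=1}^T\ell_e(\hat f_e)\leq \tilde{O}(1/T)$. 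A simple averaging (pigeonhole) argument then produces an index $e^\star$ with $\ell_{e^\star}(\hat f_{e^\star})\leq \frac{1}{T}\sum_{e=1}^T\ell_e(\hat f_e)\leq \tilde{O}(1/T)$. Finally, since $\pi\supt_{e^\star}=\pi_{\hat f_{e^\star}}$, the quantity $\ell_{e^\star}(\hat f_{e^\star})$ is by definition $\mathbb{E}_{s\sim d_{\pi_{\hat f_{e^\star}}}}\big[\mathbb{E}_{a\sim U(\mathcal{A}\supt)}[D_{KL}(f\supt(\cdot|s,a),\hat f_{e^\star}(\cdot|s,a))]\big]$, so taking $\hat f=\hat f_{e^\star}$ closes the lemma.

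The conceptual crux, and the reason a plain supervised-learning argument does not suffice, is that each loss $\ell_e$ is measured under the state distribution $d_{\pi\supt_e}$ generated by the \emph{current} model's own policy, so the learner faces a training distribution that shifts as the model changes. The no-regret reduction is precisely the device that tames this covariate shift: it bounds the sum of each model's loss on its own induced distribution by the loss of the best fixed model in hindsight, which realizability forces to be zero. The main technical obstacle I anticipate is justifying the FTL regret bound itself for the KL/log-loss objective, i.e.\ establishing the $\tilde{O}(1)$ cumulative regret (via exp-concavity or strong convexity of the log-loss, or by simply citing a generic no-regret online learner and noting the reduction is agnostic to which one is used). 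A secondary gap, should one want the finite-sample rather than the idealized population version, is a concentration argument relating the empirical MLE in Eq.~\ref{eq:mle} to the population minimizer of $\sum_i\ell_i$; the lemma as stated works directly with the population losses $\ell_e$ and so sidesteps this.
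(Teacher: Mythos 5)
Your proposal is correct and follows essentially the same route as the paper: both identify the model updates as FTL on the sequence of losses $\ell_e$, invoke the $O(\log T)$ FTL regret bound for the (strongly convex / log-loss) objective, use realizability to make the best-in-hindsight comparator's loss zero, and finish by an averaging (pigeonhole) argument to extract a single good $\hat{f}_{e^\star}$ whose loss is exactly the quantity in the lemma. The only cosmetic difference is that the paper states the regret bound for the negative log-likelihood loss and then adds the entropy term $\sum_e \mathbb{E}[\log f\supt(s'|s,a)]$ to both sides to convert to KL, whereas you work with the KL form directly; these differ by a constant independent of $f$, so the arguments coincide.
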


\begin{proof}
Denote loss function $\ell_e(f)$ as:
\begin{align*}
    \ell_e(f) \triangleq  \mathbb{E}_{s\sim d_{\pi\supt_e} ,a\sim U(\mathcal{A}\supt)}\left[\mathbb{E}_{s'\sim f\supt_{s,a}}\left[ -\log( f(s' |s,a)) \right]\right].
\end{align*} 
Since Alg.~\ref{alg:alg_theory} is equivalent to running FTL on the sequence of strongly convex loss functions $\{\ell_e(f)\}_{e=1}^T$, we have \cite{shalev2012online}: 
\begin{align*}
    \sum_{e=1}^T \ell_e(\hat{f}_e) \leq \min_{f \in\mathcal{F}} \sum_{e=1}^T \ell_e(f) + O\left(\log{T}\right).
\end{align*}
Add $\sum_{e=1}^T \mathbb{E}_{s\sim d_{\pi\supt_e} ,a\sim U(\mathcal{A}\supt)}[\mathbb{E}_{s'\sim f\supt_{s,a}}\log f\supt(s'|s,a)]$ on both sides of the above inequality and using the definition of KL divergence, we have:
\begin{align*}
    &\sum_{e=1}^T \mathbb{E}_{s\sim d_{\pi\supt_e} ,a\sim U(\mathcal{A}\supt)}\left[ D_{KL}(f\supt(\cdot|s,a), \hat{f}_e(\cdot|s,a)) \right] \\
    & ~~ \leq \min_{f\in\mathcal{F}}\sum_{e=1}^T \mathbb{E}_{s\sim d_{\pi\supt_e} ,a\sim U(\mathcal{A}\supt)}\left[ D_{KL}(f\supt(\cdot|s,a), f(\cdot|s,a)) \right] + O(\log(T)) = O(\log(T)),
\end{align*} where the last equality comes from the realizability assumption $f\supt\in\mathcal{F}$.

Using the fact that the minimum among a sequence is less than the average of the sequence, we arrive:
\begin{align*}
    & \min_{\hat{f}\in \{\hat{f}_e\}_{e=1}^T}\mathbb{E}_{s\sim d_{\pi_{\hat{f}}},a\sim U(\mathcal{A}\supt)}\left[ D_{KL}(f\supt(\cdot|s,a), \hat{f}(\cdot|s, a)) \right]  \leq \tilde{O}(1/T).
\end{align*}
\end{proof}
The above lemma indicates that as $T\to \infty$,  we will learn a model $\hat{f}$ which is close to the target true model $f\supt$ under the state distribution induced by $\pi_{\hat{f}}$. But it does not state the difference between the behavior generated by $\pi_{\hat{f}}$ at the target domain and the behavior generated by $\pi\sups$ at the source domain. The next lemma uses the definition $\pi_{\hat{f}}$ to show that when executing $\pi_{\hat{f}}$ in the target domain $\calM\supt$, $\pi_{\hat{f}}$ can actually generates behavior that is similar to the behavior generated by $\pi\sups$ in the source domain $\calM\sups$. 

\begin{lemma}[The Behavior of $\pi_{\hat{f}}$] Denote $d_{\pi_{\hat{f}}}$ as the state distribution induced by $\pi_{\hat{f}}$ induced at $\calM\supt$ (target domain). 
\begin{align*}
&\mathbb{E}_{s\sim d_{\pi_{\hat{f}}}} \| f\supt(\cdot|s, \pi_{\hat{f}}(s)) - f\sups(\cdot|s,\pi\sups(s))\| \leq  O(|\mathcal{A}\supt |/\sqrt{T}) + \mathbb{E}_{s\sim d_{\pi_{\hat{f}}}}\left[\epsilon_{s, \pi\sups(s) }  \right],
\end{align*} where we recall the definition of $\epsilon$ in Assumption~\ref{assum:similarity}.
\label{lemma:behavior}
\end{lemma}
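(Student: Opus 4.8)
The plan is to insert the learned model $\hat f$ as an intermediate quantity and argue term by term. Fix a state $s$ and apply the triangle inequality to split the target-versus-source dynamics gap into a model-error term at the action $\pi_{\hat f}$ selects and the term that $\pi_{\hat f}$ is designed to minimize:
\[
\| f\supt(\cdot|s,\pi_{\hat f}(s)) - f\sups(\cdot|s,\pi\sups(s)) \| \le \| f\supt(\cdot|s,\pi_{\hat f}(s)) - \hat f(\cdot|s,\pi_{\hat f}(s)) \| + \| \hat f(\cdot|s,\pi_{\hat f}(s)) - f\sups(\cdot|s,\pi\sups(s)) \|.
\]
The second summand is exactly the objective $\pi_{\hat f}(s)=\argmin_{a\in\mathcal{A}\supt}\|\hat f(\cdot|s,a)-f\sups(\cdot|s,\pi\sups(s))\|$ evaluates to at its minimizer.

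To control that minimized term I would invoke the adaptability assumption (Assumption~\ref{assum:similarity}): there is a target action $a^\star(s)\in\mathcal{A}\supt$ with $\| f\sups(\cdot|s,\pi\sups(s)) - f\supt(\cdot|s,a^\star(s)) \|\le \epsilon_{s,\pi\sups(s)}$. Since $\pi_{\hat f}(s)$ is a minimizer, substituting the suboptimal choice $a^\star(s)$ and applying the triangle inequality once more yields
\[
\| \hat f(\cdot|s,\pi_{\hat f}(s)) - f\sups(\cdot|s,\pi\sups(s)) \| \le \| \hat f(\cdot|s,a^\star(s)) - f\supt(\cdot|s,a^\star(s)) \| + \epsilon_{s,\pi\sups(s)}.
\]
Combining the two displays bounds the whole quantity, for each $s$, by the model errors at the two (state-dependent) actions $\pi_{\hat f}(s)$ and $a^\star(s)$, plus $\epsilon_{s,\pi\sups(s)}$.

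The crux — and the origin of the $A$ factor — is converting these per-action model errors into the uniform-action-averaged error that Lemma~\ref{lemma:local_model} controls. For any fixed $a_0$ the uniform average lower-bounds a single summand by $1/A$, so $\| f\supt(\cdot|s,a_0)-\hat f(\cdot|s,a_0) \|\le A\,\mathbb{E}_{a\sim U(\mathcal{A}\supt)}\| f\supt(\cdot|s,a)-\hat f(\cdot|s,a) \|$; applying this to $a_0=\pi_{\hat f}(s)$ and $a_0=a^\star(s)$ and taking $\mathbb{E}_{s\sim d_{\pi_{\hat f}}}$ bounds the model-error contribution by $2A\,\mathbb{E}_{s\sim d_{\pi_{\hat f}}}\mathbb{E}_{a\sim U(\mathcal{A}\supt)}\| f\supt(\cdot|s,a)-\hat f(\cdot|s,a) \|$. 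Finally I would translate KL into total variation via Pinsker, $\| f\supt-\hat f \|\le\sqrt{\tfrac12 D_{KL}(f\supt,\hat f)}$, and pull the square root outside both expectations by two applications of Jensen's inequality (concavity of $\sqrt{\cdot}$, over $a$ then over $s$), so that this term is at most $\sqrt{\tfrac12\,\mathbb{E}_{s\sim d_{\pi_{\hat f}}}\mathbb{E}_{a\sim U(\mathcal{A}\supt)}D_{KL}(f\supt(\cdot|s,a),\hat f(\cdot|s,a))}$. Lemma~\ref{lemma:local_model} bounds the inner quantity by $O(1/T)$, giving $O(1/\sqrt T)$ for the square root and hence $O(A/\sqrt T)$ after the factor $2A$; adding $\mathbb{E}_{s\sim d_{\pi_{\hat f}}}[\epsilon_{s,\pi\sups(s)}]$ gives the claim.

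The only genuinely delicate point is the $A$-factor conversion from the uniform-action expectation to the error at the two specific actions; this is where the discreteness of $\mathcal{A}\supt$ is essential, and it also foreshadows why the continuous case in Corollary~\ref{cor:continuous_action} degrades to $T^{-1/4}$. The remaining steps — the triangle inequalities, the defining optimality of $\pi_{\hat f}$, and the Pinsker-plus-Jensen passage from KL to total variation — are routine.
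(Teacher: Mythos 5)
Your proof is correct and follows essentially the same route as the paper's: a triangle inequality through $\hat f$ at the chosen action, the optimality of $\pi_{\hat f}$ played against a near-optimal comparison action, the adaptability assumption, and Pinsker's inequality plus Jensen plus the discrete-action factor $|\calA\supt|$ to invoke Lemma~\ref{lemma:local_model}. The only cosmetic differences are that you compare against the witness action $a^\star(s)$ from Assumption~\ref{assum:similarity} directly, whereas the paper routes through $\pi_{f\supt}(s) = \argmin_{a\in\calA\supt}\|f\supt(\cdot|s,a)-f\sups(\cdot|s,\pi\sups(s))\|$, and that you apply the $A$-factor conversion at the total-variation level rather than inside the KL; both choices are immaterial and yield the stated $O(|\calA\supt|/\sqrt{T})$ bound.
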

\begin{proof}
Consider the Markov chain that is defined with respect to $f\supt$ and $\pi_{\hat{f}}$, i.e., $f\supt(s' |s, \pi_{\hat{f}}(s))$. Denote the state distribution induced by $f\supt(s'|s,\pi_{\hat{f}}(s) )$ at the target domain as $d_{\pi_{\hat{f}}}$. Let us bound ${\Exp}_{s\sim d_{\pi_{\hat{f}}}} \| f\supt(\cdot|s, \pi_{\hat{f}}(s)) - f\sups(\cdot|s,\pi\sups(s))\|$.
\begin{align*}
&\mathbb{E}_{s\sim d_{\pi_{\hat{f}}}} \| f\supt(\cdot|s, \pi_{\hat{f}}(s)) - f\sups(\cdot|s,\pi\sups(s))\| \\
& \leq \mathbb{E}_{s\sim d_{\pi_{\hat{f}}}}\|f\supt(\cdot|s, \pi_{\hat{f}}(s)) - \hat{f}(\cdot|s, \pi_{\hat{f}}(s))  \| + \mathbb{E}_{s\sim d_{\pi_{\hat{f}}}}\|\hat{f}(\cdot|s, \pi_{\hat{f}}(s)) -  f\sups(\cdot|s,\pi\sups(s))  \|\\
& \leq \sqrt{\mathbb{E}_{s\sim d_{\pi_{\hat{f}}}} D_{KL}( f\supt(\cdot|s,\pi_{\hat{f}}(s)), \hat{f}(\cdot|s,\pi_{\hat{f}}(s)) )} + \mathbb{E}_{s\sim d_{\pi_{\hat{f}}}}\|\hat{f}(\cdot|s, \pi_{\hat{f}}(s)) - {f}\sups(\cdot|s, \pi\sups(s))  \|  \\
& \leq O(|\mathcal{A}\supt|/\sqrt{T}) + \mathbb{E}_{s\sim d_{\pi_{\hat{f}}}}\|\hat{f}(\cdot|s, \pi_{\hat{f}}(s)) -  f\sups(\cdot|s,\pi\sups(s))  \| \\
& \leq O(|\mathcal{A}\supt|/\sqrt{T}) + \mathbb{E}_{s\sim d_{\pi_{\hat{f}}}} \|\hat{f}(\cdot|s, \pi_{f\supt}(s)) - f\sups(\cdot|s,\pi\sups(s))\| \\
& \leq O(|\mathcal{A}\supt|/\sqrt{T}) + \mathbb{E}_{s\sim d_{\pi_{\hat{f}}}}  \|\hat{f}(\cdot|s, \pi_{f\supt}(s)) - f\supt(\cdot| s, \pi_{f\supt}(s))\|  \\
& \qquad + \mathbb{E}_{s\sim d_{\pi_{\hat{f}}}} \| f\supt(\cdot|s, \pi_{f\supt}(s)) - f\sups(\cdot|s,\pi\sups(s))\| \\
& \leq O(|\mathcal{A}\supt|/\sqrt{T}) + \sqrt{  \mathbb{E}_{s\sim d_{\pi_{\hat{f}}}}  D_{KL}(f\supt(\cdot|s, \pi_{f\supt}(s)), \hat{f}(\cdot|s, \pi_{f\supt}(s)))  } +  \mathbb{E}_{s\sim d_{\pi_{\hat{f}}}}\left[ \epsilon_{s, \pi\sups(s)} \right]  \\
& = O(|\mathcal{A}\supt |/\sqrt{T}) + \mathbb{E}_{s\sim d_{\pi_{\hat{f}}}}\left[ \epsilon_{s, \pi\sups(s)} \right],
\end{align*} where the first inequality uses triangle inequality, the second inequality uses Pinsker's inequality, the third inequality uses the local accuracy of the learned model $\hat{f}$ (Lemma~\ref{lemma:local_model}), the fourth inequality uses the definition that $\pi_{\hat{f}}$, and the fifth inequality uses triangle inequality again, and the sixth inequality uses Pinsker's inequality again with the definition of adaptive ability together with the definition of $\pi_{f\supt}(s) \triangleq \argmin_{a\sim \calA\supt} \| f\supt(\cdot|s,a) - f\sups(\cdot|s, \pi\sups(s)) \|$.
\end{proof}

\begin{proof}[Proof of Theorem~\ref{them:main}]
Use Lemma~\ref{lemma:divergence} and Lemma~\ref{lemma:behavior} and consider $f\sups(\cdot | s, \pi\sups(s))$ as a Markov chain, we have that:
\begin{align*}
    \| \rho^t_{\pi_{\hat{f}}} - \rho^s_{\pi_s} \| \leq O(H|\mathcal{A}\supt| / \sqrt{T}) + H\epsilon,
\end{align*} where we denote $\epsilon := \mathbb{E}_{s\sim d_{\pi_{\hat{f}}}}\left[ \epsilon_{s, \pi\sups(s)} \right]$
\end{proof}

This concludes the proof of Theorem~\ref{them:main}.

\subsection{Extension to Continuous Action Space (Proof of Corollary~\ref{cor:continuous_action})}
\label{app:analysis_continuous}

For simplicity, we consider $\mathcal{A}\supt = [0,1]$.\footnote{We can always normalize action to $[0,1]$. 
} We consider Lipschitz continuous transition dynamics with and only with respect to actions, i.e., 
\begin{align}
    \| f(\cdot | s, a) - f(\cdot|s,a')\| \leq L |a - a'|,
    \label{eq:lipschitz}
\end{align} where $L$ is a Lipschitz constant. We emphasize here that we only assume Lipschitz continuity with respect to action in $\calM\supt$. Hence this is a much weaker assumption than the common Lipschitz continuity assumption used in RL community, which requires Lipschitz continuity in both action and state spaces.  We also assume that our function class $\mathcal{F}$ only contains function approximators that are Lipschitz continuous with respect to action $a$ (e.g., feedforward fully connected ReLu network is Lipschitz continuous).

\begin{proof}[Proof of Corollary~\ref{cor:continuous_action}]
For analysis purpose, let us discretize the action space into bins with size $\delta \in (0,1)$. Denote the discrete action set $\bar{\mathcal{A}}\supt = \{0.5\delta, 1.5\delta, 2.5\delta, \dots, 1-0.5\delta\}$ (here we assume $1/\delta = \mathbb{N}^+$). Here $|\bar{\mathcal{A}}\supt| = 1/\delta$.

Now consider the following quantity:
\begin{align*}
    \mathbb{E}_{s\sim d_{\pi_{\hat{f}}}}\|f\supt(\cdot|s, \hat{a}) - \hat{f}(\cdot|s, \hat{a})\|,
\end{align*} for any $\hat{a}$. Without loss of generality, we assume $\hat{a}\in [0,\delta]$. Via Pinsker's inequality and Lemma~\ref{lemma:local_model}, we have:
\begin{align*}
    \mathbb{E}_{s\sim d_{\pi_{\hat{f}}}}\mathbb{E}_{a\sim U([0,1])}\|f\supt(\cdot|s, {a}) - \hat{f}(\cdot|s, {a})\| \leq O(1/\sqrt{T}),
\end{align*} which implies that:
\begin{align*}
  \mathbb{E}_{s\sim d_{\pi_{\hat{f}}}}\mathbb{E}_{a\sim U([0,\delta])}\|f\supt(\cdot|s, {a}) - \hat{f}(\cdot|s, {a})\| \leq O(1/(\delta\sqrt{T})) .
\end{align*} We proceed as follows:
\begin{align*}
  &\mathbb{E}_{s\sim d_{\pi_{\hat{f}}}}\mathbb{E}_{a\sim U([0,\delta])}\|f\supt(\cdot|s, {a}) - \hat{f}(\cdot|s, {a})\| \\
  & = \mathbb{E}_{s\sim d_{\pi_{\hat{f}}}}\mathbb{E}_{a\sim U([0,\delta])}\|f\supt(\cdot|s, \hat{a} + a - \hat{a}) - \hat{f}(\cdot|s, \hat{a} + a - \hat{a})\| \\
  & \geq \mathbb{E}_{s\sim d_{\pi_{\hat{f}}}}\mathbb{E}_{a\sim U([0,\delta])} \left(\| f\supt(\cdot|s, \hat{a}) - \hat{f}(\cdot|s, \hat{a})\| - 2L|a - \hat{a}|    \right) \\
  & =  \mathbb{E}_{s\sim d_{\pi_{\hat{f}}}}\|f\supt(\cdot|s, \hat{a}) - \hat{f}(\cdot|s, \hat{a})\| - 2L\mathbb{E}_{a\sim U([0,\delta])}|a-\hat{a}| \\
  & \geq \mathbb{E}_{s\sim d_{\pi_{\hat{f}}}}\|f\supt(\cdot|s, \hat{a}) - \hat{f}(\cdot|s, \hat{a})\| - 2L\mathbb{E}_{a\sim U([0,\delta])}\delta \\
  & = \mathbb{E}_{s\sim d_{\pi_{\hat{f}}}}\|f\supt(\cdot|s, \hat{a}) - \hat{f}(\cdot|s, \hat{a})\| - 2L\delta,
\end{align*} where the first inequality uses the fact that $\hat{a}\in [0,\delta]$ and the Lipschitz conditions on both $f\supt$ and $\hat{f}\in\mathcal{F}$, the second inequality uses the fact that $|a-\hat{a}| \leq \delta$ for any $a\in [0,\delta]$ as $\hat{a}\in [0,\delta]$.

Hence, we have:
\begin{align*}
   &\mathbb{E}_{s\sim d_{\pi_{\hat{f}}}} \| f\supt(\cdot|s, \hat{a}) - \hat{f}(\cdot|s, \hat{a})\| \leq 2L\delta + O(1/(\delta\sqrt{T})) = O(T^{-1/4}), \forall \hat{a} \in\calA\supt,
\end{align*} where in the last step we set $\delta = \Theta(T^{-1/4})$. 

Now, we can simply repeat the process we have for proving Lemma~\ref{lemma:behavior}, we will have:
\begin{align*}
    &\mathbb{E}_{s\sim d_{\pi_{\hat{f}}}} \| f\supt(\cdot|s, \pi_{\hat{f}}(s)) - f\sups(\cdot|s,\pi\sups(s))\| \leq  O(T^{-1/4}) + \epsilon.
\end{align*}

Combine with Lemma~\ref{lemma:divergence}, we prove the corollary for continuous action setting.
\end{proof}

For general $n$-dim action space, our  bound will scale in the order $O(\sqrt{n} T^{-1/(2n+2)})+\epsilon$.  The proof of the $n$-dim result is similar to the proof of the $1$-d case and is included below for completeness:
\begin{proof}[Proof for $n$-dim Action Space]
For $n$-dimensional action space, we have:
\begin{align*}
  \mathbb{E}_{s\sim d_{\pi_{\hat{f}}}}\mathbb{E}_{a\sim U([0,\delta]^n)}\|f\supt(\cdot|s, {a}) - \hat{f}(\cdot|s, {a})\| \leq O(1/(\delta^n\sqrt{T})) .
\end{align*}Using  the Lipschitz property: 
\begin{align*}
  &\mathbb{E}_{s\sim d_{\pi_{\hat{f}}}}\mathbb{E}_{a\sim U([0,\delta]^n)}\|f\supt(\cdot|s, {a}) - \hat{f}(\cdot|s, {a})\| \\
  & = \mathbb{E}_{s\sim d_{\pi_{\hat{f}}}}\mathbb{E}_{a\sim U([0,\delta]^n)}\|f\supt(\cdot|s, \hat{a} + a - \hat{a}) - \hat{f}(\cdot|s, \hat{a} + a - \hat{a})\| \\
  & \geq \mathbb{E}_{s\sim d_{\pi_{\hat{f}}}}\mathbb{E}_{a\sim U([0,\delta]^n)} \left(\| f\supt(\cdot|s, \hat{a}) - \hat{f}(\cdot|s, \hat{a})\| - 2L\|a - \hat{a}\|    \right) \\
  & =  \mathbb{E}_{s\sim d_{\pi_{\hat{f}}}}\|f\supt(\cdot|s, \hat{a}) - \hat{f}(\cdot|s, \hat{a})\| - 2L\mathbb{E}_{a\sim U([0,\delta]^n)}\|a-\hat{a}\| \\
  & \geq \mathbb{E}_{s\sim d_{\pi_{\hat{f}}}}\|f\supt(\cdot|s, \hat{a}) - \hat{f}(\cdot|s, \hat{a})\| - 2L\mathbb{E}_{a\sim U([0,\delta]^n)}\sqrt{n}\delta \\
  & = \mathbb{E}_{s\sim d_{\pi_{\hat{f}}}}\|f\supt(\cdot|s, \hat{a}) - \hat{f}(\cdot|s, \hat{a})\| - 2L\sqrt{n}\delta,
\end{align*}
Combining with above leads to
\begin{align*}
   &\mathbb{E}_{s\sim d_{\pi_{\hat{f}}}} \| f\supt(\cdot|s, \hat{a}) - \hat{f}(\cdot|s,\hat{a})\| \leq 2\sqrt{n}L\delta + O(1/(\delta^n\sqrt{T})) = O(\sqrt{n}T^{-1/(2n+2)})
\end{align*}
where in the last step we set $\delta = \Theta(({\frac{T}{n}})^{\frac{-1}{2(n+1)}})$. Finally we have:
\begin{align*}
    &\mathbb{E}_{s\sim d_{\pi_{\hat{f}}}} \| f\supt(\cdot|s, \pi_{\hat{f}}(s)) - f\sups(\cdot|s,\pi\sups(s))\| \leq  O(\sqrt{n}T^{-1/(2n+2)}) + \epsilon.
\end{align*}

\end{proof}

\section{Detailed description of our experiments} \label{appendix:exp_des}

\subsection{Environment descriptions.} \label{appendix:env_des}
For each of the four environments (HalfCheetah, Ant, Reacher) we tested on, we include a detailed numerical description of the environments in table \ref{env}.

\subsection{Descriptions of the perturbations}
\label{appendix:pert_des}
\subsubsection{Changing gravity.}
We change the gravity in the whole Mujoco locomotion task by a max range from $50\%$ to $200 \%$ of the normal gravity. The normal gravity is set to $0$ on $x$ and $y$ axis and $-9.81$ on $z$ axis.

\subsubsection{Changing mass.} We change the mass of the agent in the Mujoco locomotion task by a max range from $50\%$ to $200 \%$ of its original mass. Note that most agents are composed of links with independent masses, so besides changing the agent's mass as a whole, we also design experiments that change the mass of each individual links of the agent respectively.

\subsubsection{Changing plane orientation.} In the Reacher task, we tilt the platform of the Reacher so that it forms an angle of 45 degree of the original plane. 
\subsubsection{Changing arm length.} In the Reacher task, we change the first link of the Reacher arm (which is composed of two parts) into one tenth of its original length. 
\subsubsection{Changing Friction.} We change the frictional coefficient in the environment on an uniform scale. We found if friction is the only change in the target domain, then the adaptation task is relatively simple, so we incorporate it as one of the changes in the Multi-dimensional perturbation task.
\subsubsection{Motor noise.} This task tries to mimic the motor malfunction or inaccuracy in the real world. After the agent outputs an action, we add on a noise from a normal distribution with mean 0 and a fixed standard deviation from 0.2 to 1.
\subsubsection{Multiple DOFs of changes.} \label{app-multi-design} In the 3DOF task, we set the gravity to 0.8, mass to 1.2 and friction coefficient to 0.9. In the 15DOF task, we uniformly sample a coefficient from 0.9 to 1.1 for each of the following configuration: gravity, friction coefficient and the mass of each joint of the agent. We record these changes and apply for all comparing methods.  
\begin{figure}[H]
	\centering
	\includegraphics[width = 0.23\textwidth, trim=0cm 0cm 0cm 0cm, clip=true]{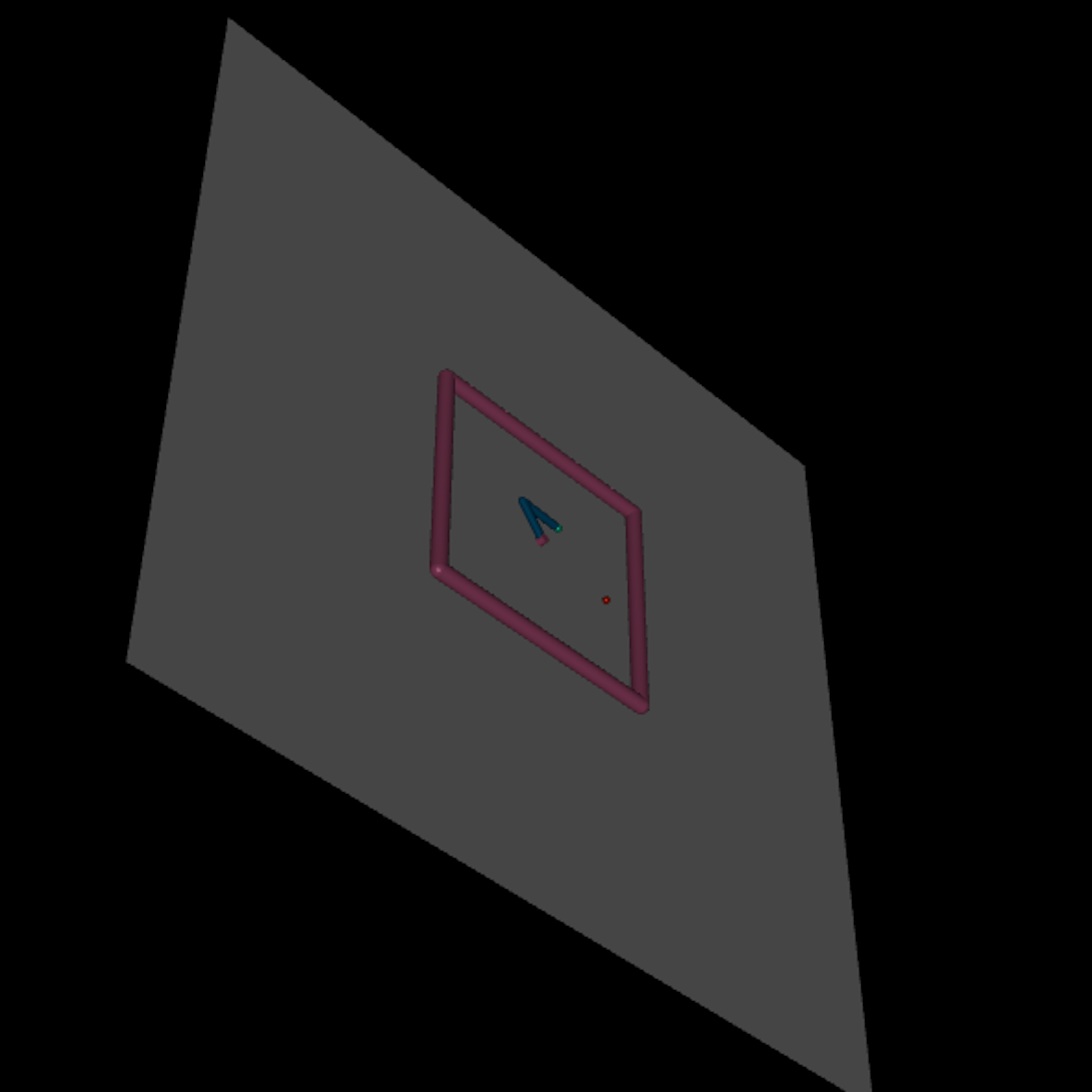}
	\includegraphics[width = 0.23\textwidth, trim=0cm 0cm 0cm 0cm, clip=true]{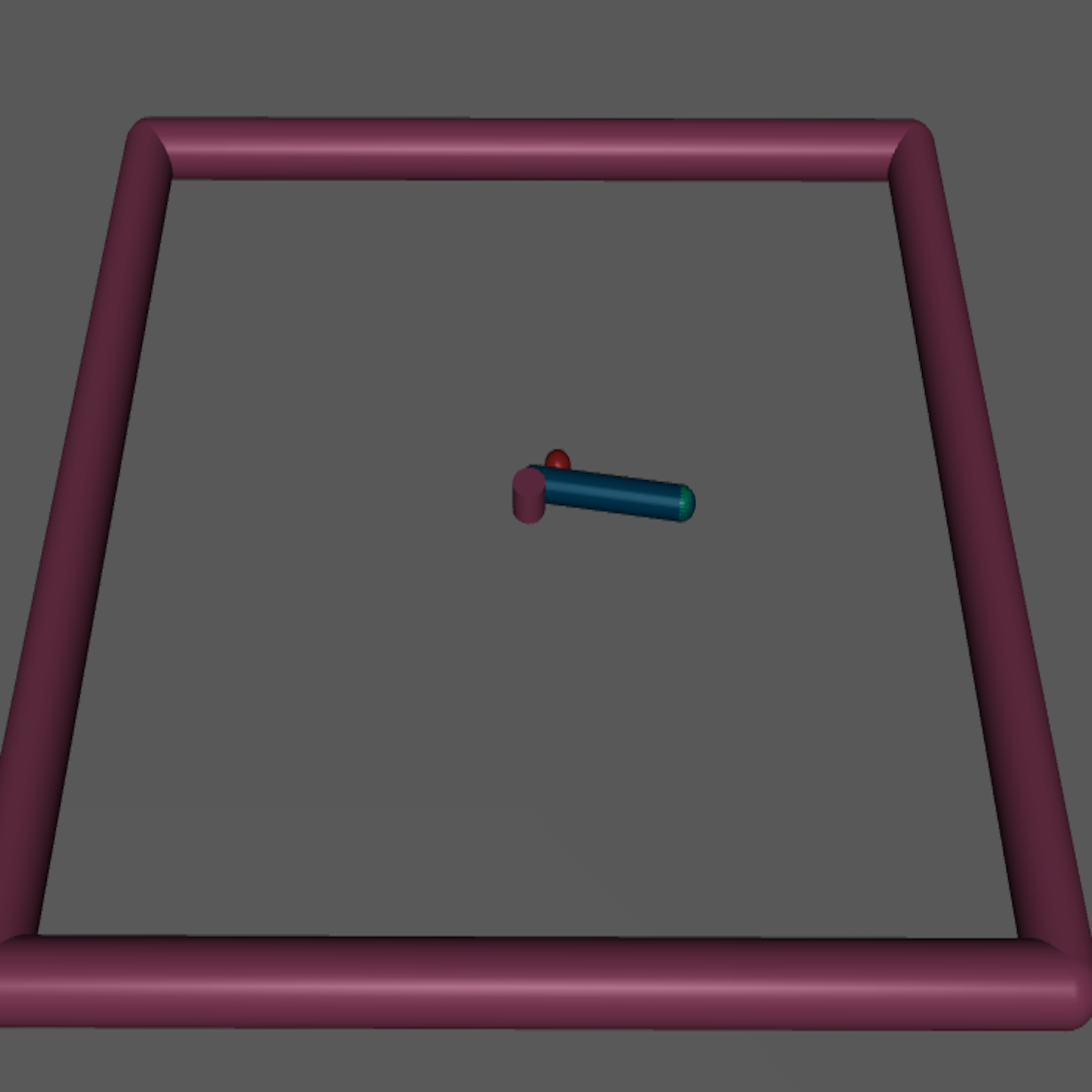}
	\caption{Visual illustration of modified Reacher environment. \textbf{left:} 45 degrees of tilted plane. \textbf{right:} Reacher with 10\% length of its first arm.}
\end{figure}

\begin{table*}[t]
\centering
\begin{tabular}{|c|cccc|}
\hline
\begin{tabular}[c]{@{}c@{}}Environment \\ name\end{tabular} & \begin{tabular}[c]{@{}c@{}}Full state \\ space size\end{tabular} & \begin{tabular}[c]{@{}c@{}}Model agnostic \\ state size\tablefootnote{This means the number of states that won't be passed as inputs to models or policies, e.g., the current coordinate or location of the agent.} \end{tabular} & \begin{tabular}[c]{@{}c@{}}Action space\\ size\end{tabular} & \begin{tabular}[c]{@{}c@{}}Reward \\ function\end{tabular}                                                                                \\ \hline
HalfCheetah & 18  & 1 & 6  & forward reward - 0.1 $\times$ control cost  \\
Ant         & 29 & 2 & 8  & 
\begin{tabular}[c]{@{}c@{}}  forward reward - 0.5 $\times$ control cost - \\ 
0.0005 $\times$ contact cost + survive reward\end{tabular} \\

Reacher & 16 & 4 & 2 & forward distance - control cost                                              \\ \hline
\end{tabular}
\caption{Description of the OpenAI gym environments. Note that to enforce safety of the agent in the target environment, we make a modification on HalfCheetah environment such that the episode will be terminated when the cheetah falls off.}
\label{env}
\end{table*}

\subsection{Hyperparameters} \label{app_hyper}
\begin{table}[h]
\begin{tabular}{|p{0.15\linewidth}|p{0.1\linewidth}|p{0.18\linewidth}|p{0.18\linewidth}|p{0.15\linewidth}|p{0.1\linewidth}|l|}
\hline
                          & source               & PADA-DM              & PADA-DM with target  & Christiano et al., 2016 & Zhu et al., 2018     & PPO                  \\ \hline
\# timesteps              & 2e6                  & 8e4   (5e4,8e4,12e4,15e4)               & 8e4     (5e4,8e4,1.2e5,1.5e5)             & 2e5  (1e5,2e5,4e5)                    & 2e6                  & 2e6                  \\ \hline
learning rate 
(with linear decay)       & 7e-4                 & 5e-3                 & 5e-3                 & 5e-3                    & 7e-4                 & 7e-4                 \\ \hline
soft update rate          &                      &                      & every 3000 timesteps (3000,5000,10000) &                &                      &                      \\ \hline
explore rate $\epsilon$   &                      & 0.01 (0.01,0.02)                & 0.01 (0.01,0.02)                 & \textbf{}               &                      &                      \\ \hline
reward tradeoff $\lambda$ &                      &                      &                      & \textbf{}               & 0.5                  &                      \\ \hline
\end{tabular}
\caption{Final hyperparameters we use for our methods and baselines in the experiments. The values in the brackets are the value we considered.}
\end{table}

\section{Implementation details} \label{app:prac_alg}
\subsection{Pretraining of the source dynamics model} \label{app:pretrain}
In section \ref{model_param}, one assumption we make is that we have a pretrained model $\hat{f}\sups$ that well approximates the source dynamics $f\sups$. In practice, we pretrain the model $\hat{f}\sups$ with the $(s,a,s')$ triplets generated during the training of $\pi\sups$. The model $\hat{f}\sups$ is a two-layer neural network with hidden size 128 and ReLU nonlinearity, trained with MSE loss since we assume deterministic transitions. Using these existing samples has two major advantages: first is that we don't need further interaction with the source environment and second is that the trained model $\hat{f}\sups$ especially well approximates the transitions around the actions taken by $\pi\sups$, which is important to our algorithm. 

Remark that if we already have the ground truth source dynamics $f\sups$, which is a mild assumption while using a simulator, we can also directly use $f\sups$ to replace $\hat{f}\sups$. During our experiments, we observe that whether using $f\sups$ or $\hat{f}\sups$ won't affect the performance of our method.

\subsection{Cross Entropy Method} \label{app:cem}
Here we provide a pseudocode of the Cross Entropy Method that we used in our method, as in Alg. \ref{app:alg:cem}. In our implementation, we use $T=10$ iterations, $N=200$ actions sampled per iteration, $K = 40$ elites (elite ratio 0.4) and $\sigma_0 = 0.5$. We use the output of target policy as the initial mean, and when we don't have the target policy, we use $\pi\sups(s)$ as the initial mean. To avoid invalid actions, for each action $a_i$ we sample, we clip the action if certain dimension is out of the bound of $\calA\supt$ (usually [-1,1] on each dimension).

\begin{algorithm}[t] 
\caption{Cross Entropy Method}
\label{app:alg:cem}
\begin{algorithmic}[1]
\REQUIRE Initial mean $\mu_0$, initial standard deviation $\sigma_0$, action space $\calA\supt$, current model $\delta_\theta$, current state $s$, number of iterations T, sample size N, elite size K.
\STATE $\Sigma_0 \gets  I_{|\calA\supt|}(\sigma_0^2)$
\FOR{t = 1, \ldots {\text{T}}}
\STATE Sample $\{a_i\}_{i=1}^{N} \sim \mathcal{N}(\mu_{t-1},\Sigma_{t-1})$
\STATE $\{a_i\}_{i=1}^{N} \gets clip(a_i, \min(\calA\supt), \max(\calA\supt))$ 
\STATE Sort $\{a_i\}$ with descending $\|\delta_{\theta}(s,a_i)\|_2^2$ and pick the first K actions $\{a_j\}_{j=1}^K$

\STATE $\mu_t \gets \frac{1}{K}\Sigma_{j=1}^{K}a_j$ 
\STATE$\Sigma_t \gets \frac{1}{K}\Sigma_{j=1}^{K} (a_j - \mu_t)(a_j - \mu_t)^T$
\ENDFOR 
\STATE \textbf{Output}: $\mu_T$
\end{algorithmic}
\end{algorithm}

\section{Supplemental experiments} \label{appendix:sup_exp}
\subsection{Accuracy of Deviation Model} \label{appdendix:exp:acc}
We evaluate the performance of the deviation model by comparing its output with the actual deviation (i.e., $\Delta^{\pi\sups}$) in the target and source environment states. We compare the performance between linear and nonlinear deviation models. We include linear models as they are convenient if we want to use optimal control algorithms. The nonlinear model is the same model we use for our PADA-DM algorithm, which has two 128-neuron layers with ReLU \cite{nair2010rectified} nonlinearity. Both of the deviation models are tested on the same initial state distribution after training on 80k samples. In \ref{accuracy}(left), we plot the output of the deviation model against the ground truth deviation. We test on 50 trajectories and each data point refers to the average L2 state distance along one trajectory.  In \ref{accuracy}(right), we plot the ground truth deviation, and the outputs of the linear and nonlinear deviation models over time, on 10 test trajectories.

\begin{figure}[H]
	\centering
	\includegraphics[width = 0.35\textwidth, trim=0cm 0cm 0cm 0cm, clip=true]{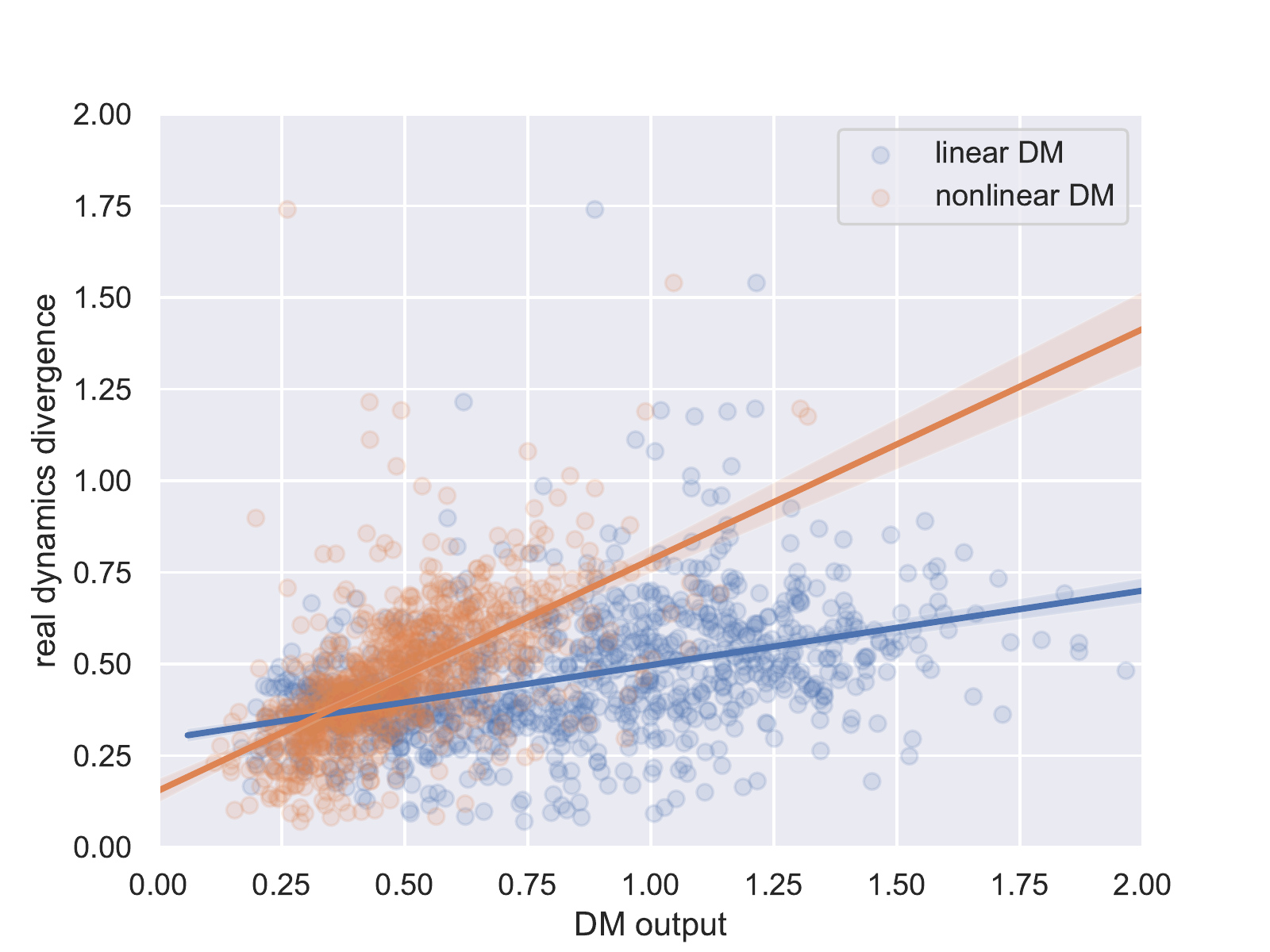}
	\includegraphics[width = 0.35\textwidth, trim=0cm 0cm 0cm 0cm, clip=true]{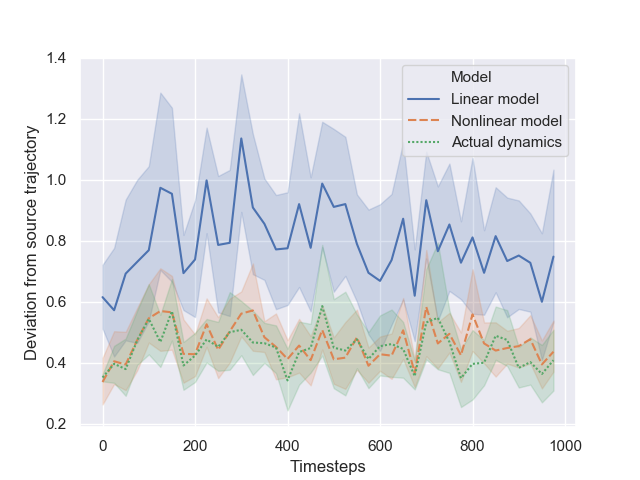}
	\caption{Comparing the performances of the linear and nonlinear deviation models. \textbf{left:} This plot depicts the correlation between the predicted deviation and the actual deviation. The nonlinear deviation model is more accurate since its slope is closer to 1. \textbf{right:} This plot shows the predicted and actual deviation over the course of 10 trajectories. Here again, the nonlinear model (orange) curve lies very close to the actual deviation curve (green).}
	\label{accuracy}
\end{figure}

\subsection{Long-term learning curves}\label{app_exp_long}
In this section we show a more comprehensive long-term learning curve in Fig. \ref{app_exp_long_plot}. Each task here corresponds to the task in Fig. \ref{performance}. Note that here again the x-axis is in natural logarithm scale. 
\begin{figure}[H]
    \centering
    \includegraphics[width = 0.245\textwidth, trim=0cm 0cm 0cm 0cm, clip=true]{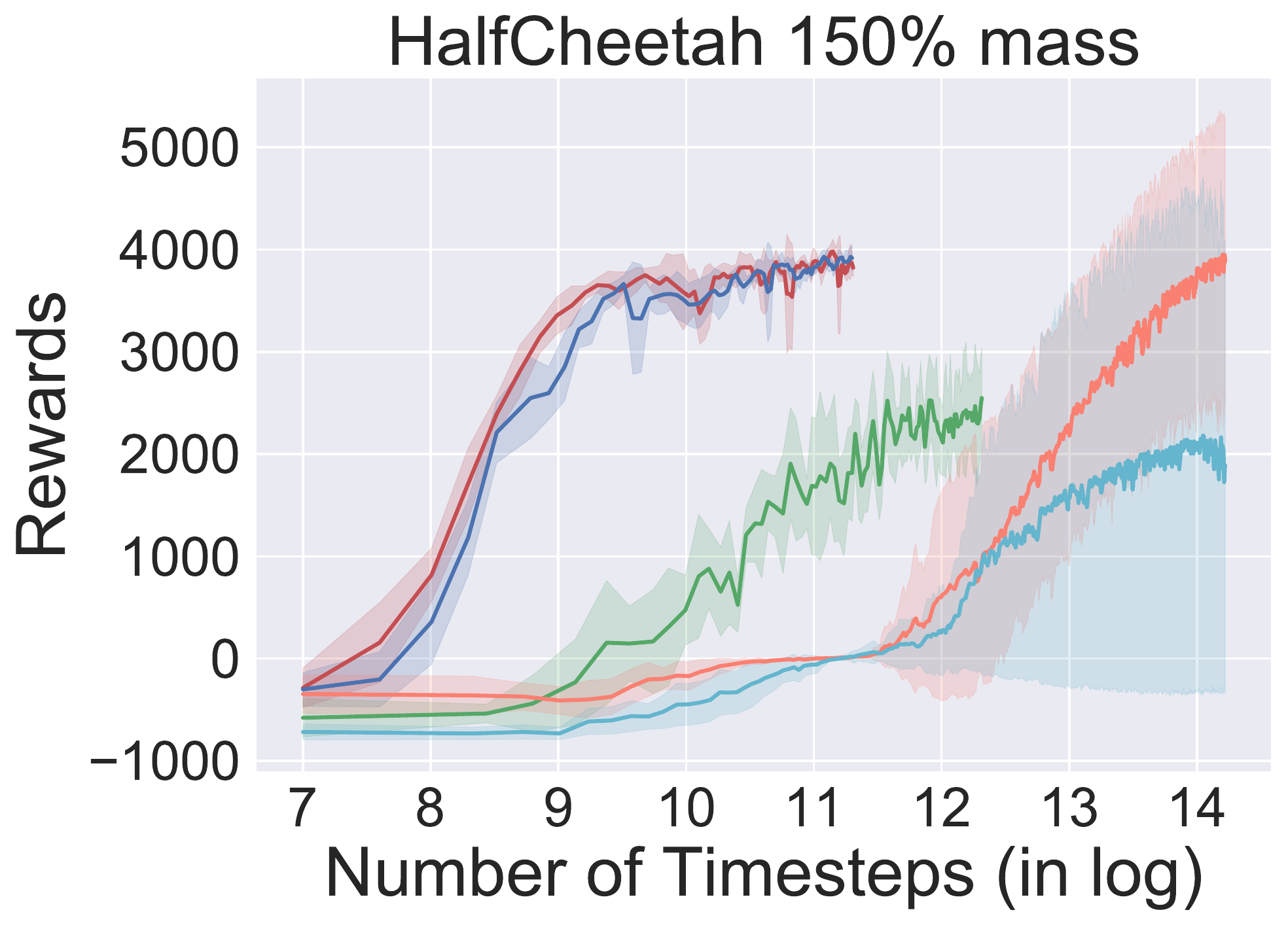}
	\includegraphics[width = 0.245\textwidth, trim=0cm 0cm 0cm 0cm, clip=true]{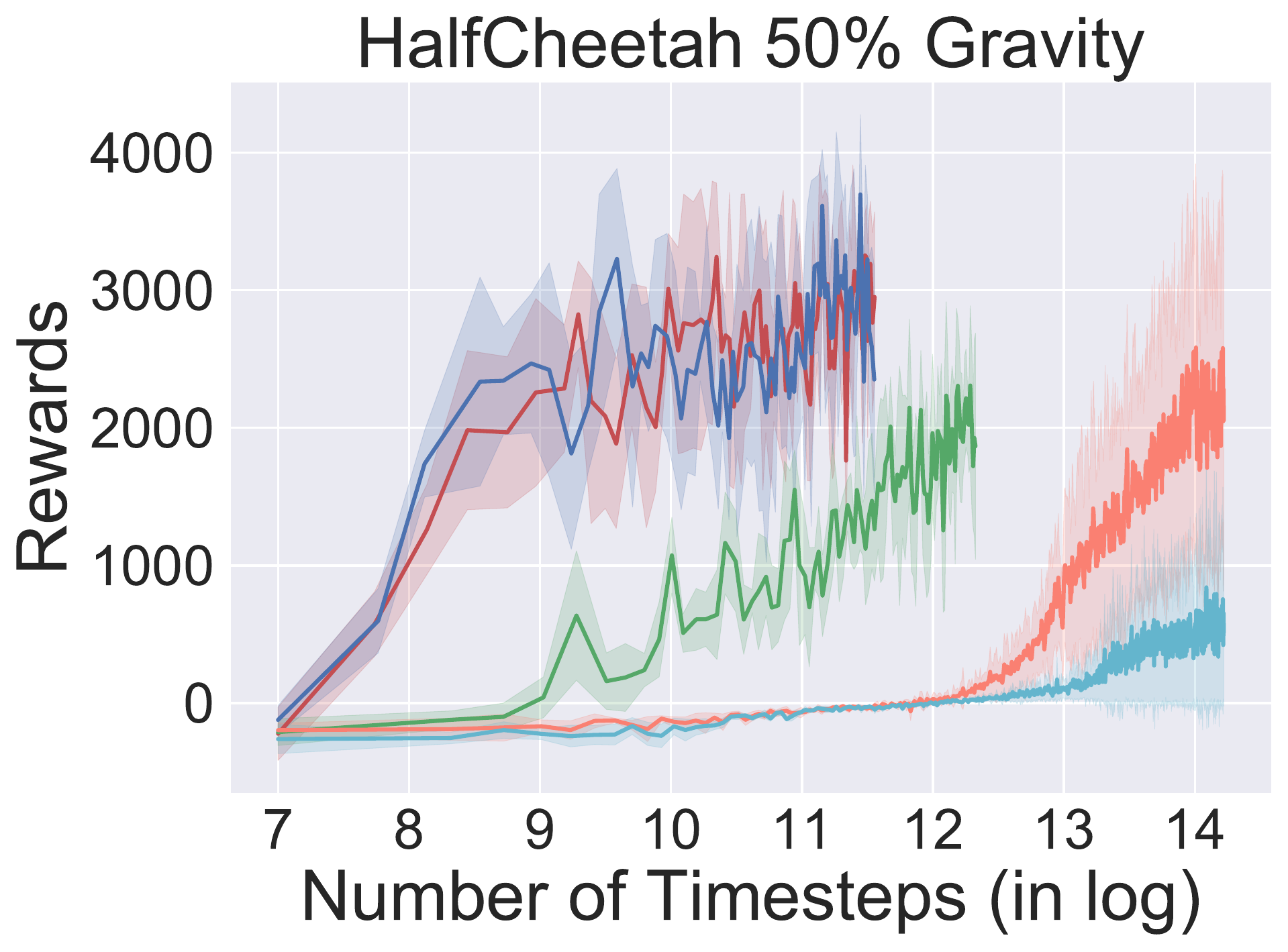}
	\includegraphics[width = 0.245\textwidth, trim=0cm 0cm 0cm 0cm, clip=true]{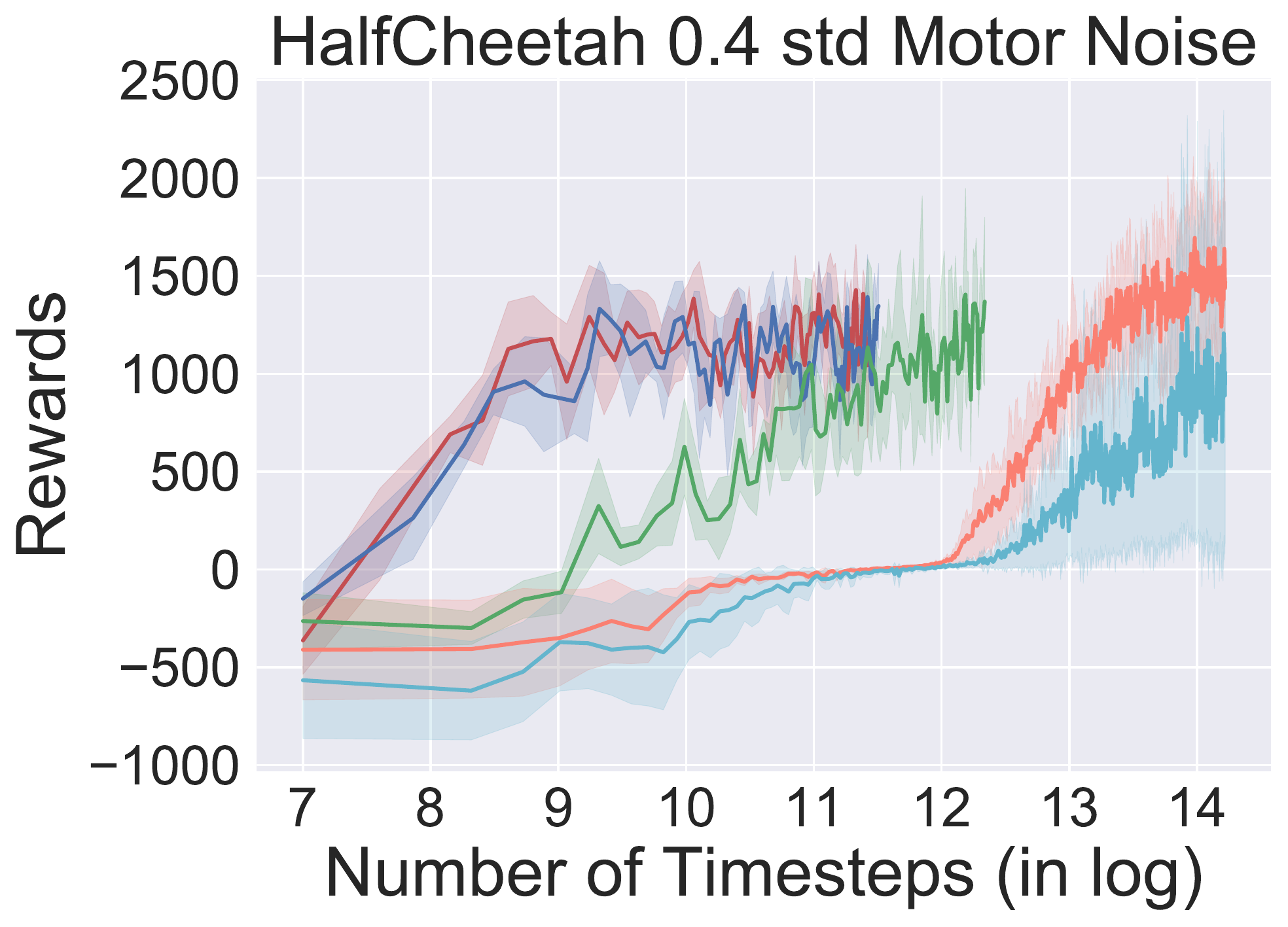}
	\includegraphics[width = 0.245\textwidth, trim=0cm 0cm 0cm 0cm, clip=true]{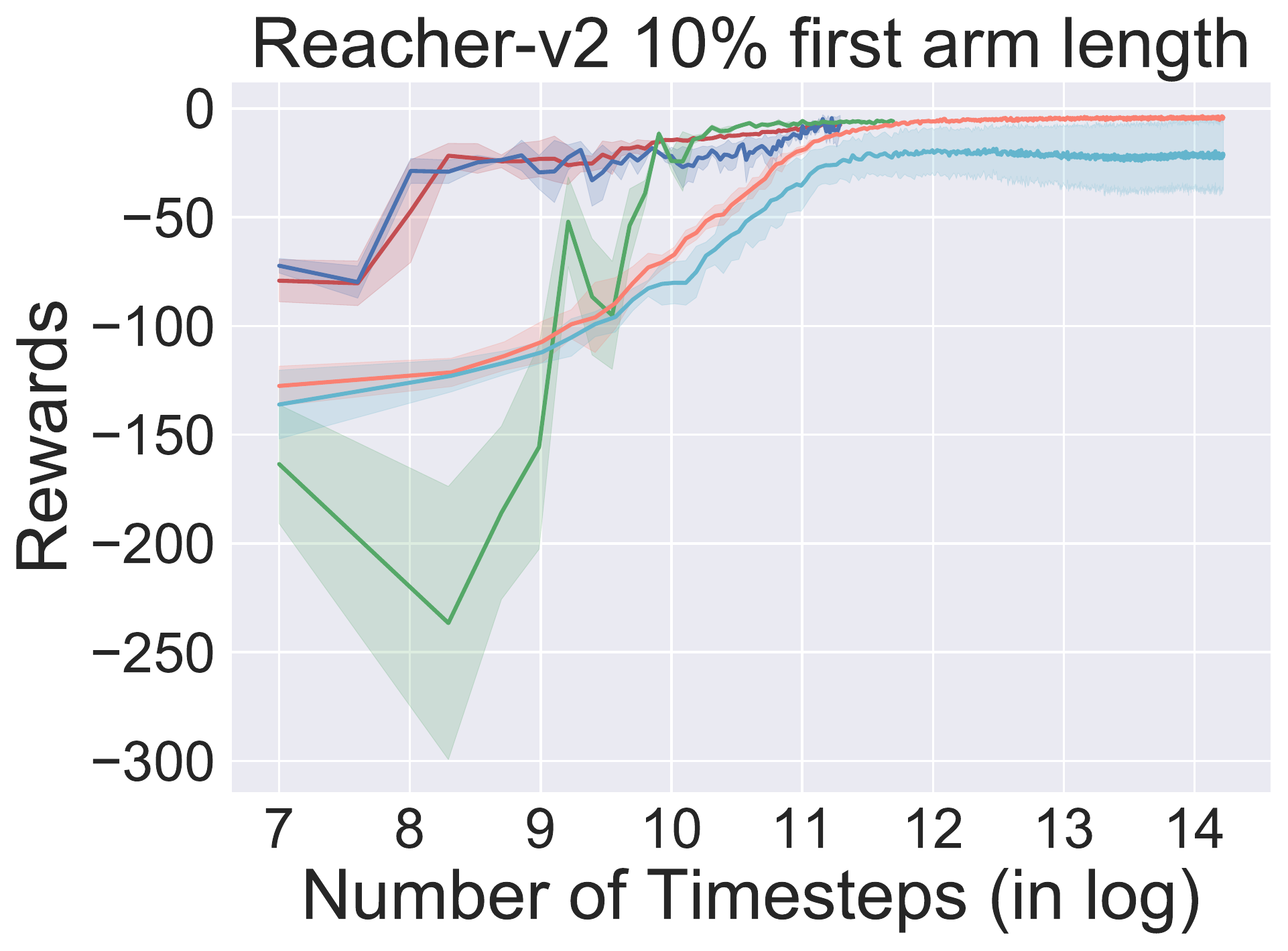}

	\centering
	\includegraphics[width = 0.245\textwidth, trim=0cm 0cm 0cm 0cm, clip=true]{results/cem/AntDM-v2_5_None_long.pdf}
	\includegraphics[width = 0.245\textwidth, trim=0cm 0cm 0cm 0cm, clip=true]{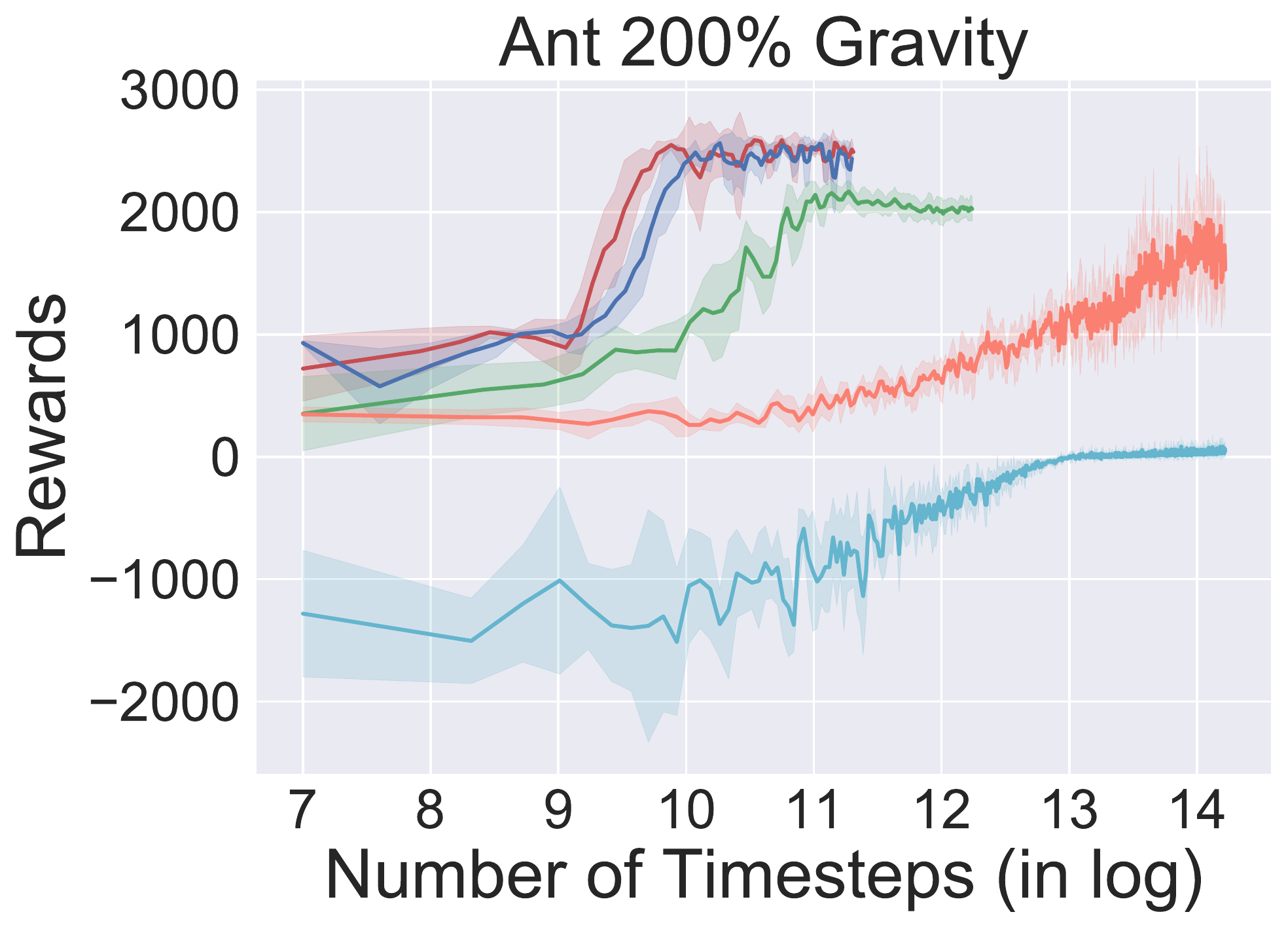}
	\includegraphics[width = 0.245\textwidth, trim=0cm 0cm 0cm 0cm, clip=true]{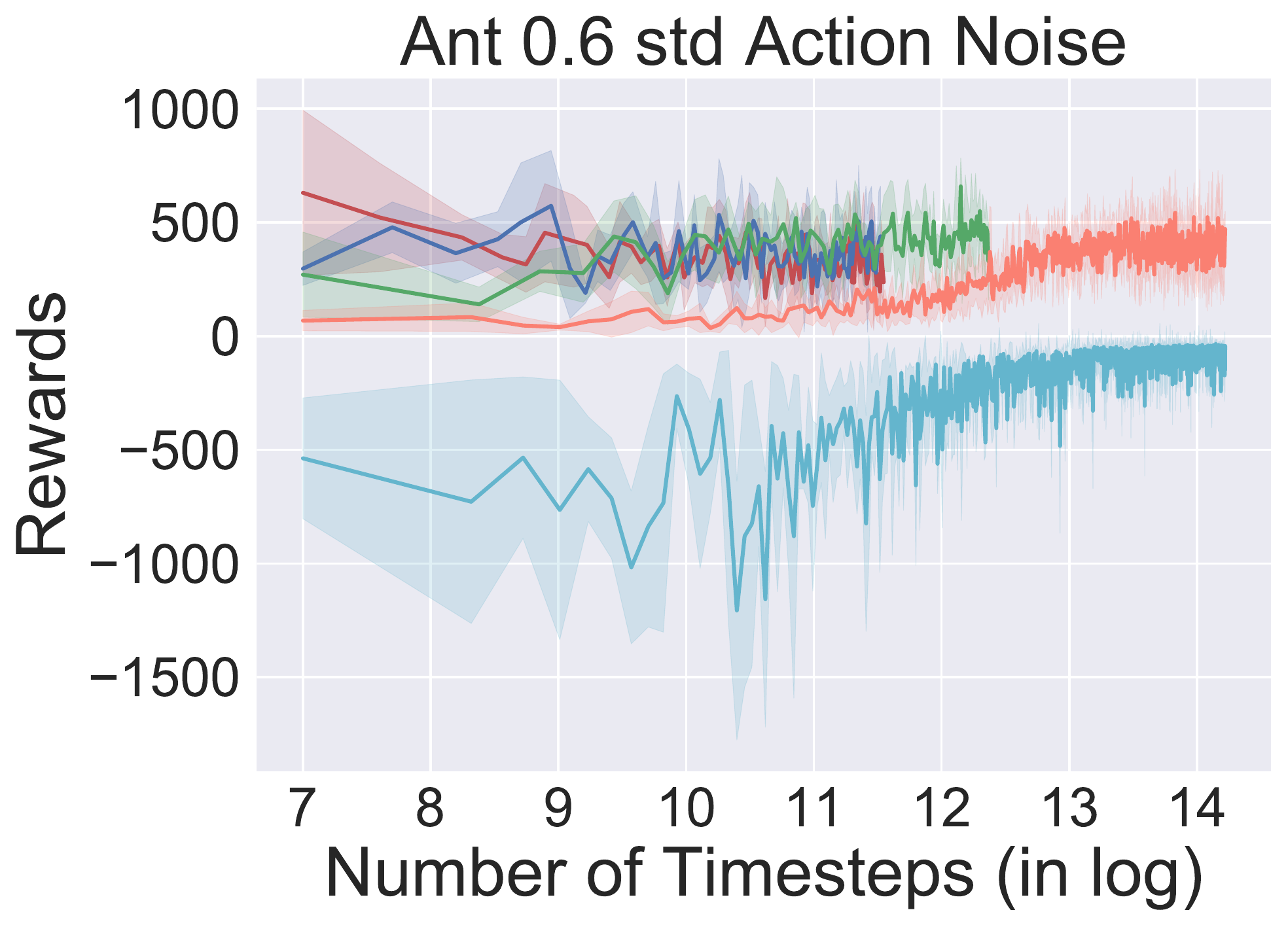}
	\includegraphics[width = 0.245\textwidth, trim=0cm 0cm 0cm 0cm, clip=true]{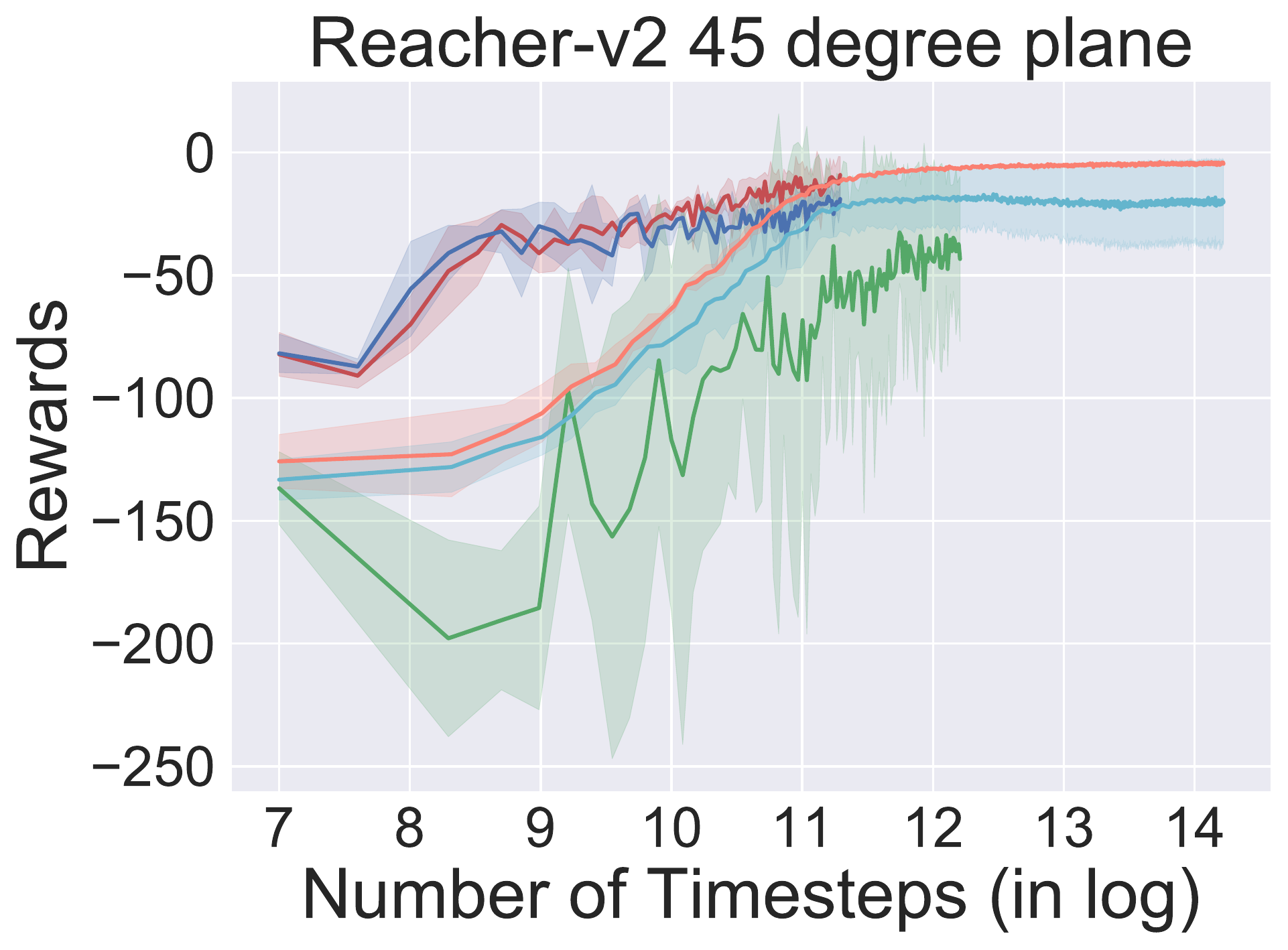}
	
	\centering
	\includegraphics[width = 1\textwidth, trim=0cm 0cm 0cm 0cm, clip=true]{results/lgd.pdf}
  \caption{We plot the learning curves across 5 random seeds of our adaptation method (using PADA-DM and PADA-DM with target), and the baseline methods on a number of tasks including perturbation of the mass, gravity, motor noise for the locomotion environments (HalfCheetah and Ant), and the plane angle and arm lengths for the navigation environment (Reacher). The title of each plot corresponds to the perturbation in the target domain, e.g., HalfCheetah Mass 150\% means in mass of the agent in the target domain is 150\% of that in the source domain. The shaded area denotes 1 standard deviation range from the mean.
  }
	\label{app_exp_long_plot}
\end{figure}

\subsection{Comparing using true reward}\label{app_rew}

To further verify the efficiency of our choice of reward (the deviations between two environments), we conduct an additional experiments where we use the ground truth reward for CEM. We fix all the hyperparameters and train an additional model to learn to ground truth reward. To ensure the fairness of the comparison, when we use the ground truth reward, we keep doing 1 step look-ahead during CEM. The results verify that the deviation serves as a better reward for conducting policy adaptation, where the ground truth reward leads to a local minimum that results in suboptimal performance.
\begin{figure}[h]
	\centering
	\includegraphics[width = 0.35\textwidth, trim=0cm 0cm 0cm 0cm, clip=true]{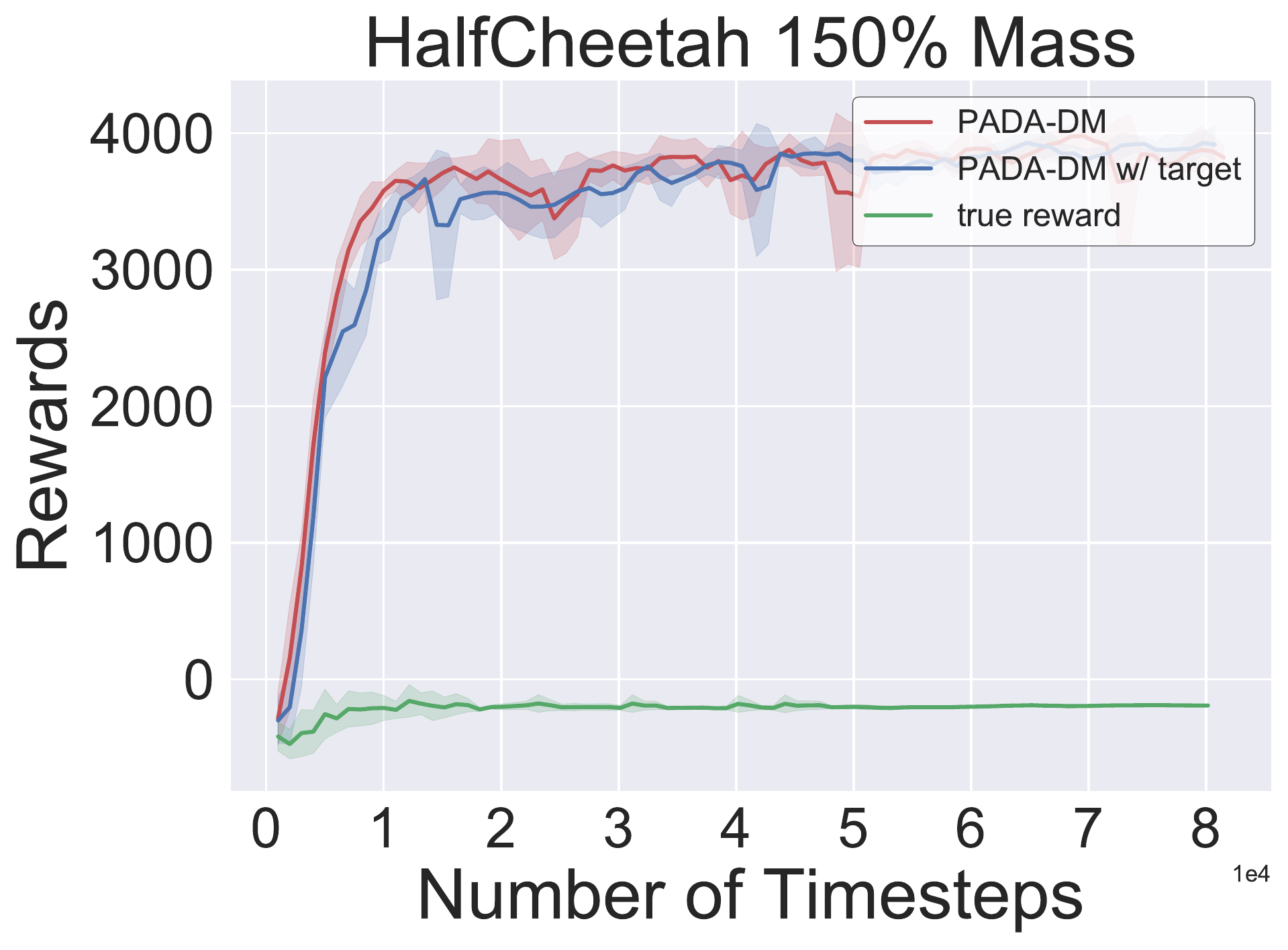}
	\caption{Comparing learning ground truth reward, learning deviations quickly adapts the policy in the target domain, where using ground truth reward may not necessarily leads to optimal performance in the target domain.}
	\label{true_rew}
\end{figure}

\subsection{Additional Experiments for Meta-Learning MAML}\label{app_exp_dr}
In this section we conduct an additional experiment to Section \ref{exp:drmaml}. In section \ref{exp:drmaml}, we use 80k samples of adaptation for our method and MAML to conduct a fair comparison. However, using so many samples for adaptation contradicts MAML's claim of few-shot adaptation and we also  observe that MAML’s test performance does not improve too much as we change the sample size in adaptation phase. Thus here we report the additional experiments to support this claim: we adopt the same experiment setting in Section \ref{exp:drmaml}, and this time we use 20k samples for MAML during adaptation. The test performance is recorded in Fig. ~\ref{app_domainrand}(a) and Fig.~\ref{app_domainrand_80}(a). Comparing with the original performance (Fig.~\ref{app_domainrand}(b) and Fig. ~\ref{app_domainrand_80}(b)), the test performance of MAML does not change that much as the number of adaptation samples decreases and our approach still outperforms MAML consistently. 

In addition, we record the mean and the standard deviation of the test performance of each method to deliver a more direct comparison in Table \ref{app_gravity_table} and Table \ref{app_mass_table}. As we can see, our approach outperforms other baselines most of the time. When the perturbation is small (e.g., the 120\% columns in both tables), DR also delivers very strong performances. However when perturbation is large (e.g., 200\% columns in both tables), DR fails to adapt completely, which indicates that DR has troubles to adapt to out-of-distribution new environments.

\begin{figure}[H]
\centering
	\begin{tabular}{@{}l@{}}
	\subfigure[]{\includegraphics[width = 0.4\textwidth]{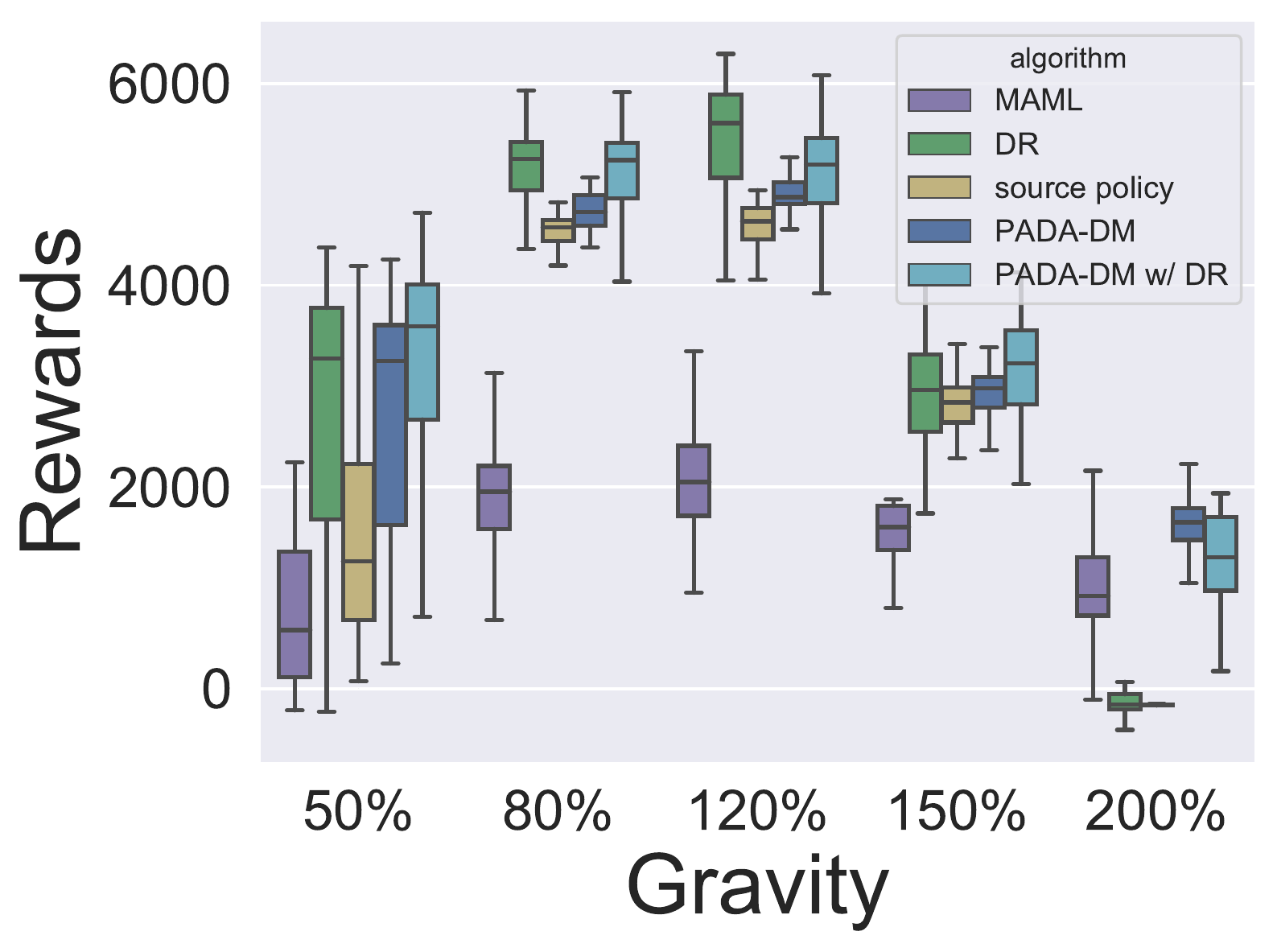} }
	\subfigure[]{\includegraphics[width = 0.4\textwidth]{results/cem/cheetah_gravity_bar.pdf}}
    \end{tabular}
	\caption{Ablation experiments using domain randomization and meta-learning. (a) MAML with 20k adaptation samples. (b) MAML with 80k adaptation samples. The boxplots show the median of the data while more statistics such as mean and standard deviation are shown in the following tables.}
	\label{app_domainrand}
\end{figure}
\begin{figure}[H]
\centering
	\begin{tabular}{@{}l@{}}
	\subfigure[]{\includegraphics[width = 0.4\textwidth]{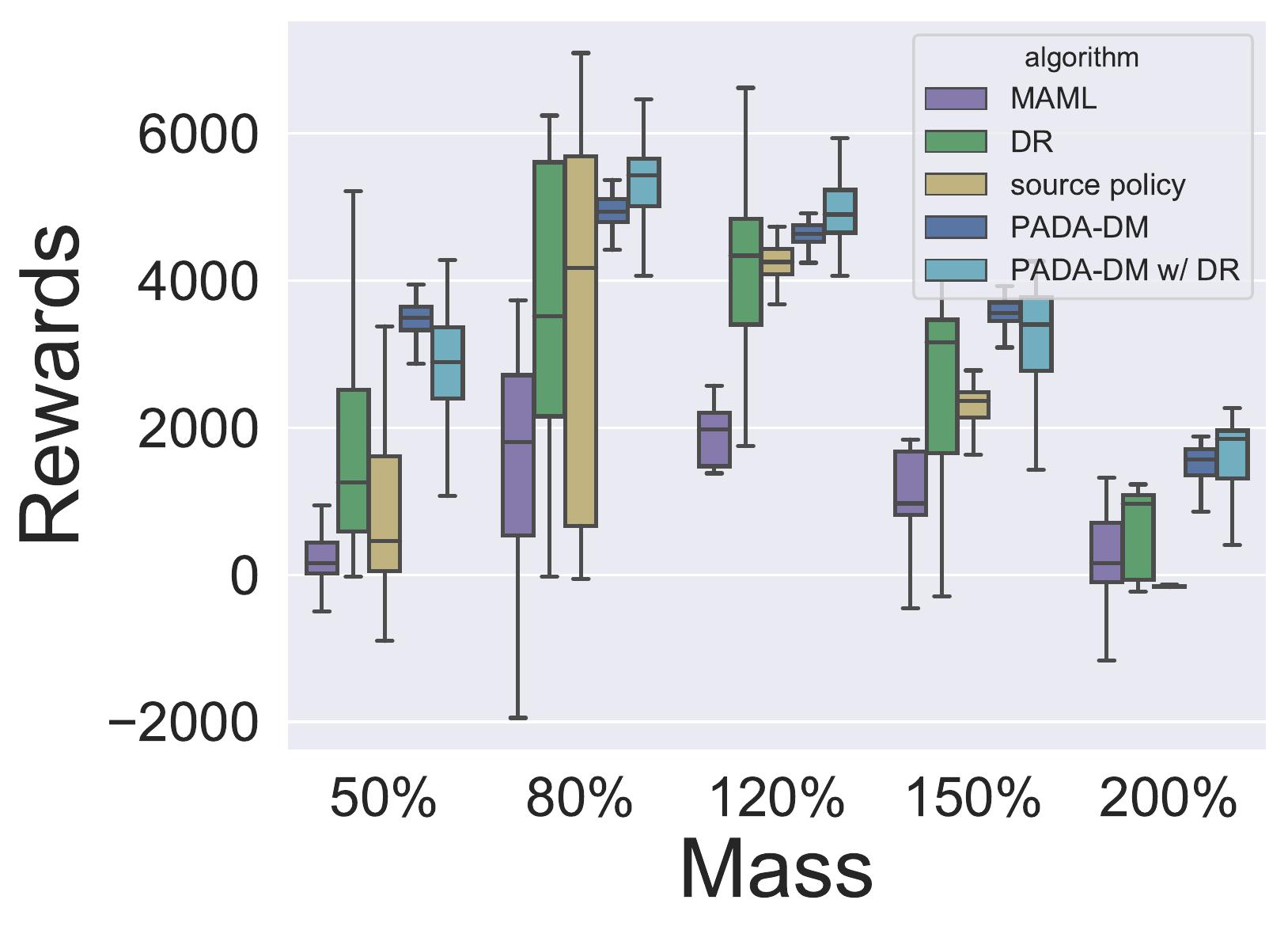} }
	\subfigure[]{\includegraphics[width = 0.4\textwidth]{results/cem/cheetah_mass_80_bar.pdf}}
    \end{tabular}
	\caption{Ablation experiments using domain randomization and meta-learning. (a) MAML with 20k adaptation samples. (b) MAML with 80k adaptation samples.}
	\label{app_domainrand_80}
\end{figure}

\begin{table}[H]
\centering
\begin{tabular}{|l|c|c|c|c|c|}
\hline
 &
  \multicolumn{5}{c|}{Gravity Perturbation} \\ \cline{2-6} 
 &
  50\% &
  80\% &
  120\% &
  150\% &
  200\% \\ \hline
Source policy &
  \begin{tabular}[c]{@{}c@{}}1549.74 \\ (1090.73)\end{tabular} &
  \begin{tabular}[c]{@{}c@{}}4444.86 \\ (637.77)\end{tabular} &
  \begin{tabular}[c]{@{}c@{}}4592.18 \\ (201.30)\end{tabular} &
  \begin{tabular}[c]{@{}c@{}}2543.55\\ (916.16)\end{tabular} &
  \begin{tabular}[c]{@{}c@{}}-156.51 \\ (25.93)\end{tabular} \\ \hline
Domain Randomization &
  \begin{tabular}[c]{@{}c@{}}2282.74 \\ (1563.70)\end{tabular} &
  \begin{tabular}[c]{@{}c@{}}4838.87 \\ (1134.98)\end{tabular} &
  \textbf{\begin{tabular}[c]{@{}c@{}}5236.46\\ (1179.85)\end{tabular}} &
  \begin{tabular}[c]{@{}c@{}}2896.03 \\ (554.00)\end{tabular} &
  \begin{tabular}[c]{@{}c@{}}-43.97 \\ (423.47)\end{tabular} \\ \hline
PADA-DM &
  \begin{tabular}[c]{@{}c@{}}2694.78 \\ (1166.88)\end{tabular} &
  \begin{tabular}[c]{@{}c@{}}4739.32\\ (279.06)\end{tabular} &
  \begin{tabular}[c]{@{}c@{}}4889.02\\ （164.38)\end{tabular} &
  \begin{tabular}[c]{@{}c@{}}2998.32\\ (266.75)\end{tabular} &
  \textbf{\begin{tabular}[c]{@{}c@{}}1531.23 \\ (400.96)\end{tabular}} \\ \hline
PADA-DM w/ DR &
  \textbf{\begin{tabular}[c]{@{}c@{}}3230.29 \\ (1280.54)\end{tabular}} &
  \textbf{\begin{tabular}[c]{@{}c@{}}5036.59\\ (657.98)\end{tabular}} &
  \begin{tabular}[c]{@{}c@{}}4934.04 \\ (720.34)\end{tabular} &
  \textbf{\begin{tabular}[c]{@{}c@{}}3200.73\\ (521.50)\end{tabular}} &
  \begin{tabular}[c]{@{}c@{}}1431.53\\ (496.11)\end{tabular} \\ \hline
MAML (20k) &
  \begin{tabular}[c]{@{}c@{}}854.24\\ (692.50)\end{tabular} &
  \begin{tabular}[c]{@{}c@{}}1810.51\\ (663.72)\end{tabular} &
  \begin{tabular}[c]{@{}c@{}}1895.86\\ (650.76)\end{tabular} &
  \begin{tabular}[c]{@{}c@{}}1575.06\\ (653.09)\end{tabular} &
  \begin{tabular}[c]{@{}c@{}}831.07\\ (717.78)\end{tabular} \\
   \hline
MAML (80k) &
  \begin{tabular}[c]{@{}c@{}}876.37\\ (711.98)\end{tabular} &
  \begin{tabular}[c]{@{}c@{}}1778.67\\ (669.86)\end{tabular} &
  \begin{tabular}[c]{@{}c@{}}1894.28\\ (644.55)\end{tabular} &
  \begin{tabular}[c]{@{}c@{}}1568.60\\ (646.79)\end{tabular} &
  \begin{tabular}[c]{@{}c@{}}823.13\\ (721.15)\end{tabular} \\
   \hline
\end{tabular}
\caption{Mean and standard deviation (in the brackets) of the episodic rewards of each method in the target environment with perturbed gravity across 100 trajectories of 5 random seeds (500 trajectories in total).}
\label{app_gravity_table}
\end{table}

\begin{table}[H]
\centering
\begin{tabular}{|l|c|c|c|c|c|}
\hline
& \multicolumn{5}{c|}{Mass Perturbation}            \\ \cline{2-6} 
& 50\%
& 80\%
& 120\%
& 150\%
& 200\%
\\ \hline
Source policy &
  \begin{tabular}[c]{@{}c@{}}921.13 \\ (1192.43)\end{tabular} &
  \begin{tabular}[c]{@{}c@{}}3343.05 \\ (2317.32)\end{tabular} &
  \begin{tabular}[c]{@{}c@{}}4166.10 \\ (494.94)\end{tabular} &
  \begin{tabular}[c]{@{}c@{}}2045.26\\ (665.30)\end{tabular} &
  \begin{tabular}[c]{@{}c@{}}-149.92 \\ (27.28)\end{tabular} \\ \hline
Domain Randomization &
  \begin{tabular}[c]{@{}c@{}}1665.05 \\ (1357.31)\end{tabular} &
  \begin{tabular}[c]{@{}c@{}}3823.45 \\ (1944.70)\end{tabular} &
  {\begin{tabular}[c]{@{}c@{}}3932.86\\ (1791.18)\end{tabular}} &
  \begin{tabular}[c]{@{}c@{}}2635.72 \\ (1105.15)\end{tabular} &
  \begin{tabular}[c]{@{}c@{}}944.50 \\ (1134.32)\end{tabular} \\ \hline
PADA-DM &
  \textbf{\begin{tabular}[c]{@{}c@{}}3271.52 \\ (752.62)\end{tabular}} &
  \begin{tabular}[c]{@{}c@{}}4914.67\\ (379.73)\end{tabular} &
  \begin{tabular}[c]{@{}c@{}}4584.95\\ （375.51)\end{tabular} &
  \textbf{\begin{tabular}[c]{@{}c@{}}3557.25\\ (183.46)\end{tabular}} &
  \begin{tabular}[c]{@{}c@{}}1398.88 \\ (500.09)\end{tabular} \\ \hline
PADA-DM w/ DR &
  \begin{tabular}[c]{@{}c@{}}2673.87 \\ (1009.75)\end{tabular} &
  \textbf{\begin{tabular}[c]{@{}c@{}}5348.98\\ (556.19)\end{tabular}} &
  \textbf{\begin{tabular}[c]{@{}c@{}}4854.30 \\ (591.50)\end{tabular}} &
  \begin{tabular}[c]{@{}c@{}}3276.70\\ (874.54)\end{tabular} &
  \textbf{\begin{tabular}[c]{@{}c@{}}1616.75\\ (490.53)\end{tabular}} \\ \hline
MAML (20k) &
  \begin{tabular}[c]{@{}c@{}}854.24\\ (692.50)\end{tabular} &
  \begin{tabular}[c]{@{}c@{}}1810.51\\ (663.72)\end{tabular} &
  \begin{tabular}[c]{@{}c@{}}1895.86\\ (650.76)\end{tabular} &
  \begin{tabular}[c]{@{}c@{}}1575.06\\ (653.09)\end{tabular} &
  \begin{tabular}[c]{@{}c@{}}831.07\\ (717.78)\end{tabular} \\
   \hline
MAML (80k) &
  \begin{tabular}[c]{@{}c@{}}876.37\\ (711.98)\end{tabular} &
  \begin{tabular}[c]{@{}c@{}}1778.67\\ (669.86)\end{tabular} &
  \begin{tabular}[c]{@{}c@{}}1894.28\\ (644.55)\end{tabular} &
  \begin{tabular}[c]{@{}c@{}}1568.60\\ (646.79)\end{tabular} &
  \begin{tabular}[c]{@{}c@{}}823.13\\ (721.15)\end{tabular} \\
   \hline
\end{tabular}

\caption{Mean and standard deviation (in the brackets) of the episodic rewards of each method in the target environment with perturbed mass across 100 trajectories of 5 random seeds (500 trajectories in total).
}
\label{app_mass_table}
\end{table}

\end{document}